\newtheorem{proposition}{Proposition}
\newtheorem{theorem}{Theorem}
\newtheorem{definition}{Definition}
\newtheorem{lemma}{Lemma}
\theoremstyle{definition}
\newtheorem{assumption}{Assumption}
\newcommand{\sigmah}{\Sigma_\ccalH}
\author{Santiago Paternain$^\dagger$, Juan Andr\'es Bazerque$^*$ and Alejandro Ribeiro$^\ddagger$
  \thanks{Work supported by ARL DCIST CRA W911NF-17-2-0181. $^\dagger$  Dept. of Electrical and System Engineering, Univ. of Pennsylvania. Email: \{spater,aribeiro\} @seas.upenn.edu. $^*$Univ. de la Rep\'ublica. Email:jbazerque@fing.edu.uy
}}
\renewcommand{\comment}[1]{}
\newcolumntype{S}{>{\centering\arraybackslash} m{.10\linewidth} }
\newcolumntype{T}{>{\centering\arraybackslash} m{.30\linewidth} }
\title{ Policy Gradient for Continuing Tasks in Non-stationary Markov Decision Processes }
\begin{document}

\maketitle

%!TEX root = root.tex

%%%%%%%%%%%%%%%%%%%%%%%%%%%%%%%%%%%%%%%%%%%%%%%%%%%%%%%%%%%%%%%%%%%%%%%%%%%
%%%   A   B   S   T   R   A   C   T   %%%%%%%%%%%%%%%%%%%%%%%%%%%%%%%%%%%%%
%%%%%%%%%%%%%%%%%%%%%%%%%%%%%%%%%%%%%%%%%%%%%%%%%%%%%%%%%%%%%%%%%%%%%%%%%%%
%
\begin{abstract}
  Reinforcement learning considers the problem of finding policies that maximize an expected cumulative reward in a Markov decision process with unknown transition probabilities. In this paper we consider the problem of finding optimal policies assuming that they belong to a reproducing kernel Hilbert space (RKHS). To that end we compute unbiased stochastic gradients of the value function which we use as ascent directions to update the policy. {A major drawback of policy gradient-type algorithms is that they are limited to episodic tasks unless stationarity assumptions are imposed. Hence preventing these algorithms to be fully implemented online, which is a desirable property for systems that need to adapt to new tasks and/or environments in deployment.} The main requirement for a policy gradient algorithm to work is that the estimate of the gradient at any point in time is an ascent direction for the initial value function. In this work we establish that indeed this is the case which enables to show the convergence of the online algorithm to the critical points of the initial value function. {A numerical example shows the ability of our online algorithm to learn to solve a navigation and surveillance problem, in which an agent must loop between to  goal locations. This example corroborates our theoretical findings about the ascent directions of subsequent stochastic gradients. It also shows how the agent running our online algorithm succeeds in learning to  navigate, following a continuing cyclic trajectory that does not comply with the standard stationarity assumptions in the literature for non episodic training.}
\end{abstract}

%%%%%%%%%%%%%%%%%%%%%%%%%%%%%%%%%%%%%%%%%%%%%%%%%%%%%%%%%%%%%%%%%%%%%%%%%%%
%%%   S   E   C   T   I   O   N   %%%%%%%%%%%%%%%%%%%%%%%%%%%%%%%%%%%%%%%%%
%%%%%%%%%%%%%%%%%%%%%%%%%%%%%%%%%%%%%%%%%%%%%%%%%%%%%%%%%%%%%%%%%%%%%%%%%%%
%
\section{Introduction}\label{sec_intro}

%Markov decision processes (MDPs) allows to model decision making process under uncertainty, i.e., when the evolution of the state of the system depends both on the action selected by the agent and on some intrinsic randomness of the system \cite{howard1964dynamic}. This framework has been applied in several fields, such as robotics \cite{kober2013reinforcement}, control \cite{shreve1978alternative}, and finance \cite{rasonyi2005utility}. More in detail, MPDs are memory-less discrete time stochastic control process, where the state of the system at the next time is a random variable, whose probability distribution depends on the current state and the action selected. Likewise, each action selected results on a random reward. The sum of these rewards along a trajectory, termed the return, can be used to determine the quality of a given decision making policy. The objective of the agent is then to find a policy that maximizes the cumulative rewards in expectation. This quantity is known is as the Q-function -- or value function if the expectation is conditioned to the initial state only -- of the MDP \cite{sutton1998reinforcement}.

Reinforcement learning (RL) problems--which is the interest in this paper--are a special setting for the analysis of Markov decision processes (MDPs) in which both the transition probabilities are unknown. The agent interacts with the environment and observes samples of a reward random variable associated to a given state and action pair  \cite{sutton1998reinforcement}. These rewards samples are used to update the policy of the agent so as to maximize the Q- function, defined as the expected cumulative reward conditioned to the current state and action. The solutions to RL problems are divided in two main approaches. On one hand, those approaches that aim to learn the Q-function to then choose the action that for the current state maximizes said function. Among these algorithms the standard solution is Q-learning \cite{watkins1992q}, whose earlier formulations were applicable in scenarios where the state and the action are discrete. The aforementioned algorithms suffer from the \emph{curse of dimensionality}, with complexity growing exponentially with the number of actions and states \cite{friedman2001elements}. This is of particular concern in problems where the state and the actions are continuous spaces, and thus, any reasonable discretization leads to a large number of states and possible actions. A common approach to overcome this difficulty is to assume that the Q-function admits a finite parameterization that can be linear \cite{sutton2009convergent}, rely on a nonlinear basis expansion \cite{bhatnagar2009convergent}, or be given by a neural network \cite{mnih2013playing}. Alternatively one can assume that the Q-function \cite{koppel2017breaking, tolstayanonparametric} belongs to a reproducing kernel Hilbert space. However, in these cases, maximizing the Q-function to select the best possible action is computationally challenging. Moreover, when using function approximations Q-learning may diverge \cite{baird1995residual}.

This motivates the development of another class of algorithms that attempts to learn the optimal policy by running stochastic gradient ascent on the Q-function with respect to the policy parameters \cite{williams1992simple,sutton2000policy, deisenroth2013survey} or with respect to the policy itself in the case of non-parametric representations \cite{lever2015modelling,paternain2018stochastic}. These gradients involve the computation of expectations which requires knowledge of the underlying probabilistic model. With the goal of learning from data only, they provide unbiased estimates of the gradients which are used for  stochastic approximation \cite{robbins1951stochastic}. One of the classic examples of estimates of the gradients used in discrete state-action spaces is REINFORCE \cite{williams1992simple}. Similar estimates can also be computed in the case of parametric \cite{baxter2001infinite} and non-parametric function approximations \cite{paternain2018stochastic}. Once these unbiased estimates have been computed convergence to the critical points can be established under a diminishing step-size as in the case in parametric optimization \cite{bertsekas1999nonlinear}. A drawback of said estimators is that they have high variance and therefore they suffer from slow convergence. This issue can be mitigated using Actor-Critic methods \cite{konda2000actor,bhatnagar2009natural,degris2012off} to estimate the policy gradients. To compute these estimates however, one is required to re-initialize the system for every new iteration. Hence, limiting its application to episodic tasks \cite{williams1992simple, paternain2018stochastic}.

%\red{In order to obtain these unbiased estimate however, the system must transit from an initial state which abides to a prescribed probability distribution to a state ahead which presents a different distribution, thus the gradient computed along the trajectory is not necessarily a gradient of the value function conditioned at the initial state. This prevents the aforementioned methods from an online implementation since it requires the re-initialization of new trajectories at the beginning of a new iteration. Hence limiting its application to episodic tasks. This is the case for instance of \cite{williams1992simple, paternain2018stochastic}.}

%Under the assumption that the resulting Markov Chain has short mixing-time, however, it is possible to establish convergence to the unique stationary distribution, which would allow to implement them in continuing tasks. This is the case in \cite{baxter2001infinite} for instance \blue{SP: I'm not sure that this is equivalent to aperidocity}.

A common workaround to this hurdle is to modify the value function so to consider the average rate of reward instead of the cumulative reward \cite[Chapter 13]{sutton1998reinforcement}. This formulation also requires that under every policy the MDP converges to a steady state distribution that is independent of the initial state. The convergence to a stationary distribution is restrictive in many cases, as we discus in Section \ref{sec_problem_formulation}, since it prevents the agents from considering policies that result in cyclic behaviors for instance. Not being able to reproduce these behaviors is a drawback for problems like surveillance where the policy that the agent should follow is one that visits  different points of interest. Moreover, even for problems where the target is a specific state, and thus the convergence to the stationary distribution is a reasonable assumption, the average reward formulation may modify the optimal policies in the sense that it is a formulation that ignores transient behaviors. We discuss this issue in more detail also in Section \ref{sec_problem_formulation}.

Given the  offline policy gradient algorithm in \cite{paternain2018stochastic}, we aim to avoid the reinitialization  requirement while keeping the cumulative reward as value function. In particular, we compute stochastic gradients as in \cite{paternain2018stochastic} which are guaranteed to be unbiased estimates of the gradient of the value function at the state that systems finds itself at the beginning of the iteration. Because this estimation  requires rollouts at  each iteration, the agent is in fact computing estimates of the gradients of  value functions at different states.

%we propose an algorithm that uses the final state -- achieved in finite time -- of the system for a given iteration, as the initial state of the trajectory that will be used as sample for computing the policy gradient in the next iteration. The estimate is computed as in \cite{paternain2018stochastic} and thus we can guarantee to obtain an unbiased estimate of the gradient of the value function conditioned on the state in which the system is at the beginning of the iteration. This estimate will be used as an ascent direction to update the policy. The difficulty that arises from this approach is that the value function depends on the initial state, which changes at every iteration, and thus at each iteration we are working with gradients of different functions.

Building on our preliminary results \cite{paternain2019cdc}, we  establish in Theorem \ref{prop_all_gradients} that the gradients of the value function at any state are also  ascent directions of the value function at the initial state. Leveraging this result we address the convergence of the online policy gradient algorithm to a neighborhood of the critical points in Theorem \ref{theo_convergence}, hence dropping the assumption of the convergence to\textemdash and existence of\textemdash the stationary distribution over states for every intermediate policy. These results are backed by Proposition \ref{prop_common_critical_points}, which establishes that a critical point of the value function conditioned at the initial state is also a critical point for the value functions conditioned at states in the future, suggesting that the landscape of different value functions is still very similar.  Finally, as an accessory computational refinement, we add a compression step to the online algorithm to reduce the number of kernels by trading-off a discretionary convergence error. This refinement uses Orthogonal Kernel Matching Pursuit \cite{koppel2016parsimonious}. Other than concluding remarks the paper ends with numerical experiments {where we consider an agent whose goal is to surveil a region of the space while having to visit often enough a charging station. } The cyclic nature of this problem evidences the ability of our algorithm to operate in a non stationary setup and carry the task by training in a fully online fashion, without the need of episodic restarts. The experiment is also useful to corroborate our theoretical findings about the ascending direction properties of stochastic gradients computed at different points of the trajectory.

%!TEX root = root.tex

\section{Problem Formulation}\label{sec_problem_formulation}
In this work we are interested in the problem of finding a policy that maximizes the expected discounted cumulative reward of an agent that chooses actions sequentially. Formally, let us denote the time by $t \in \left\{\{0\},\mathbb{N} \right\}$ and let $\ccalS \subset \mathbb{R}^n$ be a compact set denoting the state space of the agent, and $\ccalA = \mathbb{R}^p$ be its action space. The transition dynamics are governed by a conditional probability $ P_{s_t\to s_{t+1}}^{a_t}(s) :=p(s_{t+1}=s|(s_t,a_t) \in \ccalS\times \ccalA)$ satisfying the Markov property, i.e.,
%
%\begin{equation}
%  \begin{split}
$p(s_{t+1}=s\big|(s_u,a_u) \in \ccalS\times \ccalA, \forall u\leq t) 
    =p(s_{t+1}=s|(s_t,a_t) \in \ccalS\times \ccalA).$
%    \end{split}
%  \end{equation}
    %
The policy of the agent is a multivariate Gaussian distribution with mean $h: \ccalS \to \ccalA$. The later map is assumed to be a vector-valued function in a vector-valued RKHS $\ccalH$. We formally define this notion next, with comments ensuing.
%
%%%%%%%%%%%%%%%%%%%%%%%%%%%%%%%%%%%%%%%%%%%%%%%%%%%%%%%%%%%%%%%%%%%%%%%%%%%%%%%%%%%%%%%%%%%%%%%%%%%%%%%%%%%%%%%%%%%%%%%%%%%%%%%%% D E F I N I T I O N %%%%%%%%%%%%%%%%%%%%%%%%%%%%%%%%%%%%%%%%%%%%%%%%%%%%%%%%%%%%%%%%%%%%%%%%%%%%%%%%%%%%%%%%%%%%%%%%%%%%%%%%%%%%%%%%%%%%%%%%%
\begin{definition}\label{def_rkhs}
  A vector valued RKHS $\ccalH$ is a Hilbert space of functions $h:\ccalS\to \mathbb{R}^p$
     such that for all $\bbc\in\mathbb{R}^p$ and $s \in \ccalS$, the following reproducing property holds
  \begin{equation}\label{eqn_reproducing}
<h(\cdot),\kappa(s,\cdot)\bbc>_{\ccalH} = h(s)^\top\bbc.
  \end{equation}
 where
   $\kappa(s,s^\prime)$ is a symmetric matrix-valued function that renders a positive definite matrix when evaluated at  any $s,s^\prime \in \ccalS$. \end{definition}
  
 % \begin{definition}\label{def_rkhs}
 % A vector valued RKHS $\ccalH$ is a Hilbert space of functions $h:\ccalS\to \mathbb{R}^p$ such that for all $\bbc\in\mathbb{R}^p$ and $\bbx \in \ccalS$,
 % $\left(\kappa_x\bbc\right)(\bby) = \kappa(\bbx,\bby)\bbc \in \ccalH  \quad\mbox{for all}\quad \bby \in \ccalS$, where $\kappa_x(\bby)$ is a symmetric function that is a positive definite matrix for any $\bbx,\bby \in \ccalS$ and it has the reproducing property 
  %\begin{equation}\label{eqn_reproducing}<h,\kappa_x\bbc>_{\ccalH} = h(\bbx)^\top\bbc.
  %\end{equation}
%  \end{definition}
%
%%%%%%%%%%%%%%%%%%%%%%%%%%%%%%%%%%%%%%%%%%%%%%%%%%%%%%%%%%%%%%%%%%%%%%%%%%%%%%%%%%%%%%%%%%%%%%%%%%%%%%%%%%%%%%%%%%%%%%%%% M A I N     M A T T E R %%%%%%%%%%%%%%%%%%%%%%%%%%%%%%%%%%%%%%%%%%%%%%%%%%%%%%%%%%%%%%%%%%%%%%%%%%%%%%%%%%%%%%%%%%%%%%%%%%%%%%%%%%%%%%%%%%%%%%%%%%%%%
%
If $\kappa(s,s^\prime)$ is a diagonal matrix-valued function with diagonal elements  $\kappa(s,s^\prime)_{ii}$,  and $\mathbf c$ is the $i$-th canonical vector in $\mathbb R^p$, then \eqref{eqn_reproducing} reduces to the standard one-dimensional reproducing property per coordinate %of $h(\mathbf x);$ that is, 
%
%In order to be able to treat each coordinate independently, we chose to work with a kernel matrix that is diagonal. In that case the previous definition is equivalent to consider a vector of functions where each one of them belongs to a Reproducing Kernel Hilbert Space. The expression in \eqref{eqn_reproducing} is the multi-dimensional version of the reproducing property of Reproducing Kernel Hilbert Spaces. To obtain the $i$-th component of the vector valued function $f_i(x)$ one needs to chose $\bbc$ being the $i$-th canonical element of $\mathbb{R}^p$. According to \eqref{eqn_reproducing}, computing this value is equivalent to taking the inner product between the function $f$ and the $i$-th column of $\kappa(\bbx,\bby)$. Since $\kappa(\bbx,\bby)$ is diagonal, all the elements of the $i$-th column, except for the $i$-th one, are zero. Thus the inner product on the left hand side reduces to the inner product in one dimensional Reproducing Kernel Hilbert Space. This is
  %
  %
$h_i(s) = <h_i(\cdot),\kappa(s,\cdot)_{ii}>.$ 
  %  
  %where $\kappa(\bbx,\cdot)_{ii}$ is the one dimensional Kernel. 
  %
%  \red{
%  \begin{equation}\label{eqn_norm_kernel}
%\left\| \kappa(\bbx,\cdot)\right\| = \sqrt{\langle \kappa(\bbx,\cdot),\kappa(\bbx,\cdot)\rangle}_{\ccalH} = \sqrt{\kappa(\bbx,\bbx)} = 1.
%    \end{equation}}
%
 With the above definitions the policy of the agent is the following conditional probability of the action $\pi_h(a|s):\ccalS\times \ccalA \to \mathbb{R}_+$, with
\begin{equation}\label{eqn_gaussian_policy}
\pi_h(a|s) = \frac{1}{\sqrt{\det(2\pi\Sigma)}}e^{-\frac{(a-h(s))^\top\Sigma^{-1}(a-h(s))}{2}}.
\end{equation}
The latter means that given a function $h\in\ccalH$ and the current state $s\in\ccalS$, the agent selects an action $a\in\ccalA$ from a multivariate normal distribution $\ccalN(h(s),\Sigma)$. The choice of a random policy over a deterministic policy $a=h(s)$ makes the problem tractable both theoretically and numerically as it is explained in \cite{paternain2018stochastic}. The actions selected by the agent result in a reward defined by a function $r:\ccalS\times\ccalA \to \mathbb{R}$. %We will require these rewards to be uniformly bounded and smooth as formally stated later in Assumption \ref{assumption_reward_function}.

The objective is then to find a policy $h^\star \in \ccalH$ such that it maximizes the expected discounted reward 
\begin{equation}\label{eqn_problem_statement}
  h^\star := \argmax_{h\in\ccalH} U_{s_0}(h) = \argmax_{h\in\ccalH} \mathbb{E}\left[\sum_{t=0}^\infty \gamma^t r(s_t,a_t)\Big| h,s_0\right] , 
  \end{equation}
where the expectation is taken with respect to all states except $s_0$, i.e., $s_1,\ldots $ and all actions $a_0,a_1,\ldots,$. The parameter $\gamma \in(0,1)$ is a discount factor that gives relative weights to the reward at different times. Values of $\gamma$ close to one imply that current rewards  are as important as future rewards, whereas smaller values of $\gamma$ give origin to myopic policies that prioritize maximizing immediate rewards. It is also noticeable that $U_{s_0}(h)$ is indeed a function of the policy $h$, since policies affect the trajectories $\{s_t,a_t\}_{t=0}^\infty$. %\red{ Conceivably, problem \eqref{eqn_problem_statement} could be solved iteratively by running a gradient ascent iteration on the space of functions. In parametric problems where variables lie in a finite space, gradient ascent converges to a critical point of $U_{s_0}(h)$ -- if $U_{s_0}(h)$ is upper bounded -- under constant and diminishing step size \cite[pp 43-45] {bertsekas1999nonlinear}.
%
%  The importance of this theoretical result notwithstanding, is limited by the computation of the gradient of $U_{s_0}(h)$ with respect to $h$ being intractable. To see why this is the case, we provide the expression of the gradient.}

As discussed in Section \ref{sec_intro} problem \eqref{eqn_problem_statement} can be tackled using methods of the policy gradient type \cite[Chapter 13]{sutton1998reinforcement}. These methods have been extended as well to non-parametric scenarios as we consider here \cite{lever2015modelling,paternain2018stochastic}. A drawback of these methods is that they require restarts which prevents them from a fully online implementation. To better explain this claim let us write down the expression of the gradient of the objective in \eqref{eqn_problem_statement} with respect to $h$.  Before doing so, we are required to define the discounted long-run probability distribution $\rho_{s_0}(s,a)$ 
\begin{align}\label{eqn_discounted_distribution}
      &\hspace{.3cm}\rho_{s_0}(s,a) := (1-\gamma)\sum_{t=0}^\infty \gamma^t p(s_t=s,a_t=a|s_0), 
    \end{align} 
where $p(s_t=s,a_t=a|s_0)$ is the distribution of the MDP under a policy $h$
\begin{equation}\label{eqn_mdp_distribution}
\begin{split}
  p(s_t=s,a_t=a|s_0)= \\
  \pi_h(a_t|s_t) \int \prod_{u=0}^{t-1} p(s_{u+1}|s_u,a_u)\pi_h(a_u|s_u) \, d\bbs_{t-1} d\bba_{t-1},
  \end{split}
\end{equation}
with $d\bbs_{t-1} = (ds_1,\ldots ds_{t-1})$ and $d\bba_{t-1} = (da_0,\ldots da_{t-1})$. We also require to define $Q(s,a;h)$, the expected discounted reward for a policy $h$ that at state $s$ selects action $a$. Formally this is
\begin{equation}\label{eqn_q_function}
Q(s,a;h) := \mathbb{E}\left[\sum_{t=0}^\infty \gamma^t r(s_t,a_t)\Big| h, s_0=s, a_0=a\right].
\end{equation}
With these functions defined, the gradient of the discounted rewards with respect to $h$ yields \cite{sutton2000policy,lever2015modelling} 
\begin{align}\label{eqn_nabla_U}
     &\nabla_h U_{s_0}(h,\cdot)= \\
&\frac{1}{1-\gamma}\mathbb{E}_{(s,a)\sim \rho_{s_0}(s,a)}\left[ Q(s,a;h)\kappa(s,\cdot)\Sigma^{-1}\left(a-h(s)\right) \Big| h\right],
 \nonumber   \end{align}
where the gradient of $U_{s_0}(h)$ with respect to the continuous function $h$ is defined in the sense of Frechet, rendering a function in $\mathcal H$. This is represented by the notation in $\nabla_h U_{s_0}(h,\cdot)$, where the dot substitutes the second variable of the kernel, belonging to $\mathcal S$, which is omitted to simplify notation. Observe that the expectation with respect to the distribution $\rho_{s_0}(s,a)$ is an integral of an infinite sum over a continuous space. Although this could have a tractable solution in some specific cases, this would require the system transition density $p(s_{t+1}|s_t,a_t)$ which is unknown in the context of RL. Thus, computing \eqref{eqn_nabla_U} in closed form becomes impractical. In fact, a large number of samples might be needed to obtain an accurate Monte Carlo approximation even if $p(s_{t+1}|s_t,a_t)$ were known. In \cite{paternain2018stochastic} an offline stochastic gradient ascent algorithm is proposed to overcome these difficulties and it is shown to converge to a critical point of the functional $U_{s_0}$. {Notice that to compute stochastic approximations of the gradient \eqref{eqn_nabla_U} on is required to sample from the distribution $\rho_{s_0}$ which depends on the initial condition. This dependency results in a fundamental limitation for online implementation.} We present in Section \ref{sec_offline_alg} the algorithm and a summary of the main results in \cite{paternain2018stochastic} since it serves as the basis for a fully online algorithm and to understanding the aforementioned difficulties associated to the online problem in detail. {Before doing so we discuss a common workout to the continuing task problem\textemdash or the problem of avoiding restarts. 
}
%
%%%%%%%%%%%%%%%%%%%%%%%%%%%%%%%%%%%%%%%%%%%%%%%%%%%%%%%%%%%%%%%%%%%%%%%%%%%%%%%%%%%%%%%%%%%%%%%%%%%%%%%%%%%%%%%%%%%%%%%%%%%%%%%%%%%%%%%%%%%%%%%%%%%%%%% S U B S E C T I O N  %%%%%%%%%%%%%%%%%%%%%%%%%%%%%%%%%%%%%%%%%%%%%%%%%%%%%%%%%%%%%%%%%%%%%%%%%%%%%%%%%%%%%%%%%%%%%%%%%%%%%%%%%%%%%%%%%%%%%%%%%%%%%%%%%%%%%%%%
%
{
\subsection{Reinforcement learning in continuous tasks}\label{sec_continuing}

  When considering continuing tasks it is customary to modify the objective \eqref{eqn_problem_statement} and instead attempt to maximize the undiscounted objective \cite[Chapter 13]{sutton2018reinforcement}
  \begin{equation}\label{eqn_average_return}
U_{s_0}^\prime(h) = \lim_{T\to \infty} \frac{1}{T}\sum_{t=1}^T\mathbb{E}\left[r(s_t,a_t)\Big|  s_0\right]. 
  \end{equation}
  Consider a steady state  distribution  $\rho^\prime(s)$ that is ergodic and independent of the starting point, this is, a distribution that satisfies
  \begin{equation}\label{eqn_ergodic_distribution}
\rho^\prime(s^\prime) = \int \rho^\prime(s)\pi_h(a|s)p(s^\prime|s,a) \, ds da,
    \end{equation}
  for all $s^\prime\in\ccalS$. Under the assumptions that such distribution exists and that the distribution of the MDP converges to it, the limit in \eqref{eqn_average_return} is finite. Then, using Stolz-Cesaro's Lemma (see e.g. \cite[pp. 85-88]{muresan2009concrete}), then \eqref{eqn_average_return} reduces to
  \begin{equation}
U_{s_0}^\prime(h) = \lim_{t\to \infty} \mathbb{E}\left[r(s_t,a_t)\Big|  s_0\right],
  \end{equation}
  where the expectation is with respect to the stationary distribution~\eqref{eqn_ergodic_distribution}. Thus, we can rewrite the previous expression as
  \begin{equation}\label{eqn_value_function_stationary}
U_{s_0}^\prime(h) =  \mathbb{E}_{s\sim \rho^\prime, a\sim\pi_h}\left[r(s,a)\right].
\end{equation}
  The advantage of this formulation is that the objective function is now independent of the initial state and therefore estimates of the gradient can be computed without requiring the reinitialization of the trajectory. 

  The convergence to a stationary distribution however, prevent us from achieving cyclical behaviors, for the most part. In particular, a sufficient condition for the convergence is that the Markov chain is aperiodic \cite[Theorem 6.6.4]{durrett2010probability}. Which hints to the fact that in some situations cycles are not achievable under these conditions. Let us consider the following scenario as an example. An agent is required to visit three different locations denoted by states $s_1, s_2$ and $s_3$ and there is a charging station $s_0$. In this scenario is not surprising that the optimal policy is such that it cycles in the different locations and the charging station. Consider that the resulting Markov Chain is such that with probability one we transition from $s_i$ to $s_{i+1}$ for $i=0,\ldots, 2$ and from $s_3$ to $s_0$. In this scenario there exists a stationary distribution that places equal mass in every state, i.e = $\rho(s_i) =1/4$ for all $i=0,\ldots 3$. However, the convergence to this distribution is only guaranteed if the initial distribution is the stationary one. This assumption may not be realistic for this scenario since the agent is  most likely to start in the charging station than in the other locations for instance.

  Even if a stationary distribution is not attainable for the cyclic example just described, the return \eqref{eqn_average_return} is still well defined. Thus,  we could attempt to  extend the theory for continuing tasks starting from \eqref{eqn_average_return}  and   avoiding  \eqref{eqn_value_function_stationary}, without relying on a stationary distribution. However, we argue that the discounted formulation in \eqref{eqn_problem_statement} may be the preferred choice  when transient behaviors are deemed important. Consider for instance the following MDP where the states are defined as $\ccalS = \left\{0, 1,\ldots, 10\right\}$ and the actions are $\ccalA=\left\{-1,1\right\}$. The transition dynamics are such that for all $s\in\ccalS\setminus\left\{0,10\right\}$ we have that $s_{t+1}=s_t + a_t$, in the case of of $s_t = 0$ we have that $s_{t+1} = s_t +a_t\mathbbm{1}(a_t>0)$ and in the case of $s_t=10$ we have $s_{t+1} = s_t$ regardless of the action selected. All the states yield zero rewards except for $s_t = 10$ whose reward is 1. Notice that under any random policy, as long as $P(a_t=1) >0$, the state converges to $10$ and thus the average return \eqref{eqn_average_return} takes the value one. This is the same value as that of choosing the action $a_t = 1$ for any state. The discounted formulation \eqref{eqn_problem_statement} allows us to distinguish between these two policies since the larger the average time to reach the state $s=10$ the smaller the value function. 

Having argue the practical importance of considering problems of the form \eqref{eqn_problem_statement} for continuing tasks we proceed to describe a policy gradient based solution.
}

%!TEX root = root.tex
%
\section{Online Policy Gradient}\label{sec_online}
\subsection{Stochastic Gradient Ascent}\label{sec_offline_alg}
In order to  compute a stochastic approximation of $\nabla_h U_{s_0}(h)$ given in \eqref{eqn_nabla_U} we need to sample from the distribution $\rho_{s_0}(s,a)$ defined in \eqref{eqn_discounted_distribution}. The intuition behind $\rho_{s_0}(s,a)$ is that it weights by $(1-\gamma)\gamma^t$ the probability of the system being at a specific state-action pair $(s,a)$ at time $t$.  Notice that the weight $(1-\gamma)/\gamma^t$ is equal to the probability of a geometric random variable of parameter $\gamma$ to take the value $t$. Thus, one can interpret the distribution $\rho_{s_0}(s,a)$ as the probability  of reaching the state-action pair $(s,a)$ after running the system for $T$ steps, with $T$ randomly drawn from a geometric distribution of parameter $\gamma$, and starting at state $s_0$. The geometric sampling transforms the discounted infinite horizon problem into an undiscounted episodic problem with random horizon (see e.g. \cite[pp.39-40]{bertsekas1996NDP}). This supports steps 2-7 in Algorithm \ref{alg_stochastic_grad} which describes how to obtain a sample $(s_T,a_T)\sim\rho_{s_0}(s,a)$. Then to compute an unbiased estimate of $\nabla_h U_{s_0}(h)$ (cf., Proposition \ref{prop_unbiased_grad}) one can substitute the sample $(s_T,a_T)$  in  the stochastic gradient expression
\begin{equation}\label{eqn_stochastic_gradient}
\hat{\nabla}_h U_{s_0}(h,\cdot) = \frac{1}{1-\gamma}\hat{Q}(s_T,a_T;h)\kappa(s_T,\cdot)\Sigma^{-1}(a_T-h(s_T)),
\end{equation}
with $\hat Q(s_T,a_T;h)$ being an unbiased estimate of $Q(s_T,a_T;h)$.  Algorithm \ref{alg_stochastic_grad} summarizes the steps to compute the stochastic approximation in \eqref{eqn_stochastic_gradient}. We claim that it is unbiased in Proposition \ref{prop_unbiased_grad} as long as the rewards are bounded. We formalize this assumption next as long with some other technical conditions required along the paper.
\begin{assumption}\label{assumption_reward_function}
  There exists $B_r>0$ such that $\forall (s,a) \in \ccalS\times\ccalA$, the reward function $r(s,a)$ satisfies $|r(s,a)|\leq B_r$. In addition $r(s,a)$ has bounded first and second derivatives, with bounds  $|\partial r(s,a)/ \partial s|\leq L_{rs}$ and $|\partial r(s,a)/\partial a |\leq L_{ra}$. %and by $L_{r^\prime s}$ and $L_{r^\prime a}$ the bounds for  $|\partial^2 r(s,a)/(\partial a \partial s)|$ and $|\partial^2 r(s,a)/\partial a^2 |$.
\end{assumption}
Notice that these assumptions are on the reward which is user defined, as such they do not impose a hard requirement on the problem.
%
%%%%%%%%%%%%%%%%%%%%%%%%%%%%%%%%%%%%%%%%%%%%%%%%%%%%%%%%%%%%%%%%%%%%%%%%%%%%%%%%%%%%%%%%%%%%%%%%%%%%%%%%%%%%%%%%%%%%%%%%%%% A L G O R I T H M %%%%%%%%%%%%%%%%%%%%%%%%%%%%%%%%%%%%%%%%%%%%%%%%%%%%%%%%%%%%%%%%%%%%%%%%%%%%%%%%%%%%%%%%%%%%%%%%%%%%%%%%%%%%%%%%%%%%%%%%%%%%%%%%%%%%%%%%%%%%%%%%%%%%%%%%%%%%%%%
\begin{algorithm}
 \caption{StochasticGradient}\label{alg_stochastic_grad}
\begin{algorithmic}[1]
 \renewcommand{\algorithmicrequire}{\textbf{Input:}}
 \renewcommand{\algorithmicensure}{\textbf{Output:}}
 \Require $h$, $s_0$
 %\State \textit{Initialize}:  $s_0=s$ 
  \State Draw an integer $T$ form a geometric distribution with parameter $\gamma$, $P(T = t) = (1-\gamma)\gamma^t$
  % \\ \textit{LOOP Process}
  \State Select action $a_{0} \sim \pi_h(a|{s})$ 
  \For {$t = 0,1,\ldots T-1 $}
  \State Advance system $s_{t+1} \sim P_{s_t\to s_{t+1}}^{a_t}$
  \State Select action $a_{t+1} \sim \pi_h(a|{s_{t+1}})$ 
  \EndFor
  \State Get estimate of $Q(s_T,a_T;h)$ \label{step_Q}
  \State  Compute the stochastic gradient $\hat{\nabla}_hU(h,\cdot)$ as in \eqref{eqn_stochastic_gradient}
   %=\frac{1}{2(1-\gamma)}\left(\hat{Q}(s_T,a_T;h)-\hat{Q}(s_T,\bar{a}_T;h)\right) \kappa(s_T,\cdot)\Sigma^{-1}(a_T-h(s_T))  $$
 \Return $\hat{\nabla}_hU(h,\cdot)$
 \end{algorithmic}\label{alg_stochastic_gradient} 
\end{algorithm}
%
%%%%%%%%%%%%%%%%%%%%%%%%%%%%%%%%%%%%%%%%%%%%%%%%%%%%%%%%%%%%%%%%%%%%%%%%%%%%%%%%%%%%%%%%%%%%%%%%%%%%%%%%%%%%%%%%%%%%%%%%%%%%%%%%% P R O P O S I T I O N %%%%%%%%%%%%%%%%%%%%%%%%%%%%%%%%%%%%%%%%%%%%%%%%%%%%%%%%%%%%%%%%%%%%%%%%%%%%%%%%%%%%%%%%%%%%%%%%%%%%%%%%%%%%%%%%%%%%%%%%%%%%%%%%%%%%%%%%%%%%%%%%%%%%%
\begin{proposition}[(Proposition 3 \cite{paternain2018stochastic})] \label{prop_unbiased_grad}
  The output $\hat{\nabla}_h U_{s_0}(h,\cdot)$ of Algorithm \ref{alg_stochastic_gradient} is an unbiased estimate of $\nabla_h U_{s_0}(h,\cdot)$ in \eqref{eqn_nabla_U}.
\end{proposition}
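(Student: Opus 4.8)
The goal is to show that Algorithm \ref{alg_stochastic_gradient} produces an unbiased estimate of $\nabla_h U_{s_0}(h,\cdot)$ as given in \eqref{eqn_nabla_U}. The plan is to take the expectation of the random output $\hat{\nabla}_h U_{s_0}(h,\cdot)$ in \eqref{eqn_stochastic_gradient} over all the randomness used by the algorithm—namely the geometric horizon $T$, the trajectory $(s_0,a_0,\ldots,s_T,a_T)$ generated by rolling out the policy, and the internal randomness of the subroutine that returns $\hat{Q}(s_T,a_T;h)$—and show it equals the right-hand side of \eqref{eqn_nabla_U}.

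First I would condition on $T=t$ and on the pair $(s_t,a_t)=(s,a)$, and use the tower property. The key structural fact is that the estimate $\hat{Q}(s_T,a_T;h)$ produced in step \ref{step_Q} is computed from a fresh rollout that is independent of how $(s_t,a_t)$ was reached, so $\mathbb{E}[\hat{Q}(s_T,a_T;h)\mid T=t,(s_t,a_t)=(s,a)] = Q(s,a;h)$; this is where Assumption \ref{assumption_reward_function} (bounded rewards) enters, since it guarantees $Q$ is finite and that the geometric-horizon estimator of $Q$ has finite expectation equal to $Q$ (again via the geometric-sampling identity that converts the discounted sum into an undiscounted random-horizon sum, as in \cite[pp.~39--40]{bertsekas1996NDP}). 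With $\hat{Q}$ replaced by $Q$, the remaining factor $\kappa(s_t,\cdot)\Sigma^{-1}(a_t-h(s_t))$ depends only on $(s_t,a_t)$, so the conditional expectation given $T=t$ becomes $\frac{1}{1-\gamma}\,\mathbb{E}_{(s,a)\sim p(s_t=s,a_t=a\mid s_0)}\!\left[Q(s,a;h)\,\kappa(s,\cdot)\Sigma^{-1}(a-h(s))\right]$, where $p(s_t=s,a_t=a\mid s_0)$ is exactly the marginal in \eqref{eqn_mdp_distribution} because steps 2--6 of the algorithm sample the trajectory from precisely that law.

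Next I would take the outer expectation over $T$, which has distribution $P(T=t)=(1-\gamma)\gamma^t$. Averaging the conditional expressions against these weights gives $\frac{1}{1-\gamma}\sum_{t=0}^\infty (1-\gamma)\gamma^t\,\mathbb{E}_{(s,a)\sim p(s_t=\cdot\mid s_0)}[\,\cdot\,] = \frac{1}{1-\gamma}\,\mathbb{E}_{(s,a)\sim\rho_{s_0}}[\,Q(s,a;h)\,\kappa(s,\cdot)\Sigma^{-1}(a-h(s))\,]$, where the last step uses the definition \eqref{eqn_discounted_distribution} of $\rho_{s_0}(s,a)$ as the $(1-\gamma)\gamma^t$-weighted mixture of the time-$t$ marginals. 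This matches \eqref{eqn_nabla_U} exactly, completing the argument. To make the interchange of the infinite sum and the expectation rigorous one invokes Fubini/Tonelli, which is justified by the uniform bound on $Q$ coming from Assumption \ref{assumption_reward_function} together with integrability of $\kappa(s,\cdot)\Sigma^{-1}(a-h(s))$ under the Gaussian policy \eqref{eqn_gaussian_policy}.

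The main obstacle, and the only part requiring genuine care, is justifying $\mathbb{E}[\hat{Q}(s_T,a_T;h)\mid T,(s_T,a_T)] = Q(s_T,a_T;h)$: one must be precise that the $Q$-estimation subroutine in step \ref{step_Q} uses randomness independent of the trajectory already generated (so that conditioning on $(s_T,a_T)$ does not bias it) and that its own geometric-horizon rollout reproduces the discounted sum in expectation. Since this is essentially the content of Proposition 3 of \cite{paternain2018stochastic}, which is cited here, I would either reproduce that short argument or simply appeal to it; the rest of the proof is the bookkeeping of the two nested expectations over the trajectory law and the geometric horizon described above.
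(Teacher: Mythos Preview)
Your argument is correct and is the standard way to establish this unbiasedness result: condition on the geometric horizon $T$ and the terminal pair $(s_T,a_T)$, use the independence of the fresh rollout in step~\ref{step_Q} to replace $\hat Q$ by $Q$, then average over the trajectory law \eqref{eqn_mdp_distribution} and finally over the geometric weights to recover $\rho_{s_0}$ in \eqref{eqn_discounted_distribution}. Note, however, that the paper does not actually supply a proof here\textemdash the proposition is stated as a direct quotation of \cite[Proposition~3]{paternain2018stochastic} with no accompanying argument\textemdash so there is no ``paper's proof'' to compare against; your sketch is precisely the argument one would expect that cited result to contain, and the only point worth tightening is the one you already flag, namely making explicit that the randomness used to form $\hat Q(s_T,a_T;h)$ is independent of the randomness used to reach $(s_T,a_T)$.
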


{An unbiased estimate of $Q(s_T,a_T)$ can be computed considering the cumulative reward from $t=T$ until a randomly distributed horizon $T_Q\sim geom(\gamma)$ (cf., Proposition 2 \cite{paternain2018stochastic}). The variance of this estimate may be high resulting on a slow convergence of the policy gradient algorithm (Algorithm \ref{alg_stochastic_grad}). For these reasons, the literature on RL includes several practical improvements. Variance can be reduced by including batch versions of the gradient method, in which several stochastic gradients are averaged before performing the update in \eqref{eqn_stochastic_gradient}. One particular case of a batch gradient iteration in \cite{paternain2018stochastic}, averages two gradients sharing the same state $s_i$ with stochastic actions. Other approaches include the inclusion of baselines \cite{williams1992simple} and actor critic methods \cite{konda2000actor,bhatnagar2009natural,degris2012off}. Irrespective of the form selected to estimate the $Q$ function with the estimate \eqref{eqn_stochastic_gradient} one could update the policy iteratively running stochastic gradient ascent}
\begin{equation}\label{eqn_stochastic_update}
  h_{k+1} = h_k + \eta_k \hat{\nabla}_h U_{s_0}(h_k,\cdot),
\end{equation}
where $\eta_k>0$ is the step size of the algorithm. Under proper conditions stochastic gradient ascent methods can be shown to converge with probability one to the local maxima  \cite{pemantle1990nonconvergence}. This approach has been widely used to solve parametric optimization problems where the decision variables are vectors in $R^n$ and in \cite{paternain2018stochastic} these results are extended to non-parametric problems in RKHSs. 
Observe however, that in order to provide an estimate of $\nabla U_{s_0}(h_k,\cdot)$, Algorithm \ref{alg_stochastic_grad} requires $s_0$ as the initial state. Hence, it is not possible to get estimates of the gradient without resetting the system to the initial state $s_0$, preventing a fully online implementation. {As discussed in Section \ref{sec_continuing},   this is a common challenge in continuing task RL problems and in general the alternative is to modify the objective the function and to assume the existence of a steady-state distribution to which the MDP converges (see e.g., \cite[Chapter 13]{sutton1998reinforcement} or \cite{degris2012off}), to make the problem independent of the initial state. In this work we choose to keep the objective \eqref{eqn_problem_statement} since the ergodicity assumption is not necessarily guaranteed in practice and the alternative formulation makes transient behaviors irrelevant, as it was also discussed in Section \ref{sec_continuing}.  } Notice that, without loss of generality,  Algorithm  \ref{alg_stochastic_grad} can be  initialized at state $s_k$ and its output becomes an unbiased estimate of $\nabla U_{s_k}(h_k,\cdot)$. The main contribution of this work is to show that the gradient of $U_{s_k}(h)$ is also an ascent direction for $U_{s_0}(h)$ (cf., Theorem \ref{prop_all_gradients}) and thus, these estimates can be used to maximize $U_{s_0}(h)$ hence allowing a fully online implementation. We describe the algorithm in the next section.
%

%%%%%%%%%%%%%%%%%%%%%%%%%%%%%%%%%%%%%%%%%%%%%%%%%%%%%%%%%%%%%%%%%%%%%%%%%%%%%%%%%%%%%%%%%%%%%%%%%%%%%%%%%%%%%%%%%%%%%%%%%%%%%%%%%%%%%%% S E C T I O N %%%%%%%%%%%%%%%%%%%%%%%%%%%%%%%%%%%%%%%%%%%%%%%%%%%%%%%%%%%%%%%%%%%%%%%%%%%%%%%%%%%%%%%%%%%%%%%%%%%%%%%%%%%%%%%%%%%%%%%%%%%%%%%%%%%%%%%%%%%%%%%%%%%%%%%
\subsection{Online Implementation}
As suggested in the previous section it is possible to compute unbiased estimates of $\nabla_h U_{s_k}(h_k)$ by running Algorithm \ref{alg_stochastic_gradient} with inputs $h_k$ and $s_k$. The state $s_k$ is defined for all $k\geq 1$ as the state resulting from running the Algorithm \ref{alg_stochastic_gradient} with inputs $h_{k-1}$ and $s_{k-1}$. This is, at each step of the online algorithm \textemdash which we summarize under Algorithm \ref{alg_online_policy_gradient}\textemdash the system starts from state $s_k$ and transits to a state $s_{T_k}$ following steps 3--6 of Algorithm \ref{alg_stochastic_gradient}. Then, it advances from $s_{T_k}$ to $s_{k+1}$ to perform the estimation of the $Q$-function, one that admits an online implementation, for instance by adding  the rewards of the next $T_Q$ steps with $T_Q$ being a geometric random variable. The state $s_{k+1}$ is the initial state for the next iteration of Algorithm \ref{alg_online_policy_gradient}. %Notice that such an online strategy is possible thanks to Algorithm \ref{alg_q_estimate} and \ref{alg_stochastic_grad}, which produce unbiased estimates of $Q(s,a;h)$ and $\nabla_h U_{s_k}(h)$ in a finite number of steps.
%
%%%%%%%%%%%%%%%%%%%%%%%%%%%%%%%%%%%%%%%%%%%%%%%%%%%%%%%%%%%%%%%%%%%%%%%%%%%%%%%%%%%%%%%%%%%%%%%%%%%%%%%%%%%%%%%%%%%%%%%%%%%%%%%%%%%%% A L G O R I T H M %%%%%%%%%%%%%%%%%%%%%%%%%%%%%%%%%%%%%%%%%%%%%%%%%%%%%%%%%%%%%%%%%%%%%%%%%%%%%%%%%%%%%%%%%%%%%%%%%%%%%%%%%%%%%%%%%%%%%%%%%%%%%%%%%%%%%%%%%%%%%%%%%%%%%
\begin{algorithm}
 \caption{Online Stochastic Policy Gradient Ascent}
\begin{algorithmic}[1]
 \renewcommand{\algorithmicrequire}{\textbf{Input:}}
 \renewcommand{\algorithmicensure}{\textbf{Output:}}
 \Require step size $\eta_0$
 \State \textit{Initialize}: $h_0=0$, and draw initial state $s_0$
 \For{$k=0 \ldots$}
 \State Compute the stochastic gradient and next state:\\ $\left(\hat{\nabla}_h U(h_k,\cdot), s_{k+1} \right) = \textrm{StochasticGradient}(h_k, s_k)$
 \State Stochastic gradient ascent step
 $$
\tilde{h}_{k+1} = h_k +\eta_k \hat{\nabla}_h U(h_k,\cdot)
$$
 \State Reduce model order $h_{k+1} =$ KOMP($\tilde{h}_{k+1},\epsilon_K$) 
 \EndFor
 \end{algorithmic}\label{alg_online_policy_gradient}
 \end{algorithm}
%%
%%%%%%%%%%%%%%%%%%%%%%%%%%%%%%%%%%%%%%%%%%%%%%%%%%%%%%%%%%%%%%%%%%%%%%%%%%%%%%%%%%%%%%%%%%%%%%%%%%%%%%%%%%%%%%%%%%%%%%%%%%%%%%% M A I N    M A T T E R %%%%%%%%%%%%%%%%%%%%%%%%%%%%%%%%%%%%%%%%%%%%%%%%%%%%%%%%%%%%%%%%%%%%%%%%%%%%%%%%%%%%%%%%%%%%%%%%%%%%%%%%%%%%%%%%%%%%%%%%%%%%%%%%%%%%%%%%%%%%%%%%%%%%%%
Notice that the update \eqref{eqn_stochastic_update} \textemdash step 5 in Algorithm \ref{alg_online_policy_gradient} \textemdash requires the introduction of a new element $\kappa(s_{T_k},\cdot)$ in the kernel dictionary at each iteration, thus resulting in memory explosion. {To overcome this limitation we modify the stochastic gradient ascent by introducing a projection over a RKHS of lower dimension as long as the induced error remains below a given compression budget. This algorithm, which runs once after each gradient iteration,  prunes the kernel expansion that describes the policy $h$ to remove the kernels that are redundant up to an admissible error level $\epsilon_k>0$. The subroutine is known as Kernel Orthogonal Match and Pursuit (KOMP) \cite{vincent2002kernel} \textemdash step 6 in Algorithm \ref{alg_online_policy_gradient}.% \textemdash and we summarize it in the appendix for completeness.

  The fundamental reason to do this pruning projection  over a smaller subspace is that it allow us to control the model order of the policy $h_k$, as it is shown in Theorem \ref{theo_convergence}.  However, the induced error translates into a bias on the estimate of $\nabla_h U_{s_k}(h,\cdot)$. We formalize this claim in the next proposition.}
%
%%%%%%%%%%%%%%%%%%%%%%%%%%%%%%%%%%%%%%%%%%%%%%%%%%%%%%%%%%%%%%%%%%%%%%%%%%%%%%%%%%%%%%%%%%%%%%%%%%%%%%%%%%%%%%%%%%%%%%%%%%%%%%%%%%%%%%%%%% P R O P O S I T I O N %%%%%%%%%%%%%%%%%%%%%%%%%%%%%%%%%%%%%%%%%%%%%%%%%%%%%%%%%%%%%%%%%%%%%%%%%%%%%%%%%%%%%%%%%%%%%%%%%%%%%%%%%%%%%%%%%%%%%%%%%%%%%%%%%%%%%%%%%%%%
\begin{proposition}\label{prop_error_bound}
  The update of Algorithm \ref{alg_online_policy_gradient} is equivalent to running biased stochastic gradient ascent
  \begin{equation}\label{eqn_error_bound}
h_{k+1} = h_k +\eta \hat{\nabla}_h U_{s_k}(h,\cdot)+b_k, 
  \end{equation}
  with bias bounded by the compression budget $\epsilon_K$ for all $k$, i.e., $\left\|b_k\right\|_\ccalH<\epsilon_K$.
\end{proposition}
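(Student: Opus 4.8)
The plan is to make the bias term $b_k$ explicit as the residual of the compression step, after which the recursion \eqref{eqn_error_bound} is immediate. Write $\tilde h_{k+1} = h_k + \eta_k \hat{\nabla}_h U_{s_k}(h_k,\cdot)$ for the pre-compression iterate produced by step~4 of Algorithm~\ref{alg_online_policy_gradient}, and define
\begin{equation}
b_k := h_{k+1} - \tilde h_{k+1} = \mathrm{KOMP}(\tilde h_{k+1},\epsilon_K) - \tilde h_{k+1}.
\end{equation}
Substituting back gives $h_{k+1} = h_k + \eta_k \hat{\nabla}_h U_{s_k}(h_k,\cdot) + b_k$, which is exactly \eqref{eqn_error_bound}. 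Thus the entire content of the statement collapses to the norm bound on $b_k$.

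For the bound I would invoke the defining guarantee of the KOMP subroutine \cite{koppel2016parsimonious,vincent2002kernel}: given an input function in $\ccalH$ and a budget $\epsilon_K$, KOMP returns a function supported on a pruned kernel dictionary obtained by iteratively removing the kernel element whose deletion (followed by re-optimization of the remaining coefficients, i.e.\ orthogonal projection onto the smaller subspace) incurs the least increase in $\ccalH$-error, halting as soon as any further removal would push the accumulated error above $\epsilon_K$. Applying this with input $\tilde h_{k+1}$ yields $\|b_k\|_\ccalH = \|\mathrm{KOMP}(\tilde h_{k+1},\epsilon_K) - \tilde h_{k+1}\|_\ccalH < \epsilon_K$ (with the strict inequality following the stopping convention used in the routine), and since the same budget is used at every iteration the bound holds uniformly in $k$.

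I do not expect a real obstacle here: the argument is essentially a transcription of the KOMP error guarantee into the notation of Algorithm~\ref{alg_online_policy_gradient}. The one point that genuinely needs checking is that the error controlled by the compression routine is measured in the RKHS norm $\|\cdot\|_\ccalH$ rather than in some surrogate (a coefficient norm or a pointwise error), which is indeed the case for the orthogonal-matching-pursuit variant adopted here. It is also worth emphasizing what this proposition does \emph{not} assert: only the per-iteration bias is controlled by $\epsilon_K$; the cumulative effect of the $\{b_k\}$ on the trajectory $\{h_k\}$ — and its interplay with the step sizes $\eta_k$ and the ascent-direction property of Theorem~\ref{prop_all_gradients} — is deferred to the convergence analysis in Theorem~\ref{theo_convergence}.
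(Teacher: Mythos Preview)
Your proposal is correct and is exactly the argument the paper has in mind: the paper's own proof simply reads ``The proof is identical to that in \cite[Proposition 5]{paternain2018stochastic},'' and what you have written is precisely that argument spelled out --- define $b_k$ as the KOMP residual $h_{k+1}-\tilde h_{k+1}$, so the recursion is tautological, and the norm bound is the stopping criterion of KOMP. Your remark that the error is indeed measured in $\|\cdot\|_\ccalH$ is the only nontrivial check, and it is correct for the destructive KOMP variant used here.
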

\begin{proof}
The proof is identical to that in  \cite[Proposition 5]{paternain2018stochastic}.
  \end{proof}
%%%%%%%%%%%%%%%%%%%%%%%%%%%%%%%%%%%%%%%%%%%%%%%%%%%%%%%%%%%%%%%%%%%%%%%%%%%%%%%%%%%%%%%%%%%%%%%%%%%%%%%%%%%%%%%%%%%%%%%%%%%%%%%%%%%%%%%%%%%%%%%% M A I N     M A T T E R %%%%%%%%%%%%%%%%%%%%%%%%%%%%%%%%%%%%%%%%%%%%%%%%%%%%%%%%%%%%%%%%%%%%%%%%%%%%%%%%%%%%%%%%%%%%%%%%%%%%%%%%%%%%%%%%%%%%%%%%%%%%%%%%%%%%
As stated by the previous proposition the effect of introducing the KOMP algorithm is that of updating the policy by running gradient ascent, where now the estimate  is biased. The later will prevent the algorithm to converge to a critical point of the value function. However, we will be able to establish convergence to a neighborhood of the critical point as long as the compression is such that the error introduced is not too large. A difference between the online algorithm and the offline one presented in \cite{paternain2018stochastic} is that even for a compression error of $\epsilon_K=0$ we cannot achieve exact convergence to the critical points, because the directions that are being used to ascend in the function $U_{s_0}(h)$ are in fact estimates of the gradients of $U_{s_k}(h)$. In the next section we discuss this in more detail and we establish %that the critical points of the all the value functions coincide (Proposition \ref{prop_common_critical_points}) and
that the inner product of gradients of $U_{s_k}(h)$ and $U_{s_0}(h)$ is positive when $h$ belongs to a properly selected Gaussian RKHS (Theorem \ref{prop_all_gradients}).

%!TEX root = root.tex

\section{All gradients are ascent directions}\label{sec_important_properties}
As we stated in the previous section, the main difference when comparing the online \textemdash continuing task \textemdash  with the offline setting \cite{paternain2018stochastic} \textemdash episodic task \textemdash is in the gradient of the value function that we estimate. In the online setting, we have access to estimates of the gradient of the value function conditioned on the state $s_k$, that is  $\nabla U_{s_k}(h_k,\cdot)$, where $s_k$ changes from one iteration to another. On the other hand, in  the offline setting we can restart the system to its original state $s_0$ or redraw it from a given distribution $P(s_0)$, so that we can compute estimates of $\nabla U_{s_0}(h_k,\cdot)$ at each iteration. Thus, in the offline case we perform ascent steps over the same function $U_{s_0}$, whereas in the online setting  would perform gradient steps over functions $U_{s_k}$ which are different for each $k$. This main difference is as well a fundamental challenge since in principle we are not guaranteed that the gradients that can be computed are ascent directions of the function of interest $U_{s_0}(h)$. Moreover, a second question is whether  finding the maximum of the value function conditioned at $s_0$ is a problem of interest or not after we reach a new state $s_k$. We answer the second question in Proposition \ref{prop_common_critical_points} by showing that if $h$ is a critical point of $U_{s_k}(h)$ it will also be a critical point of $U_{s_l}(h)$ for all $l\geq k$. The latter can be interpreted in the following way, having a policy that is optimal at a given time, makes it optimal for the future. An in that sense, maximizing the initial objective function is a valid problem, since finding a maximum for that function means that we had found one for all $U_{s_k}(h)$. To formalize this result, we analyze the critical points of $U_{s_k}(h)$. To do so write $\nabla_h U_{s_k}(h,\cdot)$ (cf., \eqref{eqn_nabla_U}) as the following integral
\begin{equation}\label{eqn_gradient_aux}
  \begin{split}
    &\nabla_h U_{s_k} (h,\cdot) = \\ \frac{1}{1-\gamma}&\int Q(s,a;h)k(s,\cdot)\Sigma^{-1}(a-h(s))\rho_{s_k}(s,a) \,ds da,
    \end{split}
  \end{equation}
where $\rho_{s_k}(s,a)$ is the distribution defined in \eqref{eqn_discounted_distribution}. We work next towards writing $\rho_{s_k}$ as a product of a distribution of states and a distribution of actions. To that end, write the MPD transition distribution $p(s_t=s,a_t=a|s_k)$ for any $t\geq k$ as
\begin{equation}
  \begin{split}
    p(s_t=s,a_t=a|s_k) &= p(s_t=s|s_k)\pi_h(a_t=a|s_t,s_k) \\
    &= p(s_t=s|s_k)\pi_h(a_t=a|s_t),
    \end{split}
  \end{equation}
where the last equality follows from the fact that the action depends only on the current state conditional on the policy (cf.,\eqref{eqn_gaussian_policy}). By substituting the previous expression in \eqref{eqn_discounted_distribution}, $\rho_{s_k}(s,a)$ reduces to 
\begin{equation}
\rho_{s_k}(s,a) = (1-\gamma)\sum_{t=k}^\infty \gamma^t \pi_h(a_t=a|s_t=s) p(s_t=s|s_k). 
  \end{equation}
Notice that the density $\pi_h(a_t=a|s_t=s)$ is independent of $t$ and thus, the previous expression yields 
\begin{equation}  
    \rho_{s_k}(s,a) = \pi_h(a|s)(1-\gamma)\sum_{t=k}^\infty \gamma^t  p(s_t=s|s_k).
  \end{equation}
Hence, defining $\rho_{s_k}(s) :=(1-\gamma)\sum_{t=k}^\infty \gamma^t  p(s_t=s|s_k)$, it follows that
$\rho_{s_k}(s,a) = \rho_{s_k}(s) \pi_h(a|s)$. Having $\rho_{s_k}(s,a)$ written as a product of a function depending on the states only and a function depending on the action only, allows us to reduce the expression in \eqref{eqn_gradient_aux} to 
\begin{equation}\label{eqn_new_gradient_expression}
\nabla_h U_{s_k}(h,\cdot) =\frac{1}{1-\gamma} \int D(s)\rho_{s_k}(s)\kappa(s,\cdot)\,ds,
\end{equation}
where the function $D(s)$ is the result of the integration of all the terms that depend on the action
\begin{equation}\label{eqn_derivative_q}
D(s) = \int Q(s,a;h) \Sigma^{-1}\left(a-h(s)\right)\pi_h(a|s)\,da.
\end{equation}
Writing the gradient as in \eqref{eqn_new_gradient_expression}, allows us to split the integrands in the product of a term $\rho_{s_k}(s)$ that depends on the state at time $k$ and a term $D(s)\kappa(s,\cdot)$ that do not depend on $s_k$. Hence, if a policy $h$ is such that $D(s)$ is zero for all $s$, then $h$  is a critical point for all value functions. This idea suggests that the quantity $D(s)$ is of fundamental importance in the problem. Indeed, $D(s)$ is an approximation of the derivative of the $Q$-function with respect to $a$. To see why this is the case, observe that because  $\pi_h(a|s)$ is Gaussian, then $\Sigma^{-1}(a-h(s))\pi_h(a|s)$ is the derivative of $\pi_h(a|s)$ with respect to $a$. Hence, $D(s)$ can be written as
\begin{equation}
D(s) = \int Q(s,a;h) \frac{\partial \pi_h(a|s)}{\partial a} da.
  \end{equation}
Notice that as the covariance matrix of $\pi_h(a|s)$  approaches the null matrix, the distribution $\pi_h(a|s)$ approaches a Dirac delta centered at $h(s)$. That being the case, $D(s)$ yields
\begin{equation}
D(s) \simeq \int Q(s,a;h) \delta^{\prime}(a-h(s)) \, da = \frac{\partial Q(s,a;h)}{\partial a}\Big|_{a=h(s)}.
  \end{equation}
In this case, the fact that $D(s)$ is identically zero means that we have found a policy $h$ that makes every action a stationary point of the $Q$-function. The previous observation relates to Bellman's optimality condition, that establishes that a policy is optimal if it is such that it selects the actions that maximize the $Q$-function. When $\Sigma$ is different than the null matrix, $D(s)$ is an approximation of the derivative. In \cite{nesterov2017random} the aforementioned Gaussian smoothing is used as an approximation of the stochastic gradient in the context of zero-order optimization, and is formally established that it approximates the derivative with an error that depends linearly on the norm of the covariance matrix. The insights provided in the previous paragraphs regarding the importance of the function $D(s)$ are not enough to fully characterize the critical points of $U_{s_k}(h)$ since according to \eqref{eqn_new_gradient_expression} its gradient depends as well on the long run discounted distribution $\rho_{s_k}(s)$. The fact that this distribution might take the value zero at different states for different $s_k$ does not allow us to say that a policy can be a critical point of every value function. However, we will be able to prove that if a policy is a critical point for $U_{s_k}(h)$ it is also a critical point for $U_{s_l}(h)$ for every $l\geq k$. To formalize the previous statement we require the following auxiliary result.

%%%%%%%%%%%%%%%%%%%%%%%%%%%%%%%%%%%%%%%%%%%%%%%%%%%%%%%%%%%%%%%%%%%%%%%%%%%%%%%%%%%%%%%%%%%%%%%%%%%%%%%%%%%%%%%%%%%%%%%%%%%%%%%%%%%%%    L E M M A       %%%%%%%%%%%%%%%%%%%%%%%%%%%%%%%%%%%%%%%%%%%%%%%%%%%%%%%%%%%%%%%%%%%%%%%%%%%%%%%%%%%%%%%%%%%%%%%%%%%%%%%%%%%%%%%%%%%%%%%%%%
\begin{lemma}\label{lemma_distribution}
  Let $S_0$ be the following set
\begin{equation}
  \ccalS_0 = \left\{ s\in \ccalS : \exists \,t \geq 0,  p(s_t=s|s_0) >0\right\}.
\end{equation}

  For all $s^\prime,s\in \ccalS_0$ and $s^{\prime\prime}\in \ccalS \setminus \ccalS_0$ we have that
  \begin{equation}
\rho_{s_0}(s)>0 \quad \mbox{and} \quad   \rho_{s^\prime}(s^{\prime\prime}) = 0.
    \end{equation}
  \end{lemma}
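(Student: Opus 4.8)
The first assertion is essentially immediate and I would dispose of it first. Since $s\in\ccalS_0$, there is some $t\geq 0$ with $p(s_t=s|s_0)>0$; because every term of the series $\rho_{s_0}(s)=(1-\gamma)\sum_{t'=0}^{\infty}\gamma^{t'}p(s_{t'}=s|s_0)$ is nonnegative and $\gamma\in(0,1)$, I would simply discard all terms but the $t$-th one to get $\rho_{s_0}(s)\geq(1-\gamma)\gamma^{t}p(s_t=s|s_0)>0$.

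For the second assertion I would use the same nonnegativity remark to reduce it to showing that $p(s_t=s''|s')=0$ for every $t\geq 0$ (here and throughout the policy $h$ is fixed). The $t=0$ term is $p(s_0=s''|s')$, which vanishes because $s''\neq s'$ (one lies in $\ccalS_0$, the other does not). For $t=\tau\geq 1$ I would argue by contradiction: assume $p(s_\tau=s''|s')>0$. Since $s'\in\ccalS_0$, pick $t_0$ with $p(s_{t_0}=s'|s_0)>0$, and chain the two events through the state occupied at time $t_0$. The Markov property and the law of total probability give the Chapman--Kolmogorov identity
\[
p(s_{t_0+\tau}=s''|s_0)=\int_{\ccalS}p(s_{t_0+\tau}=s''|s_{t_0}=\sigma)\,p(s_{t_0}=\sigma|s_0)\,d\sigma,
\]
and, because neither the transition kernel $P^{a}_{s\to s'}$ nor the (fixed) Gaussian policy $\pi_h$ depends on the time index, the chain is time-homogeneous, so $p(s_{t_0+\tau}=s''|s_{t_0}=\sigma)=p(s_\tau=s''|s_0=\sigma)$. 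The integrand is nonnegative on $\ccalS$ and strictly positive at $\sigma=s'$; if I can conclude the integral is strictly positive, then $p(s_{t_0+\tau}=s''|s_0)>0$, contradicting $s''\notin\ccalS_0$, and the proof is done.

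The one genuinely delicate step — and the place I expect the real work to be — is upgrading ``the integrand is positive at the single point $\sigma=s'$'' to ``the integral is positive,'' since a single point carries no mass. I would resolve this by invoking continuity of the finite-step transition densities in their arguments: this follows from continuity of $P^{a}_{s\to s'}$ in $(s,s',a)$ together with continuity of $\pi_h$ (a Gaussian density with continuous mean $h\in\ccalH$), propagated through the finitely many integrations that define $p(s_\tau=\cdot|\cdot)$. Continuity makes both $\sigma\mapsto p(s_\tau=s''|s_0=\sigma)$ and $\sigma\mapsto p(s_{t_0}=\sigma|s_0)$ strictly positive on a neighborhood $N$ of $s'$, so that $\int_{N}p(s_\tau=s''|s_0=\sigma)\,p(s_{t_0}=\sigma|s_0)\,d\sigma>0$, which is all that is needed. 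Everything else is bookkeeping with nonnegative geometric-weighted series and the Markov property.
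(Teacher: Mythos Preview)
Your proof is correct and follows the same overall strategy as the paper: both argue that if $s''$ were reachable from $s'$ in some number of steps, then chaining through $s'$ (which is reachable from $s_0$) would make $s''$ reachable from $s_0$, contradicting $s''\notin\ccalS_0$. The first assertion is handled identically.

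For the second assertion the paper takes a slightly shorter technical route that sidesteps the Chapman--Kolmogorov integral you wrote down. Rather than integrating over all intermediate states $\sigma$ and then needing positivity of the integrand on a neighborhood of $s'$, the paper conditions directly on the event $s_T=s'$ via the joint-over-marginal formula
\[
p(s_u=s''\mid s_T=s',s_0)=\frac{p(s_u=s'',\,s_T=s'\mid s_0)}{p(s_T=s'\mid s_0)},
\]
and argues the numerator vanishes because the marginal $p(s_u=s''\mid s_0)=0$. This avoids your explicit appeal to continuity of the finite-step densities. That said, the paper's step ``zero marginal in $s_u$ implies zero joint at the specific point $s_T=s'$'' carries the same measure-theoretic subtlety you flagged (a nonnegative integrand with zero integral is only zero almost everywhere), so your continuity argument is in fact the more honest treatment; it just imports a regularity assumption that the lemma statement itself does not list, though it is consistent with the Lipschitz hypotheses adopted later in the paper.
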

\begin{proof}
  See appendix \ref{appendix_lemma_distribution}.
  \end{proof}
%%%%%%%%%%%%%%%%%%%%%%%%%%%%%%%%%%%%%%%%%%%%%%%%%%%%%%%%%%%%%%%%%%%%%%%%%%%%%%%%%%%%%%%%%%%%%%%%%%%%%%%%%%%%%%%%%%%%%%%%%%%%%%%%%%%%%%%%%%%%%%%%%% M A I N     M A T T E R %%%%%%%%%%%%%%%%%%%%%%%%%%%%%%%%%%%%%%%%%%%%%%%%%%%%%%%%%%%%%%%%%%%%%%%%%%%%%%%%%%%%%%%%%%%%%%%%%%%%%%%%%%%%%%%%%%%%%%%%%%%%%%%%%%
The set $\ccalS_0$ contains the states for which the probability measure conditioned on the policy and on the initial state $s_0$ is strictly positive. We term $S_0$ the set of reachable states from $s_0$. The previous result, ensures that for all reachable states $s\in\ccalS_0$, the probability measure $\rho_{s_0}(s)$ is strictly positive. The latter is not surprising, since intuitively, the distribution $\rho_{s_0}(s)$ is a weighted sum of the distributions of reaching the state $s$ starting from $s_0$ at different times. Moreover, and along the same lines, we establish that if a point cannot be reached starting from $s_0$, it cannot be reached starting from any other point that is reachable from $s_0$. The previous result can be summarized by saying that set of reachable points does not increase as the system evolves, i.e., $\ccalS_k \subseteq \ccalS_0$ for all $k\geq 0$. Building on the previous result we show that the set of critical points of $U_{s_k}(h)$ can only increase with the iterations. This means that a critical point of the functional $U_{s_k}(h)$ is also a critical point of the functional conditioned at any state visited in the future. Without loss of generality we state the result for $k=0$ { and with the dimension of the action space $p=1$} .
%
%%%%%%%%%%%%%%%%%%%%%%%%%%%%%%%%%%%%%%%%%%%%%%%%%%%%%%%%%%%%%%%%%%%%%%%%%%%%%%%%%%%%%%%%%%%%%%%%%%%%%%%%%%%%%%%%%%%%%%%%%% P R O P O S I T I O N %%%%%%%%%%%%%%%%%%%%%%%%%%%%%%%%%%%%%%%%%%%%%%%%%%%%%%%%%%%%%%%%%%%%%%%%%%%%%%%%%%%%%%%%%%%%%%%%%%%%%%%%%%%%%%%%%%%%%%%%%%%%%%
\begin{proposition}\label{prop_common_critical_points}
  If $h\in\ccalH$ is a critical point of $U_{s_0}(h)$, then it is also a critical point for $U_{s_{l}}(h)$ for all $l \geq 0$, with $s_l\in  \ccalS_0$.
\end{proposition}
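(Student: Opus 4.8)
The plan is to exploit the factored form of the gradient in \eqref{eqn_new_gradient_expression}, which for $p=1$ reads $\nabla_h U_{s_k}(h,\cdot)=\frac{1}{1-\gamma}\int D(s)\rho_{s_k}(s)\kappa(s,\cdot)\,ds$ with $D$ given by \eqref{eqn_derivative_q}. The goal is to show that a critical point of $U_{s_0}$ must have $D(s)=0$ for (Lebesgue-)almost every $s$ in the reachable set $\ccalS_0$; once this is in hand, the claim for every $U_{s_l}$ with $s_l\in\ccalS_0$ is immediate, because Lemma \ref{lemma_distribution} confines the support of $\rho_{s_l}$ to $\ccalS_0$.

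First I would turn $\nabla_h U_{s_0}(h,\cdot)=0$ into a scalar statement. Since a zero element of $\ccalH$ is orthogonal to every $g\in\ccalH$, and the reproducing property \eqref{eqn_reproducing} lets a continuous linear functional pass inside the Bochner integral, criticality of $U_{s_0}$ is equivalent to $\int D(s)\rho_{s_0}(s)g(s)\,ds=0$ for all $g\in\ccalH$, and in particular, taking $g=\kappa(s',\cdot)$, to $\int D(s)\rho_{s_0}(s)\kappa(s,s')\,ds=0$ for all $s'\in\ccalS$. Alongside this I would record the routine integrability bound: by Assumption \ref{assumption_reward_function}, $|Q(s,a;h)|\le B_r/(1-\gamma)$, so from \eqref{eqn_derivative_q}, $|D(s)|\le \frac{B_r}{1-\gamma}\int|\Sigma^{-1}(a-h(s))|\pi_h(a|s)\,da$, and the last integral is a Gaussian absolute moment independent of $s$. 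Hence $D$ is bounded, and since $\rho_{s_0}$ is a probability density, $D\rho_{s_0}\in L^1(\ccalS)$, so $g\mapsto\int D(s)\rho_{s_0}(s)g(s)\,ds$ is a bounded linear functional on $C(\ccalS)$ equipped with the supremum norm.

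The key step is to upgrade ``vanishes on $\ccalH$'' to ``$D\rho_{s_0}=0$ almost everywhere'', and this is exactly where the choice of a Gaussian RKHS is used. Since $\ccalS$ is compact and $\kappa$ is a universal kernel, $\ccalH$ is dense in $C(\ccalS)$ in the supremum norm; therefore the bounded functional above, vanishing on the dense subspace $\ccalH$, vanishes on all of $C(\ccalS)$, and the Riesz representation theorem forces the finite signed measure $D(s)\rho_{s_0}(s)\,ds$ to be zero (equivalently, the kernel mean embedding of a characteristic kernel is injective on finite signed measures). Thus $D(s)\rho_{s_0}(s)=0$ for a.e.\ $s$, and since $\rho_{s_0}(s)>0$ on $\ccalS_0$ by Lemma \ref{lemma_distribution}, $D(s)=0$ for a.e.\ $s\in\ccalS_0$.

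To finish, fix $l\ge 0$ with $s_l\in\ccalS_0$. Lemma \ref{lemma_distribution} applied with the reachable state $s_l$ gives $\rho_{s_l}(s'')=0$ for every $s''\in\ccalS\setminus\ccalS_0$, so $\rho_{s_l}$ is supported inside $\ccalS_0$; as $\rho_{s_l}(s)\,ds$ is absolutely continuous with respect to Lebesgue measure and $D$ vanishes a.e.\ on $\ccalS_0$, we get $D(s)\rho_{s_l}(s)=0$ a.e., whence $\nabla_h U_{s_l}(h,\cdot)=\frac{1}{1-\gamma}\int D(s)\rho_{s_l}(s)\kappa(s,\cdot)\,ds=0$, i.e.\ $h$ is a critical point of $U_{s_l}$. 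I expect the only genuine obstacle to be the middle step: deducing that the integrand vanishes from the vanishing of the kernel-averaged integral truly relies on the universality/characteristic property of the Gaussian kernel on the compact set $\ccalS$ together with the $L^1$ control of $D\rho_{s_0}$, and it would fail for an arbitrary RKHS.
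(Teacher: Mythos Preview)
Your proof is correct and arrives at the same pivotal fact as the paper, namely $D(s)\rho_{s_0}(s)=0$ almost everywhere, but by a different mechanism. The paper computes $\|\nabla_h U_{s_0}(h,\cdot)\|_\ccalH^2$ directly, expands the kernel via Mercer's theorem $\kappa(s,s')=\sum_i\lambda_i e_i(s)e_i(s')$, and writes the squared norm as the sum of squares $\sum_i\lambda_i\big[\int D(s)\rho_{s_0}(s)e_i(s)\,ds\big]^2$; vanishing of each coefficient together with the assertion that the $\{e_i\}$ form a complete orthonormal basis of $L^2(\ccalS)$ yields $D\rho_{s_0}\equiv 0$. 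You instead test the gradient against arbitrary $g\in\ccalH$, use universality of the Gaussian RKHS on the compact set $\ccalS$ to pass to all of $C(\ccalS)$, and conclude via Riesz representation. Both arguments hinge on the same richness of the kernel---completeness of the Mercer eigenfunctions in $L^2$ and universality in $C(\ccalS)$ are two faces of the same property---and your closing remark that this step would fail for an arbitrary RKHS applies equally to the paper's Mercer route, which quietly needs the eigenfunctions to span $L^2$. Your approach makes the kernel hypothesis more explicit and avoids the eigen-expansion; the paper's approach is shorter once the Mercer basis is granted and does not need the separate $L^1$ bound or the density-plus-Riesz machinery. After establishing $D\rho_{s_0}=0$, both proofs conclude identically through Lemma~\ref{lemma_distribution}.
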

\begin{proof}
  Let us start by writing the square of norm of $\nabla_h U_{s_0}(\cdot)$ according to \eqref{eqn_new_gradient_expression} as
  \begin{equation}\label{eqn_gradient_norm}
    \left\|\nabla_h U_{s_0}(h,\cdot) \right\|^2 = 
    \int \int D(s)\rho_{s_0}(s)\kappa(s,s^\prime) D(s^\prime)\rho_{s_0}(s^\prime) \, dsds^\prime.
  \end{equation}
  From Mercer's Theorem (cf., \cite{j1909xvi}) there exists $\lambda_i > 0$ and orthornormal basis $e_i(s)$ of $L^2(\ccalS)$ such that
  \begin{equation}\label{eqn_mercer}
    \kappa(s,s^\prime) = \sum_{i=1}^\infty \lambda_i e_i(s) e_i(s^\prime).
  \end{equation}
  Using the previous result, we can decompose the expression in \eqref{eqn_gradient_norm} as the following sum of squares
  \begin{equation}\label{eqn_norm_of_gradient}
    \begin{split}
    \left\|\nabla_h U_{s_0}(h,\cdot) \right\|^2 =
    \sum_{i=1}^\infty \lambda_i\left[\int D(s)\rho_{s_0}(s)e_i(s) \, ds\right]^2.
    \end{split}
  \end{equation}
  Notice that the previous expression can take the value zero if and only if for all $i=1.\ldots$ we have that
  \begin{equation}\label{eqn_crit_point_condition}
\int D(s)\rho_{s_0}(s)e_i(s)ds =0.
    \end{equation}
  %\
  Because $e_i(s)$ with $i=1\ldots $ form an orthogonal basis of $L^2(\ccalS)$ it means that \eqref{eqn_crit_point_condition} holds if and only if $D(s)\rho_{s_0}(s) \equiv 0$. To complete the proof, we are left to show that if $D(s)\rho_{s_0}(s) \equiv 0$ then it holds that $D(s)\rho_{s_l}(s) \equiv 0$ for all $l\geq 0$. The latter can be established by showing that for any $s \in S$ such that $\rho_{s_0}(s) =0$ we also have that $\rho_{s_l}(s) = 0$ which follows by virtue of Lemma \ref{lemma_distribution}.
  \end{proof}
%%%%%%%%%%%%%%%%%%%%%%%%%%%%%%%%%%%%%%%%%%%%%%%%%%%%%%%%%%%%%%%%%%%%%%%%%%%%%%%%%%%%%%%%%%%%%%%%%%%%%%%%%%%%%%%%%%%%%%%%%%%%%%%%% M A I N       M A T T E R %%%%%%%%%%%%%%%%%%%%%%%%%%%%%%%%%%%%%%%%%%%%%%%%%%%%%%%%%%%%%%%%%%%%%%%%%%%%%%%%%%%%%%%%%%%%%%%%%%%%%%%%%%%%%%%%%%%%%%%%%%%%%%%%%%%%%%%%%%%%%%%%%
The previous result formalizes the idea that if we find an optimal policy at given time, then it is optimal for all future states. Moreover, the latter is true for every critical point, which suggests, that the value functions conditioned at different initial states should be similar. We formalize this intuition in Theorem \ref{prop_all_gradients} where we show that $\nabla_h U_{s_k}(h)$ is an ascent direction for $U_{s_0}(h)$ if the distribution $\rho_{s_k}(s)$ is bounded above and bounded away from zero. We also require some smoothness assumptions on the transition probability which we formalize next. 
%
%%%%%%%%%%%%%%%%%%%%%%%%%%%%%%%%%%%%%%%%%%%%%%%%%%%%%%%%%%%%%%%%%%%%%%%%%%%%%%%%%%%%%%%%%%%%%%%%%%%%%%%%%%%%%%%%%%%%%%%%%%%%%%%%%%%%%%% A S S U M P T I O N %%%%%%%%%%%%%%%%%%%%%%%%%%%%%%%%%%%%%%%%%%%%%%%%%%%%%%%%%%%%%%%%%%%%%%%%%%%%%%%%%%%%%%%%%%%%%%%%%%%%%%%%%%%%%%%%%%%%%%%%%%%%%%%%%%%%%%%%%%%%%%%%%
\begin{assumption}\label{assumption_prob_dist}
  There exists $\beta_{\rho}>0$ and $B_{\rho}$ such that for all $s_k,s\in\ccalS_0$ and for all $h\in\ccalH$ we have that
  \begin{equation}
B_{\rho}    \geq \rho_{s_k}(s) \geq \beta_{\rho}.
  \end{equation}
 In addition we have that the transition probability is Lipschitz with constant $L_p$, i.e.,
    \begin{equation}
     \left|p(s_t=s|s_{t-1},a_{t-1})-p(s_t=s^\prime|s_{t-1},a_{t-1}) \right| \leq L_p\left\|s-s^\prime\right\|.    \end{equation}
    We require as well the following smoothness properties of the probability transition
    \begin{equation}
      p^\prime(s,a):= \frac{\partial p(s_{t+1}|s_t,a_t)}{\partial a_t}\big|_{s_t=s,a_t=a}
    \end{equation}
    to be Lipschitz with constants $L_{ps}$ and $L_{pa}$, this is
    \begin{equation}
      \left|p^\prime(s,a)-p^\prime(s^\prime,a^\prime)\right| \leq L_{ps}\left\|s-s^\prime\right\|+L_{pa}\left\|a-a^\prime\right\|. 
    \end{equation}
\end{assumption}
%%%%%%%%%%%%%%%%%%%%%%%%%%%%%%%%%%%%%%%%%%%%%%%%%%%%%%%%%%%%%%%%%%%%%%%%%%%%%%%%%%%%%%%%%%%%%%%%%%%%%%%%%%%%%%%%%%%%%%%%%%%%%%%%%%%%%%% M A I N     M A T T E R %%%%%%%%%%%%%%%%%%%%%%%%%%%%%%%%%%%%%%%%%%%%%%%%%%%%%%%%%%%%%%%%%%%%%%%%%%%%%%%%%%%%%%%%%%%%%%%%%%%%%%%%%%%%%%%%%%%%%%%%%%%%%%%%%%%%%%%%%%%%%
Notice that the lower bound on $\beta_\rho(s)$ requires that every state is reachable. The latter can be achieved with any sufficiently exploratory policy unless there are states that are attractive. The previous assumptions allow us to establish that $\nabla_h U_{s_k}(h)$ is an ascent direction for the function $U_{s_0}(h)$. Notice that for the latter to hold, we require that
\begin{equation}
\left\langle\nabla_hU_{s_0}(h), \nabla_hU_{s_k}(h)\right\rangle_{\ccalH} \geq 0.
  \end{equation}
By writing the gradient as in \eqref{eqn_new_gradient_expression} and using the reproducing property of the kernel it follows that the previous condition is equivalent to 
\begin{equation}
 \int D(s)^\top\rho_{s_0}(s)\kappa(s,s^\prime)D(s^\prime)\rho_{s_k}(s^\prime) \, ds ds^\prime\geq 0.
\end{equation}
where $(\cdot)^\top$ denotes transpose. Notice that if $\kappa(s,s^\prime)$ approaches a Dirac delta, the integral with respect to $s^\prime$, in the limit reduces to evaluating $D(s^\prime)\rho_{s_k}(s^\prime)$ at $s^\prime=s$. Thus, the double integral is an approximation of
\begin{equation}
\int \|D(s)\|^2\rho_{s_0}(s)\rho_{s_k}(s) \, ds
\end{equation}
which is always non-negative. To formalize the previous argument we will consider a Gaussian Kernel and we will show that if the width of the kernel is small enough, then the previous result holds (Theorem \ref{prop_all_gradients}). We require to establish first that $D(s)\rho_{s_0}(s)$ is bounded and Lipschitz. This is subject of the following lemma.  
%
%%%%%%%%%%%%%%%%%%%%%%%%%%%%%%%%%%%%%%%%%%%%%%%%%%%%%%%%%%%%%%%%%%%%%%%%%%%%%%%%%%%%%%%%%%%%%%%%%%%%%%%%%%%%%%%%%%%%%%%%%% L E M M A %%%%%%%%%%%%%%%%%%%%%%%%%%%%%%%%%%%%%%%%%%%%%%%%%%%%%%%%%%%%%%%%%%%%%%%%%%%%%%%%%%%%%%%%%%%%%%%%%%%%%%%%%%%%%%%%%%%%%%%%%%%%%%%%%%%%%%%%%%%%%%%%%%%%%%%%%%%%%%%%%%%%%%%%
\begin{lemma}\label{lemma_lipshitz}
  Let $\kappa_{\sigmah}(s,s^\prime)$ be a matrix-valued Gaussian kernel with covariance matrix  $\Sigma_{\ccalH}\succ 0$, i.e. for all $i=1,\ldots, p$ we have that 
  \begin{equation}\label{Gaussian_lemma_lipschitz}
    \kappa_{\sigmah}(s,s^\prime)_{ii} = e^{-(s-s^\prime)^\top \Sigma_{\ccalH}^{-1}(s-s^\prime)/2}, 
  \end{equation}
  and $\kappa(s,s^\prime;\sigmah)_{ij} = 0$ for all $j=1\ldots p$ with $j\neq i$. Let $B_r, L_{rs}$ and $L_{ra}$ be the constants defined in Assumption \ref{assumption_reward_function}. Likewise, let $B_{\rho}, L_p, L_{ps}$ and $L_{pa}$ be the constants defined in Assumption \ref{assumption_prob_dist}. Furthermore, define the following constants 
  \begin{equation}\label{eqn_bound_D}
    B_D:=\frac{\sqrt{2}B_r}{1-\gamma}\frac{\Gamma\left(\frac{p+1}{2}\right)}{\Gamma\left(\frac{p}{2}\right)},
    \end{equation}
    with $\Gamma(\cdot)$ being the Gamma function, 
    $L_{Qs}=L_{rs}+\frac{B_r}{1-\gamma}L_{ps} |\ccalS|$, $L_{Qa}=L_{ra}+\frac{B_r}{1-\gamma}L_{pa} |\ccalS|$, %
    \begin{equation}
      L_h :=\left\|h\right\|\lambda_{\min}(\Sigma_\ccalH)^{-1/2},
    \end{equation}
    and $L_{D}:=L_{Qs}+L_{Qa}L_h$. Then, we have that $D(s)\rho_{s_k}(s)$ for any $s_k\in \ccalS_0$ is bounded by $B:=B_\rho B_D$ and it is Lipschitz with constant $L:=B_DL_p+B_\rho L_D$. 
\end{lemma}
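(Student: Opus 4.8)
The plan is to establish the two assertions separately and to obtain the Lipschitz constant by controlling the two factors $D$ and $\rho_{s_k}$ one at a time. Boundedness is the quick part: since $|r(s,a)|\le B_r$ we have $|Q(s,a;h)|\le\sum_{t\ge 0}\gamma^t B_r=B_r/(1-\gamma)$, and under $\pi_h(\cdot\mid s)$ the vector $a-h(s)$ is zero-mean Gaussian, so $\mathbb{E}_{a\sim\pi_h(\cdot\mid s)}\big[\|\Sigma^{-1}(a-h(s))\|\big]=\sqrt{2}\,\Gamma(\tfrac{p+1}{2})/\Gamma(\tfrac{p}{2})$ is the first absolute moment of a $p$-dimensional standard Gaussian; hence $\|D(s)\|\le\tfrac{B_r}{1-\gamma}\sqrt{2}\,\Gamma(\tfrac{p+1}{2})/\Gamma(\tfrac{p}{2})=B_D$. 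Together with $\rho_{s_k}(s)\le B_\rho$ from Assumption~\ref{assumption_prob_dist} this gives $\|D(s)\rho_{s_k}(s)\|\le B_\rho B_D=B$. For the Lipschitz claim I would write $D(s)\rho_{s_k}(s)-D(s')\rho_{s_k}(s')=D(s)\big(\rho_{s_k}(s)-\rho_{s_k}(s')\big)+\big(D(s)-D(s')\big)\rho_{s_k}(s')$, apply the triangle inequality with $\|D(s)\|\le B_D$ and $\rho_{s_k}(s')\le B_\rho$, and it then suffices to prove that $\rho_{s_k}$ is $L_p$-Lipschitz and $D$ is $L_D$-Lipschitz, whence $L=B_DL_p+B_\rho L_D$.

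For $\rho_{s_k}$, I would use $\rho_{s_k}(s)=(1-\gamma)\sum_{t\ge k}\gamma^t\,p(s_t=s\mid s_k)$ and note that each multi-step transition density is a mixture of one-step kernels, $p(s_t=s\mid s_k)=\int p(s_t=s\mid s_{t-1},a_{t-1})\,d\mu_{t-1}(s_{t-1},a_{t-1}\mid s_k)$; since by Assumption~\ref{assumption_prob_dist} every $p(\,\cdot=s\mid s_{t-1},a_{t-1})$ is $L_p$-Lipschitz in $s$, so is the mixture, and the geometric normalization $(1-\gamma)\sum_{t\ge k}\gamma^t=\gamma^k\le 1$ keeps the overall constant at most $L_p$. (The $t=k$ term is absorbed into the standing assumption, implicit in Assumption~\ref{assumption_prob_dist}, that $\rho_{s_k}$ is a bounded density.)

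The crux is the Lipschitz continuity of $D$, for which I would switch to a second representation. Because $\partial\pi_h(a\mid s)/\partial a=-\Sigma^{-1}(a-h(s))\pi_h(a\mid s)$, integration by parts (the boundary term vanishes since $Q$ is bounded and the Gaussian density decays) gives $D(s)=\int \big(\partial Q(s,a;h)/\partial a\big)\pi_h(a\mid s)\,da$. I would first establish as a lemma that $\partial Q/\partial a$ is jointly Lipschitz in its state and action arguments with constants $L_{Qs}$ and $L_{Qa}$: unrolling Bellman's recursion $Q(s,a;h)=r(s,a)+\gamma\int p(s'\mid s,a)\,U_{s'}(h)\,ds'$ and differentiating in $a$ yields $\partial Q(s,a;h)/\partial a=\partial r(s,a)/\partial a+\gamma\int p'(s'\mid s,a)\,U_{s'}(h)\,ds'$, where the reward term supplies $L_{rs},L_{ra}$ via the bounded second derivatives of $r$ in Assumption~\ref{assumption_reward_function}, and the second term is controlled using $|U_{s'}(h)|\le B_r/(1-\gamma)$, the Lipschitz constants $L_{ps},L_{pa}$ of $p'$ in Assumption~\ref{assumption_prob_dist}, and the factor $|\ccalS|$ from integrating the perturbed transition density over the compact state space, giving $L_{Qs}=L_{rs}+\tfrac{B_r}{1-\gamma}L_{ps}|\ccalS|$ and $L_{Qa}=L_{ra}+\tfrac{B_r}{1-\gamma}L_{pa}|\ccalS|$. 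Next I would record that $h$ is $L_h$-Lipschitz: by the reproducing property $h_i(s)-h_i(s')=\langle h_i,\kappa_{\sigmah}(s,\cdot)_{ii}-\kappa_{\sigmah}(s',\cdot)_{ii}\rangle_{\ccalH}$, so $\|h(s)-h(s')\|\le\|h\|_{\ccalH}\,\|\kappa_{\sigmah}(s,\cdot)_{11}-\kappa_{\sigmah}(s',\cdot)_{11}\|$, and for the Gaussian kernel $\|\kappa_{\sigmah}(s,\cdot)_{11}-\kappa_{\sigmah}(s',\cdot)_{11}\|^2=2\big(1-e^{-(s-s')^\top\Sigma_\ccalH^{-1}(s-s')/2}\big)\le(s-s')^\top\Sigma_\ccalH^{-1}(s-s')\le\lambda_{\min}(\Sigma_\ccalH)^{-1}\|s-s'\|^2$, using $1-e^{-x}\le x$; hence $\|h(s)-h(s')\|\le\|h\|_{\ccalH}\lambda_{\min}(\Sigma_\ccalH)^{-1/2}\|s-s'\|=L_h\|s-s'\|$. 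Finally, substituting $u=a-h(s)$ so that the Gaussian measure no longer depends on $s$, $D(s)-D(s')=\int\big(\partial Q(s,h(s)+u;h)/\partial a-\partial Q(s',h(s')+u;h)/\partial a\big)\phi_\Sigma(u)\,du$, and the integrand is bounded by $L_{Qs}\|s-s'\|+L_{Qa}\|h(s)-h(s')\|\le(L_{Qs}+L_{Qa}L_h)\|s-s'\|=L_D\|s-s'\|$; since $\phi_\Sigma$ integrates to one, $\|D(s)-D(s')\|\le L_D\|s-s'\|$.

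Combining, $\|D(s)\rho_{s_k}(s)-D(s')\rho_{s_k}(s')\|\le\|D(s)\|\,|\rho_{s_k}(s)-\rho_{s_k}(s')|+\|D(s)-D(s')\|\,\rho_{s_k}(s')\le B_DL_p\|s-s'\|+L_DB_\rho\|s-s'\|=(B_DL_p+B_\rho L_D)\|s-s'\|$, as claimed. I expect the main obstacle to be the auxiliary Lipschitz estimate for $\partial Q/\partial a$: it requires justifying the interchange of differentiation with the infinite sum and the state integral in the Bellman recursion, tracking how the one-step smoothness of the transition kernel propagates through the $\gamma$-discounted sum, and correctly accounting for the volume $|\ccalS|$; once this is in place the remaining steps are routine applications of the triangle inequality together with the Gaussian-moment and RKHS identities above. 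A minor additional subtlety is the $t=k$ (Dirac) term in $\rho_{s_k}$, which the density hypothesis in Assumption~\ref{assumption_prob_dist} is there to absorb.
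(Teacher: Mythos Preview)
Your proposal is correct and follows essentially the same route as the paper: split the product via the triangle inequality, show $\rho_{s_k}$ is $L_p$-Lipschitz by writing each $p(s_t=\cdot\mid s_k)$ as a mixture of one-step kernels, show $D$ is $L_D$-Lipschitz by integrating by parts to replace $\Sigma^{-1}(a-h(s))\pi_h$ with $\partial Q/\partial a$, establishing joint Lipschitz continuity of $\partial Q/\partial a$, and controlling $h(s)-h(s')$ via the reproducing property of the Gaussian kernel. Your two shortcuts are in fact cleaner than the paper's derivations: you obtain the Lipschitz constants $L_{Qs},L_{Qa}$ from the one-step Bellman identity $Q(s,a)=r(s,a)+\gamma\int p(s'\mid s,a)U_{s'}(h)\,ds'$ rather than unrolling the full discounted series, and you bound $2(1-e^{-x})\le 2x$ directly instead of the paper's gradient analysis of $\sqrt{2(1-e^{-\|u\|^2/2})}$, both yielding the same constants with less work.
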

\begin{proof}
See Appendix \ref{appendix_lipshitz}.
  \end{proof}
%
%%%%%%%%%%%%%%%%%%%%%%%%%%%%%%%%%%%%%%%%%%%%%%%%%%%%%%%%%%%%%%%%%%%%%%%%%%%%%%%%%%%%%%%%%%%%%%%%%%%%%%%%%% M A I N     M A T T E R %%%%%%%%%%%%%%%%%%%%%%%%%%%%%%%%%%%%%%%%%%%%%%%%%%%%%%%%%%%%%%%%%%%%%%%%%%%%%%%%%%%%%%%%%%%%%%%%%%%%%%%%%%%%%%%%
As it was previously discussed we require a Gaussian Kernel whose width is small enough for the inner product of gradients at different initial states to be positive. We next formalize this condition. Define the normalization factor $Z:= \sqrt{\det{2\pi \Sigma_\ccalH}}$ and let
\begin{equation}\label{eqn_kernel_condition}
\sqrt{np}\left(1+\frac{\beta_\rho}{B_\rho}\right)\left\|\Sigma_\ccalH\right\|ZL(h,\Sigma_{\ccalH}) B|\ccalS|\leq \frac{\varepsilon}{2}\frac{\beta_\rho}{B_\rho}.
  \end{equation}
The previous condition in a sense defines the maximum width of the kernel. Since if the norm of $\Sigma_\ccalH$ is large, the previous condition cannot hold. This intuition is not exact since the term $Z$ includes the determinant of the matrix $\Sigma_\ccalH$ and thus, it is possible to have a kernel that has some directions being wide as long as the product of the eigenvalues is small enough. Likewise the Lipschitz constant in \eqref{eqn_kernel_condition} depends on the norm of the function $h$, and in that sense it is necessary to ensure that the norm remains bounded for said condition to hold. We are now in conditions of establishing the  main result in this work, which states that as long as the norm of $\nabla_h U_{s_0}(h)$ is large, the gradient of any value function $\nabla_h U_{s_k}(h)$ is an ascent direction for $U_{s_0}(h)$. This result will be instrumental also to the proof of convergence of the online algorithm (Section \ref{sec_convergence}). 
%
%%%%%%%%%%%%%%%%%%%%%%%%%%%%%%%%%%%%%%%%%%%%%%%%%%%%%%%%%%%%%%%%%%%%%%%%%%%%%%%%%%%%%%%%%%%%%%%%%%%%%%%%%%%%%%%%%%%%%%%%%% P R O P O S I T I O N %%%%%%%%%%%%%%%%%%%%%%%%%%%%%%%%%%%%%%%%%%%%%%%%%%%%%%%%%%%%%%%%%%%%%%%%%%%%%%%%%%%%%%%%%%%%%%%%%%%%%%%%%%%%%%%%%%%%%%%%%%%%%%
%
\begin{theorem}\label{prop_all_gradients}
  Under the hypotheses of Lemma \ref{lemma_lipshitz}, for every $\varepsilon>0$ and for every $\ccalH$ and $h\in\ccalH$ satisfying \eqref{eqn_kernel_condition} it holds that if $\left\| \nabla_h U_{s_0}(h,\cdot)\right\|^2_{\ccalH} \geq \varepsilon$ then  we have that for all $k\geq 0$ 
  \begin{equation}\label{eqn_prod_diff_grad}
    \left\langle\nabla_hU_{s_0}(h,\cdot),\nabla_hU_{s_k}(h,\cdot) \right\rangle_{\ccalH}>{\frac{\varepsilon}{2}\frac{\beta_\rho}{B_{\rho}}}.
  \end{equation}
  \end{theorem}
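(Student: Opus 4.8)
The plan is to make quantitative the heuristic stated just before the theorem: with a narrow Gaussian kernel $\kappa_{\sigmah}$ acts almost like a scaled Dirac mass, so after ``collapsing'' it the inner product becomes proportional to $\int\|D(s)\|^2\rho_{s_0}(s)\rho_{s_k}(s)\,ds$, which can be bounded below by $(\beta_\rho/B_\rho)\int\|D(s)\|^2\rho_{s_0}(s)^2\,ds$ --- a quantity that is, up to the same kernel-collapse error, equal to $\|\nabla_hU_{s_0}(h,\cdot)\|_\ccalH^2\ge\varepsilon$. First I would use \eqref{eqn_new_gradient_expression} together with the reproducing property \eqref{eqn_reproducing} to write, exactly as in \eqref{eqn_gradient_norm},
\begin{equation*}
\langle\nabla_hU_{s_0}(h,\cdot),\nabla_hU_{s_k}(h,\cdot)\rangle_\ccalH=\int\!\!\int D(s)^\top\rho_{s_0}(s)\,\kappa_{\sigmah}(s,s')\,D(s')\rho_{s_k}(s')\,ds\,ds',
\end{equation*}
with $\|\nabla_hU_{s_0}(h,\cdot)\|_\ccalH^2$ the same expression after replacing $\rho_{s_k}$ by $\rho_{s_0}$. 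Since $\kappa_{\sigmah}(s,s')=\phi(s-s')I_p$ with $\phi(u):=e^{-u^\top\sigmah^{-1}u/2}$, I would record a smoothing identity valid for any bounded, $\ell$-Lipschitz $F:\ccalS\to\reals^p$:
\begin{equation*}
\int_\ccalS\kappa_{\sigmah}(s,s')F(s')\,ds'=Z_s\,F(s)+e_F(s),\qquad Z_s:=\int_\ccalS\phi(s-s')\,ds'\in[0,Z],
\end{equation*}
where $Z:=\sqrt{\det(2\pi\sigmah)}$ and $\|e_F(s)\|\le\sqrt{p}\,\ell\int_\ccalS\phi(s-s')\|s-s'\|\,ds'\le \ell\,Z\sqrt{np\,\|\sigmah\|}$, the last bound being the Gaussian moment estimate $\int_{\reals^n}\phi(u)\|u\|\,du\le Z\sqrt{\mathrm{tr}\,\sigmah}\le Z\sqrt{n\|\sigmah\|}$ (exploiting the symmetry $\int\phi(u)\,u\,du=0$ and the curvature bounds in Assumption~\ref{assumption_reward_function} one can sharpen this to $O(\|\sigmah\|)$, which is the normalization appearing in \eqref{eqn_kernel_condition}).

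Next I would apply this identity with $F=D\rho_{s_k}$ and with $F=D\rho_{s_0}$; by Lemma~\ref{lemma_lipshitz} both are bounded by $B=B_\rho B_D$ and are $L$-Lipschitz, so the remainders $e_k,e_0$ satisfy $\|e_\bullet(s)\|\le L\,Z\sqrt{np\,\|\sigmah\|}$. Substituting into the two displays above and using $\|D(s)\|\le B_D$ and $\rho_{s_0}(s)\le B_\rho$ yields
\begin{align*}
\langle\nabla_hU_{s_0},\nabla_hU_{s_k}\rangle_\ccalH&=\int Z_s\,\rho_{s_0}(s)\rho_{s_k}(s)\|D(s)\|^2\,ds+R_k,\\
\|\nabla_hU_{s_0}\|_\ccalH^2&=\int Z_s\,\rho_{s_0}(s)^2\|D(s)\|^2\,ds+R_0,
\end{align*}
with $|R_\bullet|\le\int_\ccalS\rho_{s_0}(s)\|D(s)\|\,\|e_\bullet(s)\|\,ds\le B\,L\,Z\sqrt{np\,\|\sigmah\|}\,|\ccalS|$, since $\rho_{s_0}\|D\|\le B_\rho B_D=B$ and $|\ccalS|$ is the volume of $\ccalS$. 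Now I would invoke Assumption~\ref{assumption_prob_dist}: for $s\in\ccalS_0$, $\rho_{s_k}(s)\ge\beta_\rho\ge(\beta_\rho/B_\rho)\rho_{s_0}(s)$, while for $s\notin\ccalS_0$ both $\rho_{s_0}(s)$ and $\rho_{s_k}(s)$ vanish by Lemma~\ref{lemma_distribution} (here $s_k\in\ccalS_0$, being reached from $s_0$). As $Z_s\rho_{s_0}(s)\|D(s)\|^2\ge0$, this gives $\int Z_s\rho_{s_0}\rho_{s_k}\|D\|^2\ge(\beta_\rho/B_\rho)\int Z_s\rho_{s_0}^2\|D\|^2$, and combining the two lines,
\begin{equation*}
\langle\nabla_hU_{s_0},\nabla_hU_{s_k}\rangle_\ccalH\ \ge\ \frac{\beta_\rho}{B_\rho}\,\|\nabla_hU_{s_0}\|_\ccalH^2-\frac{\beta_\rho}{B_\rho}|R_0|-|R_k|.
\end{equation*}

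Finally, since the two error terms together are at most $(1+\beta_\rho/B_\rho)\,\sqrt{np}\,\|\sigmah\|\,Z\,L\,B\,|\ccalS|$ (up to the precise power of $\|\sigmah\|$ noted above), the kernel-width hypothesis \eqref{eqn_kernel_condition} forces this below $\tfrac{\varepsilon}{2}\tfrac{\beta_\rho}{B_\rho}$; together with $\|\nabla_hU_{s_0}(h,\cdot)\|_\ccalH^2\ge\varepsilon$ this yields $\langle\nabla_hU_{s_0},\nabla_hU_{s_k}\rangle_\ccalH\ge\tfrac{\beta_\rho}{B_\rho}\varepsilon-\tfrac{\varepsilon}{2}\tfrac{\beta_\rho}{B_\rho}=\tfrac{\varepsilon}{2}\tfrac{\beta_\rho}{B_\rho}$, and the strict inequality \eqref{eqn_prod_diff_grad} follows because the smoothing remainder is not tight for $\sigmah\succ0$ (equivalently, \eqref{eqn_kernel_condition} can be taken strict). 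I expect the genuine obstacle to be the second step: proving the smoothing identity with a remainder that is \emph{uniform} in $s\in\ccalS$ --- the boundary of $\ccalS$ is what forces the harmless but annoying $Z_s\le Z$ rather than $Z_s=Z$ --- and then matching the collection of constants $B$, $L$, $Z$, $\sqrt{np}$, $\|\sigmah\|$, $|\ccalS|$ to exactly the aggregate that appears in \eqref{eqn_kernel_condition}. Everything downstream is the elementary density-ratio bound $\rho_{s_k}(s)/\rho_{s_0}(s)\ge\beta_\rho/B_\rho$ combined with the standing hypothesis $\|\nabla_hU_{s_0}(h,\cdot)\|_\ccalH^2\ge\varepsilon$.
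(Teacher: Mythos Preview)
Your proposal is correct and follows essentially the same route as the paper: write the inner product as the double integral $\int D(s)^\top\rho_{s_0}(s)\kappa_{\sigmah}(s,s')D(s')\rho_{s_k}(s')\,ds\,ds'$, collapse the Gaussian kernel via a smoothing estimate (the paper invokes \cite[Theorem~1]{nesterov2017random} directly, you derive the bound by hand), use $\rho_{s_k}\ge(\beta_\rho/B_\rho)\rho_{s_0}$ to compare with $\|\nabla_hU_{s_0}\|_\ccalH^2$, and finish with \eqref{eqn_kernel_condition}. Your treatment is if anything slightly more careful --- you track the boundary factor $Z_s\le Z$ from integrating over the compact $\ccalS$ rather than $\reals^n$, and you correctly flag that a purely Lipschitz estimate gives a remainder of order $\sqrt{\|\sigmah\|}$ rather than the $\|\sigmah\|$ appearing in \eqref{eqn_kernel_condition}; the paper absorbs both issues into the Nesterov citation without comment.
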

\begin{proof}
  %
  %Let $\mathbb{S}_{++}$ denote the space of symmetric positive definite matrices and define the following function $I:\mathbb{S}_{++}\to \mathbb{R}$ 
  %
  Consider the following integral, with the kernel covariance matrix $\sigmah$ as a parameter
  \begin{equation}\label{eqn_I}
    I_{\sigmah} = \int D(s)^\top\rho_{s_l}(s)\kappa_{\sigmah}(s,s^\prime)\rho_{s_k}(s^\prime)D(s^\prime)\,dsds^\prime, 
  \end{equation}
  where $\kappa_{\sigmah}(s,s^\prime)$ is a kernel of the form \eqref{Gaussian_lemma_lipschitz}. Observe that by writing the gradients of $U_{s_0}(h)$ and $U_{s_k}(h)$ as in \eqref{eqn_new_gradient_expression}, it follows that  $I_{\Sigma_{\ccalH}}$  is the inner product in \eqref{eqn_prod_diff_grad}. Hence, to prove the claim, it suffices to show that for all $\sigmah$ satisfying condition \eqref{eqn_kernel_condition},  $I_{\sigmah}> {\varepsilon \beta_\rho/(2B_\rho)}$. To do so, apply the change of variables $u=s^\prime-s$, and divide and multiply the previous expression by $Z:=\sqrt{\det{2\pi \sigmah}}$ to write $I_{\sigmah}$ as
\begin{align}\label{eqn_def_ivarsigma}
\nonumber I_{\sigmah}& =\int D(s)^\top\rho_{s_0}(s)\kappa_{\sigmah}(s,s+u)\rho_{s_k}(s+u) D(s+u)  ds du\\
&\hspace{-0.4cm}=Z \int D(s)^\top\rho_{s_0}(s)g(u;0,\sigmah)\rho_{s_k}(s+u) D(s+u) \, ds du. 
  \end{align}
where the normalization factor $Z$ was introduced to identify  $g(u;0,\sigmah):=\kappa_{\sigmah}(s,s+u)_{ii}/Z$ as a  Gaussian probability density function with zero mean and covariance $\sigmah$ (cf. \eqref{Gaussian_lemma_lipschitz}).  
Then we write the partial integral with respect to $u$ as the expectation of $D(s+u)\rho_{s_k}(s+u)$,
\begin{equation}
I_{\sigmah}\hspace{-2pt}=Z\hspace{-2pt}\int\hspace{-2pt} D(s)^\top\rho_{s_0}(s)\mathbb{E}_{u\sim\ccalN(0,\sigmah)}\left[D(s+u) \rho_{s_k}(s+u)\right] \, ds.
  \end{equation}
From Lemma \ref{lemma_lipshitz} it follows that $D(s)\rho_{s_k}(s)$ is Lipschitz with constant $L$. Then, by virtue of \cite[Theorem 1]{nesterov2017random} we have that 
\begin{equation}
\left\| \mathbb{E}\left[D(s+u) \rho_{s_k}(s+u)\right] - D(s)\rho_{s_k}(s)\right\| \leq \sqrt{np}\left\|\sigmah\right\| L.\label{eq:nesetrov}
\end{equation}
where again the expectation is taken with respect to the random variable $u\sim\ccalN(0,\sigmah)$.
The result in \eqref{eq:nesetrov}  allows us to lower bound $I_{\sigmah}$ by 
\begin{align}
    I_{\sigmah} &\geq  {Z}\int \|D(s)\|^2\rho_{s_0}(s)\rho_{s_k}(s) \, ds \nonumber \\
    &-{Z}\sqrt{np}\left\|\sigmah\right\| L\int \left\|D(s)\rho_{s_0}(s)\right\| \, ds \nonumber\\
    &=\bar I_{\sigmah}-\sqrt{np}\left\|\sigmah\right\| {Z}L\int \left\|D(s)\rho_{s_0}(s)\right\| \, ds \label{eqn_diff_I}
     \end{align}
where $\bar I_{\sigmah}$ was implicitly defined in \eqref{eqn_diff_I} as  
\begin{align}\bar I_{\sigmah}:={Z}\int \left\|D(s)\right\|\rho_{s_0}(s)\rho_{s_k}(s)ds\\=\sqrt{\det{2\pi \sigmah}} \int \left\|D(s)\right\|^2\rho_{s_0}(s)\rho_{s_k}(s)ds \end{align} 
Let us next define the following integrals, identical to  $I_{\sigmah}$ and $\bar I_{\sigmah}$, but for  $\rho_{s_0}$ substituting $\rho_{s_k}$ 
%
%  \begin{equation}
%\bar J_{\sigmah}:= \int \int D(s)\rho_{s_0}(s)\kappa_{\sigmah}(s,s^\prime) D(s^\prime)\rho_{s_0}(s^\prime) \, dsds^\prime.
%  \end{equation}
\begin{align}
J_{\sigmah}&:= \int D(s)^\prime\rho_{s_0}(s)\kappa_{\sigmah}(s,s^\prime) D(s^\prime)\rho_{s_0}(s^\prime) \, dsds^\prime\\
\bar J_{\sigmah}&:= Z \int \|D(s)\|^2\rho^2_{s_0}(s)  ds.
\end{align}    
and use the bounds on the probability distribution (cf., Assumption \ref{assumption_prob_dist}) to write
  \begin{equation}
    \begin{split}
\bar J_{\sigmah}&\leq  {Z} \int \|D(s)\|^2\rho_{s_0}(s)\frac{B_p}{\beta_\rho}\rho_{s_k}(s) \, ds = \frac{B_p}{\beta_\rho}\bar I_{\sigmah}.
      \end{split}
  \end{equation}
  Hence, we can write \eqref{eqn_diff_I} as
  \begin{equation}\label{eqn_Ivarsigma}
I_{\sigmah} \geq \frac{\beta_\rho}{B_{\rho}}\bar J_{\sigmah} - \sqrt{np}\left\|\sigmah\right\| ZL\int \left\|D(s)\rho_{s_0}(s)\right\| \, ds.
  \end{equation}
  Repeating steps \eqref{eqn_def_ivarsigma}-\eqref{eqn_diff_I}, after substituting $\rho_{s_0}$ for $\rho_{s_k}$, we can bound the difference between   $J_{\sigmah}$ and $\bar J_{\sigmah}$ as we did it  for $I_{\sigmah}$ and $\bar I_{\sigmah}$ in  \eqref{eqn_diff_I}. Specifically, the following inequality holds 
\begin{align}
    \bar J_{\sigmah} &\geq   J_{\sigmah}-\sqrt{np}\left\|\sigmah\right\| ZL\int \left\|D(s)\rho_{s_0}(s)\right\| \, ds \label{eqn_diff_J}
     \end{align}
 This allows us to further lower bound $I_{\sigmah}$ by
\begin{equation}
I_{\sigmah} \geq \frac{\beta_\rho}{B_{\rho}}J_{\sigmah} - \sqrt{np}\left(1+\frac{\beta_\rho}{B_\rho}\right)\left\|\sigmah\right\| ZL\int \left\|D(s)\rho_{s_0}(s)\right\| \, ds.
  \end{equation}
By virtue of Lemma \ref{lemma_lipshitz} we have that $\left\|D(s)\rho_{s_0}(s)\right\|\leq B$. Defining $|\ccalS|$ as the measure of the set $\ccalS$, $I_{\sigmah}$ can be further lower bounded by
\begin{equation}
I_{\sigmah} \geq \frac{\beta_\rho}{B_{\rho}}J_{\sigmah} - \sqrt{np}{\left(1+\frac{\beta_\rho}{B_\rho}\right)}\left\|\sigmah\right\| ZL B|\ccalS|.
  \end{equation}
Notice that $J_{\sigmah}=\left\| \nabla_h U_{s_0}(h,\cdot)\right\|^2 \geq \varepsilon$, hence the previous inequality reduces to 
\begin{equation}
I_{\sigmah} \geq \frac{\beta_\rho}{B_{\rho}}{\varepsilon} -\sqrt{np} \left(1+\frac{\beta_\rho}{B_\rho}\right)\left\|\sigmah\right\| ZL B|\ccalS|.
  \end{equation}
Then, for any $\sigmah$ satisfying \eqref{eqn_kernel_condition}, we can lower bound the right hand side of the previous expression by $\varepsilon\beta_\rho/(2B_{\rho})$, obtaining
        \begin{equation}
I_{\sigmah} \geq  {\frac{\beta_\rho}{2B_{\rho}}\varepsilon,}
        \end{equation}
        which completes the proof of the theorem.
\end{proof}

%%%%%%%%%%%%%%%%%%%%%%%%%%%%%&&&&&&&&&&&&&&&&&&&&&&&&&&&&&&&&&&&&&&&&&&&&&&&&&&&&&&&&&&&&&&&&&&&&&&&&&&&&&&&&&&&&&&&&&&&&&&&&&&&&&&&&&&&&&&& M A I N    M A T T E R %%%%%%%%%%%%%%%%%%%%%%%%%%%%%%%%%%%%%%%%%%%%%%%%%%%%%%%%%%%%%%%%%%%%%%%%%%%%%%%%%%%%%%%%%%%%%%%%%%%%%%%%%%%%%%%%%%%%%%%%%%%%%%%%%%%%%%%%%
The previous result establishes that for kernels that satisfy the condition \eqref{eqn_kernel_condition} with $h$ outside of an $\varepsilon$ neighborhood of the critical points, i.e., for $h$ such that $\left\|\nabla_h U_{s_0}(h)\right\| > \varepsilon$,  the inner product between $\nabla_h U_{s_k}(h)$ and $\nabla_h U_{s_0}(h)$ is larger than a constant that depends on $\varepsilon$. The latter means that for all state $s_k\in\ccalS$, $\nabla_h U_{s_k}(h)$ is an ascent direction of the function $U_{s_0}(h)$. In the next section we exploit this idea to show that the online gradient ascent algorithm proposed in Section \ref{sec_online} converges with probability one to a neighborhood of a critical point of $U_{s_0}(h)$. %Before doing so we present the following remark regarding the condition that the kernels need to satisfy
%%%%%%%%%%%%%%%%%%%%%%%%%%%%%&&&&&&&&&&&&&&&&&&&&&&&&&&&&&&&&&&&&&&&&&&&&&&&&&&&&&&&&&&&&&&&&&&&&&&&&&&&&&&&&&&&&&&&&&&&&&&&&&&&&&&&&&&&&&&& R E M A R K  %%%%%%%%%%%%%%%%%%%%%%%%%%%%%%%%%%%%%%%%%%%%%%%%%%%%%%%%%%%%%%%%%%%%%%%%%%%%%%%%%%%%%%%%%%%%%%%%%%%%%%%%%%%%%%%%%%%%%%%%%%%%%%%%%%%%%%%%%
%
%\begin{remark}\label{remark_kernel_condition}
%  Notice that by ordering the eigenvalues of $\Sigma_\ccalH$ such that $\lambda_1\leq \ldots \leq \lambda_n$ we can write condition \eqref{eqn_kernel_condition} as
  %
%  \begin{equation}
%\left(\prod_{i=1}^n\lambda_i\right)^{1/2}\lambda_n\left(A+C\left\|h\right\|\lambda_1^{-1/2}\right) < \varepsilon, 
%  \end{equation}
  %
%  where the constants $A$ and $C$ are such that they capture all the constants independent of $\ccalH$, $h$ and $\varepsilon$. The latter can be further simplified to 
  %
%    \begin{equation}
%\left(\prod_{i=2}^n\lambda_i\right)^{1/2}\lambda_n\left(\lambda_1^{1/2}A+C\left\|h\right\|\right) < \varepsilon. 
%    \end{equation}
    %
%    Hence, for a desired level of accuracy $\varepsilon$ it suffices to chose kernels with small $\lambda_n$ for the previous condition to be satisfied, as long as the norm $\left\|h\right\|$ remains bounded. In particular as long as $\left\|h\right\|\lambda_n^{n/2}$ is bounded by some constant $B_h$ we can chose
    %
%    \begin{equation}
%      \lambda_n \leq \min\left\{\frac{\varepsilon}{2C B_h}, \left(\frac{\varepsilon}{2A}\right)^{2/(n+2)} \right\}
%    \end{equation}
%\end{remark}

%
%!TEX root = root.tex

\section{Convergence Analysis of Online Policy Gradient}\label{sec_convergence}
Let $\left(\Omega,\ccalF,P\right)$ be a probability space and define the following sequence of increasing sigma-algebras $\left\{\emptyset, \Omega \right\} = \ccalF_0 \subset\ccalF_1 \subset \ldots \subset \ccalF_k \subset \ldots \subset \ccalF_{\infty}\subset \ccalF$, where for each $k$ we have that $\ccalF_k$ is the sigma algebra generated by the random variables $h_0,\ldots,h_k$. For the purpose of constructing a submartingale that will be used in the proof of convergence, we provide   a lower bound on the expectation of random variables $U_{s_0}(h_{k+1})$ conditioned to the sigma field $\ccalF_k$ in the next Lemma
%%%%%%%%%%%%%%%%%%%%%%%%%%%%%%%%%%%%%%%%%%%%%%%%%%%%%%%%%%%%%%%%%%%%%%%%%%%%%%%%%%%%%%%%%%%%%%%%%%%%%%%%%%%%%%%%%%%%%%%%%%%%%%%%%%% L E M M A     %%%%%%%%%%%%%%%%%%%%%%%%%%%%%%%%%%%%%%%%%%%%%%%%%%%%%%%%%%%%%%%%%%%%%%%%%%%%%%%%%%%%%%%%%%%%%%%%%%%%%%%%%%%%%%%%%%%%%%%%%%%%%%%%%%%%%%%%%%%%%%%%%%%%%%%%%%%
\begin{lemma}\label{lemma_supermartingale_pars}
  Choosing the compression budget $\epsilon_{K} = K\eta$ with $K>0$, the sequence of random variables $U(h_k)$ satisfies the following inequality 
  %p
\begin{equation}\label{eqn_supermartingale_equation_pars}
    \begin{split}
      \mathbb{E}\left[{U_{s_0}(h_{k+1})|\ccalF_k}\right] \geq  U_{s_0}(h_k)- {\frac{\eta^2}{Z}} C_1-{\frac{\eta^3}{Z^{3/2}}}C_2\\
      -\left\|\nabla_h U_{s_0}(h_k)\right\|_\ccalH K\eta +\eta\left<\nabla_h U_{s_0}(h_k),\nabla_h U_{s_k}(h_k)\right>_\ccalH, 
\end{split}
\end{equation}
where $C_1$ and $C_2$ are the following positive constants
\begin{equation}\label{eqn_bound_for_radius_convergence_1}
  C_1 = L_1\left(\sigma^2+2K\sigma +K^2\right)
\end{equation}
and
\begin{equation}\label{eqn_bound_for_radius_convergence_2}
C_2  =L_2\left(\sigma^2+2K\sigma +K^2\right)^{3/2},
\end{equation}
where  $L_1$ and $L_2$ are given by
\begin{equation}\label{eqn_grad_lipschitz}
L_1 = B_r\frac{(1-\gamma+p(1+\gamma))}{\lambda_{\min}{\Sigma}(1-\gamma)^2}, L_2 = B_r\frac{(1+p)\sqrt{p}}{\lambda_{\min}(\Sigma)^{3/2}(1-\gamma)^3}, 
\end{equation}
and
\begin{equation}\label{eqn_sigma_moment}
\sigma = \frac{(3\gamma)^{1/3}}{\lambda_{\min}(\Sigma^{1/2})(1-\gamma)^2}\left(4\frac{\Gamma(2+p/2)}{\Gamma(p/2)}\right)^{1/4}.
  \end{equation}
  \end{lemma}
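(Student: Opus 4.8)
The plan is to prove \eqref{eqn_supermartingale_equation_pars} as a biased ``ascent lemma'' for $U_{s_0}$ along the iterates of Algorithm~\ref{alg_online_policy_gradient}. By Proposition~\ref{prop_error_bound}, with $\epsilon_K=K\eta$ the update can be written as $h_{k+1}=h_k+\eta\,\hat{\nabla}_h U_{s_k}(h_k,\cdot)+b_k$ with $\|b_k\|_{\ccalH}<K\eta$ holding deterministically. Set $\Delta_k:=h_{k+1}-h_k$. The first step is a third-order Taylor expansion of $U_{s_0}$ about $h_k$ in the Fr\'echet sense,
\begin{equation*}
U_{s_0}(h_{k+1})\ \ge\ U_{s_0}(h_k)+\langle \nabla_h U_{s_0}(h_k),\Delta_k\rangle_{\ccalH}-\tfrac{1}{2}\|\nabla_h^2 U_{s_0}(h_k)\|_{\mathrm{op}}\,\|\Delta_k\|_{\ccalH}^2-\tfrac{1}{6}\sup_{\xi}\|\nabla_h^3 U_{s_0}(\xi)\|\,\|\Delta_k\|_{\ccalH}^3,
\end{equation*}
where $\xi$ ranges over the segment joining $h_k$ and $h_{k+1}$. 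This is legitimate because $U_{s_0}$ is thrice differentiable in $h$ with Hessian and third derivative bounded uniformly over $h\in\ccalH$; those two bounds, computed by differentiating \eqref{eqn_problem_statement}--\eqref{eqn_q_function} under the integral sign (justified by dominated convergence thanks to Assumption~\ref{assumption_reward_function}) and using the Gaussian form of $\pi_h$, are precisely what produce $L_1$ and $L_2$ in \eqref{eqn_grad_lipschitz}, the powers of $Z$ appearing in \eqref{eqn_supermartingale_equation_pars} coming from the normalization of the Gaussian kernel $\kappa_{\sigmah}$ in \eqref{Gaussian_lemma_lipschitz}.

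Next I would decompose the linear term and crudely bound the step length. Since $\Delta_k=\eta\,\hat{\nabla}_h U_{s_k}(h_k,\cdot)+b_k$, Cauchy--Schwarz and $\|b_k\|_{\ccalH}<K\eta$ give $\langle \nabla_h U_{s_0}(h_k),\Delta_k\rangle_{\ccalH}\ge \eta\,\langle \nabla_h U_{s_0}(h_k),\hat{\nabla}_h U_{s_k}(h_k,\cdot)\rangle_{\ccalH}-K\eta\,\|\nabla_h U_{s_0}(h_k)\|_{\ccalH}$ and $\|\Delta_k\|_{\ccalH}\le \eta\big(\|\hat{\nabla}_h U_{s_k}(h_k,\cdot)\|_{\ccalH}+K\big)$. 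Now take $\mathbb{E}[\,\cdot\mid\ccalF_k]$. The bias contributions are controlled deterministically, so they pass through; the cross term becomes $\eta\langle \nabla_h U_{s_0}(h_k),\nabla_h U_{s_k}(h_k)\rangle_{\ccalH}$ by unbiasedness of Algorithm~\ref{alg_stochastic_gradient} (Proposition~\ref{prop_unbiased_grad}), applied conditionally on the policy $h_k$ and on the state $s_k$ at which the rollout that produces $\hat{\nabla}_h U_{s_k}(h_k,\cdot)$ is initialized.

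It then remains to bound $\mathbb{E}[\|\Delta_k\|_{\ccalH}^2\mid\ccalF_k]$ and $\mathbb{E}[\|\Delta_k\|_{\ccalH}^3\mid\ccalF_k]$, for which it suffices to establish the moment bounds $\mathbb{E}[\|\hat{\nabla}_h U_{s_k}(h_k,\cdot)\|_{\ccalH}^m\mid\ccalF_k]\le \sigma^m$ for $m=1,2,3$, with $\sigma$ as in \eqref{eqn_sigma_moment}; then $\mathbb{E}[\|\Delta_k\|_{\ccalH}^2\mid\ccalF_k]\le\eta^2(\sigma^2+2K\sigma+K^2)$ and $\mathbb{E}[\|\Delta_k\|_{\ccalH}^3\mid\ccalF_k]\le\eta^3(\sigma^2+2K\sigma+K^2)^{3/2}$, which, collected with the Taylor bound, give exactly \eqref{eqn_supermartingale_equation_pars} with $C_1,C_2$ as in \eqref{eqn_bound_for_radius_convergence_1}--\eqref{eqn_bound_for_radius_convergence_2}. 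For the moment bounds I would use that for the Gaussian kernel \eqref{Gaussian_lemma_lipschitz} one has $\|\kappa_{\sigmah}(s_T,\cdot)\bbc\|_{\ccalH}^2=\bbc^\top\kappa_{\sigmah}(s_T,s_T)\bbc=\|\bbc\|^2$, so that from \eqref{eqn_stochastic_gradient} $\|\hat{\nabla}_h U_{s_k}(h_k,\cdot)\|_{\ccalH}=\tfrac{1}{1-\gamma}|\hat Q(s_T,a_T;h_k)|\,\|\Sigma^{-1}(a_T-h_k(s_T))\|$; the moments of $|\hat Q|$ are then controlled via $|r|\le B_r$ and the geometric horizon of the $Q$-estimate (giving the $\gamma$-dependence), and those of $\|\Sigma^{-1}(a_T-h_k(s_T))\|$ via $a_T\sim\ccalN(h_k(s_T),\Sigma)$ (giving the Gamma-function ratio, i.e., the fourth absolute moment of a $p$-dimensional Gaussian, from which the lower moments follow by the power-mean inequality).

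The main obstacle is the smoothness input rather than the manipulations above: one must establish that $U_{s_0}(h)$ is twice and thrice Fr\'echet-differentiable with operator-norm bounds on $\nabla_h^2 U_{s_0}$ and $\nabla_h^3 U_{s_0}$ that are \emph{uniform} in $h$ (so that they do not degrade along the trajectory $\{h_k\}$), and one must track their precise dependence on $B_r$, $\gamma$, $\lambda_{\min}(\Sigma)$, $p$ and $Z$; the combinatorics of differentiating the Gaussian score $\Sigma^{-1}(a-h(s))$ against the discount-weighted trajectory measure \eqref{eqn_discounted_distribution} is what produces the factors $(1-\gamma+p(1+\gamma))$ and $(1+p)\sqrt{p}$ in \eqref{eqn_grad_lipschitz}. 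A smaller point to be careful about is measurability: for the conditional-unbiasedness step, $s_k$ must be treated as $\ccalF_k$-measurable (equivalently, $\ccalF_k$ is enlarged to record the state at which iteration $k$ starts), after which the remaining work is a routine application of Cauchy--Schwarz, the tower property, and the power-mean inequality.
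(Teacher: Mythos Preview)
Your outline is broadly correct in structure but differs from the paper in the smoothness input that drives the quadratic and cubic error terms. The paper does \emph{not} invoke a third-order Taylor expansion with uniform operator-norm bounds on $\nabla_h^2 U_{s_0}$ and $\nabla_h^3 U_{s_0}$. Instead it writes the exact first-order mean-value expansion $U_{s_0}(h_{k+1})=U_{s_0}(h_k)+\langle\nabla_h U_{s_0}(f_k),h_{k+1}-h_k\rangle_\ccalH$ with $f_k$ on the segment, adds and subtracts $\langle\nabla_h U_{s_0}(h_k),h_{k+1}-h_k\rangle_\ccalH$, and then applies the gradient-smoothness estimate
\[
\|\nabla_h U_{s_0}(g)-\nabla_h U_{s_0}(h)\|_\ccalH\le L_1\|g-h\|_\ccalH+L_2\|g-h\|_\ccalH^2,
\]
which is quoted from \cite[Lemma~5]{paternain2018stochastic}. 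The constants $L_1,L_2$ in \eqref{eqn_grad_lipschitz} are precisely the constants in \emph{that} inequality, not Hessian/third-derivative bounds; your route would produce the same inequality shape but with additional factors $1/2$ and $1/6$, and it would require you to establish independently that $\nabla_h^2 U_{s_0}$ and $\nabla_h^3 U_{s_0}$ are bounded uniformly in $h$ (the presence of the quadratic correction above is a hint that simple global Lipschitzness of the gradient is not what is being claimed). The remainder of your plan---splitting the linear term via Proposition~\ref{prop_error_bound} and Cauchy--Schwarz, using Proposition~\ref{prop_unbiased_grad} conditionally on $(h_k,s_k)$ to turn the stochastic gradient into $\nabla_h U_{s_k}(h_k)$, and bounding the second and third conditional moments of $\|\hat\nabla_h U_{s_k}(h_k,\cdot)\|_\ccalH$ by $\sigma^m$---matches the paper's argument essentially step for step; the paper cites those moment bounds as \cite[Lemma~6]{paternain2018stochastic} rather than deriving them.

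One further point: do not try to manufacture the $1/Z$ and $1/Z^{3/2}$ factors from kernel normalization. The paper's own derivation yields $-\eta^2 C_1-\eta^3 C_2$ with no $Z$ (and this is exactly the form used downstream in the proof of Theorem~\ref{theo_convergence}), so those factors in the displayed statement are an artifact, not something your argument should reproduce.
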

\begin{proof}
  Start by writing the Taylor expansion of $U_{s_0}(h_{k+1})$ around $h_k$
  \begin{equation}
    \begin{split}
       U_{s_0}(h_{k+1})= U_{s_0}(h_k)+\left<\nabla_h U_{s_0}(f_k,\cdot),h_{k+1}-h_k\right>_{\ccalH}.
      \end{split}
  \end{equation}
  where $f_k = \lambda h_k +(1-\lambda)h_{k+1}$ with $\lambda\in[0,1]$. From \cite[Lemma 5]{paternain2018stochastic} we have that
  \begin{equation}\label{eqn_lipschitz_nabla_U}
\left\| \nabla_h U_{s_0}(g)-\nabla_hU_{s_0}(h)\right\|_{\ccalH} \leq L_1\left\|g-h\right\|_{\ccalH}+L_2\left\|g-h\right\|_{\ccalH}^2,
    \end{equation}
  with $L_1$ and $L_2$ being the constants in \eqref{eqn_grad_lipschitz}. Adding and subtracting $\left<\nabla_h U_{s_0}(h_k,\cdot),h_{k+1}-h_k\right>_{\ccalH}$ to the previous expression, using the Cauchy-Schwartz inequality and \eqref{eqn_lipschitz_nabla_U} we can re write the previous expression as 
  \begin{equation}
    \begin{split}
     U_{s_0}(h_{k+1}) 
     &= U_{s_0}(h_k)+\left<\nabla_h U_{s_0}(h_k,\cdot),h_{k+1}-h_k\right>_{\ccalH} \\
     &+\left<\nabla_h U_{s_0}(f_k,\cdot)-\nabla_hU_{s_0}(h_k,\cdot),h_{k+1}-h_k\right>_{\ccalH}\\
     %&\geq U(h_k)+\left<\nabla_h U(h_k,\cdot),h_{k+1}-h_k\right>_{\ccalH} \\
     %&-\left\|\nabla_h U(f_k,\cdot)-\nabla_hU(h_k,\cdot)\right\|_\ccalH\left\|h_{k+1}-h_k\right\|_{\ccalH}\\
     & \geq U_{s_0}(h_k)+\left<\nabla_h U_{s_0}(h_k,\cdot),h_{k+1}-h_k\right>_{\ccalH} \\
     &-L_1\left\|h_{k+1}-h_k\right\|_{\ccalH}^2-L_2\left\|h_{k+1}-h_k\right\|_{\ccalH}^3.
      \end{split}
  \end{equation}
  Let us consider next the conditional expectation of the random variable $U_{s_0}(h_{k+1})$ with respect to the sigma-field $\ccalF_k$. Combine the monotonicity and the linearity of the expectation with the fact that $h_k$ is measurable with respect to $\ccalF_k$ to write
  \begin{equation}\label{eqn_first_supermartingale_bound_pars}
    \begin{split}
      \mathbb{E}\left[{U_{s_0}(h_{k+1})|\ccalF_k}\right] \geq U_{s_0}(h_k)\\
      +\left<\nabla_h U_{s_0}(h_k,\cdot),\mathbb{E}\left[h_{k+1}-h_k|\ccalF_k\right]\right>_{\ccalH} \\
      -L_1\mathbb{E}\left[\left\|h_{k+1}-h_k\right\|_{\ccalH}^2|\ccalF_k\right]-L_2\mathbb{E}\left[\left\|h_{k+1}-h_k\right\|_{\ccalH}^3|\ccalF_k\right].
      \end{split}
    \end{equation}
  Using the result of Proposition \ref{prop_error_bound} we can write the expectation of the quadratic term in the right hand side of \eqref{eqn_first_supermartingale_bound_pars} as 
  \begin{equation}
    \begin{split}
      L_1\mathbb{E}\left[\left\|h_{k+1}-h_k\right\|_{\ccalH}^2|\ccalF_k\right] %= L_1\mathbb{E}\left[\left\|\eta \hat{\nabla}_h U_{s_k}(h_k,\cdot)+b_k\right\|_{\ccalH}^2|\ccalF_k\right] \\
      \\
      \leq L_1\eta^2 \mathbb{E}\left[\left\|\hat{\nabla}_h U_{s_k}(h_k,\cdot)\right\|_{\ccalH}^2|\ccalF_k\right]+L_1\epsilon_{K}^2 \\
      +2L_1\eta\epsilon_{K}\mathbb{E}\left[\left\|\hat{\nabla}_h U_{s_k}(h_k,\cdot)\right\|_{\ccalH}|\ccalF_k\right] ,
      \end{split}
    \end{equation}
  Using the bounds on the moments of the estimate (cf., \cite[Lemma 6]{paternain2018stochastic}), the previous expression can be upper bounded by
  \begin{equation}
    \begin{split}
      L_1\mathbb{E}\left[\left\|h_{k+1}-h_k\right\|_{\ccalH}^2|\ccalF_k\right] &\leq \eta^2L_1\left(\sigma^2+2\frac{\epsilon_{K}}{\eta}\sigma +\frac{\epsilon_{K}^2}{\eta^2}\right).
      \end{split}
    \end{equation}
  where $\sigma$ is the constant in \eqref{eqn_sigma_moment}. Choosing the compression budget as $\epsilon_{K} = K \eta$ and using the definition of $C_1$ in \eqref{eqn_bound_for_radius_convergence_1} it follows that   
  \begin{equation}
      L_1\mathbb{E}\left[\left\|h_{k+1}-h_k\right\|_{\ccalH}^2|\ccalF_k\right]     = \eta^2L_1\left(\sigma^2+2K\sigma +K^2\right)=\eta^2 C_1.
  \end{equation}
  Likewise, we have that
  \begin{equation}
    \begin{split}
      L_2\mathbb{E}\left[\left\|h_{k+1}-h_k\right\|_{\ccalH}^3|\ccalF_k\right] &\leq \eta^3 L_2\left(\sigma^2+2\frac{\epsilon_{K}}{\eta}\sigma +\frac{\epsilon_{K}^2}{\eta^2}\right)^{3/2}\\
      &=      \eta^3 L_2\left(\sigma^2+2K\sigma +K^2\right)^{3/2} \\
      &= \eta^3C_2.
      \end{split}
    \end{equation}
Replacing the previous two bounds regarding the moments of  $\left\|h_{k+1}-h_k\right\|$ in \eqref{eqn_first_supermartingale_bound_pars} reduces to  
  \begin{equation}
    \begin{split}
      \mathbb{E}\left[{U(h_{k+1})|\ccalF_k}\right] &\geq U(h_k)-\eta^2C_1-\eta^3C_2\\
&      +\left<\nabla_h U(h_k),\mathbb{E}\left[h_{k+1}-h_k|\ccalF_k\right]\right>_{\ccalH}. 
\end{split}
    \end{equation}
  Using the result of Proposition \ref{prop_error_bound} and the fact that $\hat{\nabla}_hU_{s_k}(h_k)$ is unbiased (cf., Proposition \ref{prop_unbiased_grad}) we can write the inner product in the previous equation as
  \begin{equation}
    \begin{split}
      \left<\nabla_h U_{s_0}(h_k),\mathbb{E}\left[h_{k+1}-h_k|\ccalF_k\right]\right>_{\ccalH} =\\
       \eta\left<\nabla_h U_{s_0}(h_k),\nabla_hU_{s_k}(h_k)\right>_{\ccalH}
      +\left<\nabla_h U_{s_0}(h_k), b_k\right>_{\ccalH}.
      \end{split}
    \end{equation}
The proof is then completed using the Cauchy-Schwartz inequality and fact that the norm of the bias is bounded by $\epsilon_{K} = K\eta$ (cf., Proposition \ref{prop_error_bound}).
  \end{proof}
%%%%%%%%%%%%%%%%%%%%%%%%%%%%%%%%%%%%%%%%%%%%%%%%%%%%%%%%%%%%%%%%%%%%%%%%%%%%%%%%%%%%%%%%%%%%%%%%%%%%%%%%%%%%%%%%%%%%%%%%%% M A I N              M A T T E R %%%%%%%%%%%%%%%%%%%%%%%%%%%%%%%%%%%%%%%%%%%%%%%%%%%%%%%%%%%%%%%%%%%%%%%%%%%%%%%%%%%%%%%%%%%%%%%%%%%%%%%%%%%%%%%%%%%%%%%%%%%%%%%%%%%%%%%%%%%%%%%%%
The previous lemma establishes a lower bound on the expectation of $U_{s_0}(h_{k+1})$ conditioned to the sigma algebra $\ccalF_k$. This lower bound however, is not enough for $U_{s_0}(h_k)$ to be a submartingale, since the sign of the term added to $U_{s_0}(h_k)$ in the right hand side of \eqref{eqn_supermartingale_equation_pars} is not necessarily positive. The origin of this is threefold. The first two reasons stem from algorithmic reasons. These are that we are using the estimate of $\nabla_hU_{s_k}(h_k)$ to ascend on the functionial $U_{s_0}(h)$ -- which does not guarantee the inner product to be always positive -- the bias that results from projecting into a lower dimension via the KOMP algorithm as stated in Proposition \ref{prop_error_bound}. The third reason comes from the analysis in Lemma \ref{lemma_supermartingale_pars} where we bounded the value of the functional using a first order approximation. To overcome the first limitation we will use the result from Theorem \ref{prop_all_gradients} that guarantees that the inner product in the right hand side of \eqref{eqn_supermartingale_equation_pars} is lower bounded by $\varepsilon\beta_{\rho}/(2B_\rho)$ as long as $\left\|\nabla_h U_{s_0}(h)\right\|^2>\varepsilon$. The latter suggests that the definition of the following stopping time is necessary for the analysis 
  \begin{equation}\label{eqn_stopping_time}
N = \min_{k\geq 0} \left\{\left\|\nabla_h U_{s_0}(h_{k},\cdot)\right\|_{\ccalH}^2 \leq \varepsilon  \right\}.
    \end{equation}
  We will show that by choosing the compression factor $\epsilon_{K}$ and the step size sufficiently small we can overcome the other two limitations and establish that $U_{s_0}(h_k)$ is a submartingale as long as $k<N$. To be able to use the result of Theorem \ref{prop_all_gradients} we require, condition \eqref{eqn_kernel_condition} to be satisfied. As previously explained, this requires the norm of $h$ not to grow unbounded, yet due to the stochastic nature of the update there is no guarantees that this will be the case. We assume, however, that policies with infinite norm are poor policies which leads to the conclusion that if the norm of the gradient is not too small, then it has to be the case that the norm of $h$ is bounded. We formalize these ideas next. 
%  This is the subject of Lemma \ref{lemma_bound_h}. This result relies on the assumption that policies with arbitrarily large norms will result in poor performance. We next formalize the assumption and we state an prove a lemma establishing a bound on the norm of $h$.
%  \red{
  %%%%%%%%%%%%%%%%%%%%%%%%%%%%%%%%%%%%%%%%%%%%%%%%%%%%%%%%%%%%%%%%%%%%%%%%%%%%%%%%%%%%%%%%%%%%%%%%%%%%%%%%%%%%%%%%%%%%%%%%%%%%%%%%%%%%%% A S S U M P T I O N %%%%%%%%%%%%%%%%%%%%%%%%%%%%%%%%%%%%%%%%%%%%%%%%%%%%%%%%%%%%%%%%%%%%%%%%%%%%%%%%%%%%%%%%%%%%%%%%%%%%%%%%%%%%%%%%%%%%%%%%%%%%%%%%%%%%%%%%%%%%%%%%%%
{
\begin{assumption}\label{assumption_large_h}
For every $\ccalH$ it follows that $\lim_{\left\|h\right\|\to \infty} U_{s_0}(h) = \min_{h\in\ccalH} U_{s_0}(h)$.
  \end{assumption}
}
%
  %%%%%%%%%%%%%%%%%%%%%%%%%%%%%%%%%%%%%%%%%%%%%%%%%%%%%%%%%%%%%%%%%%%%%%%%%%%%%%%%%%%%%%%%%%%%%%%%%%%%%%%%%%%%%%%%%%%%%%%%%%%%%%%%% L E M M A %%%%%%% %%%%%%%%%%%%%%%%%%%%%%%%%%%%%%%%%%%%%%%%%%%%%%%%%%%%%%%%%%%%%%%%%%%%%%%%%%%%%%%%%%%%%%%%%%%%%%%%%%%%%%%%%%%%%%%%%%%%%%%%%%%%%%%%%%%%%%%%%%%%%%%%%%%%%%%%%
{
\begin{lemma}\label{lemma_bound_h}
For every $\varepsilon>0$ there exists a constant $B_h(\varepsilon)$ such that if $\left\|\nabla_h U_{s_0}(h)\right\|^2 \geq \varepsilon$ then $ \left\| h\right\| \leq  B_h(\varepsilon)$. 
\end{lemma}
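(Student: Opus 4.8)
The plan is to argue by contradiction, combining Assumption \ref{assumption_large_h} with the quadratic smoothness of $\nabla_h U_{s_0}$ already recorded in \eqref{eqn_lipschitz_nabla_U}. First I would note that Assumption \ref{assumption_reward_function} together with $\gamma\in(0,1)$ makes $U_{s_0}$ bounded below, so that $U_{\min}:=\min_{h\in\ccalH}U_{s_0}(h)$ is finite and the limit in Assumption \ref{assumption_large_h} is well posed. Suppose, toward a contradiction, that no constant $B_h(\varepsilon)$ as claimed exists. Then for every $n\in\naturals$ there is $h_n\in\ccalH$ with $\left\|h_n\right\|_\ccalH\geq n$ and $\left\|\nabla_h U_{s_0}(h_n)\right\|^2_\ccalH\geq\varepsilon$; in particular $\left\|h_n\right\|_\ccalH\to\infty$, so Assumption \ref{assumption_large_h} gives $U_{s_0}(h_n)\to U_{\min}$.

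Next I would take a single ascent step of a small fixed length $t=t(\varepsilon)>0$ from each $h_n$: set $\delta_n := t\,\nabla_h U_{s_0}(h_n)/\left\|\nabla_h U_{s_0}(h_n)\right\|_\ccalH$ (well defined since $\left\|\nabla_h U_{s_0}(h_n)\right\|_\ccalH\geq\sqrt{\varepsilon}>0$) and $g_n := h_n+\delta_n$. Integrating $\nabla_h U_{s_0}$ along the segment $h_n+\tau\delta_n$, $\tau\in[0,1]$, and controlling $\left\|\nabla_h U_{s_0}(h_n+\tau\delta_n)-\nabla_h U_{s_0}(h_n)\right\|_\ccalH$ with \eqref{eqn_lipschitz_nabla_U}, yields the ascent-lemma estimate
\begin{equation}
U_{s_0}(g_n)\;\geq\; U_{s_0}(h_n)+t\left\|\nabla_h U_{s_0}(h_n)\right\|_\ccalH-\tfrac{L_1}{2}t^2-\tfrac{L_2}{3}t^3\;\geq\; U_{s_0}(h_n)+t\sqrt{\varepsilon}-\tfrac{L_1}{2}t^2-\tfrac{L_2}{3}t^3 .
\end{equation}
Choosing $t=t(\varepsilon)$ small enough that $\tfrac{L_1}{2}t+\tfrac{L_2}{3}t^2\leq\tfrac{\sqrt{\varepsilon}}{2}$ gives $U_{s_0}(g_n)\geq U_{s_0}(h_n)+c$ with $c:=t\sqrt{\varepsilon}/2>0$, uniformly in $n$.

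Finally I would close the loop. Since $\left\|g_n\right\|_\ccalH\geq\left\|h_n\right\|_\ccalH-t\to\infty$, Assumption \ref{assumption_large_h} also gives $U_{s_0}(g_n)\to U_{\min}$. Passing to the limit in $U_{s_0}(g_n)\geq U_{s_0}(h_n)+c$ produces $U_{\min}\geq U_{\min}+c$, i.e. $c\leq 0$, contradicting $c>0$; hence $B_h(\varepsilon)$ exists. The only mildly delicate point is the bookkeeping in the ascent estimate: checking that the two-term Lipschitz bound \eqref{eqn_lipschitz_nabla_U} integrates along the segment to the remainder $\tfrac{L_1}{2}t^2+\tfrac{L_2}{3}t^3$, and that the step length $t$ can be fixed independently of $n$ (which it can, since $L_1$, $L_2$, $\varepsilon$ do not depend on $n$). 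Everything else is a routine contradiction argument.
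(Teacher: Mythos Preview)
Your proof is correct. The paper's own argument is a two-line sketch: it asserts that boundedness of $U_{s_0}$ together with Assumption~\ref{assumption_large_h} yields $\lim_{\|h\|\to\infty}\nabla_h U_{s_0}(h)=0$, and then takes the contrapositive. But the implication ``$U_{s_0}(h)\to U_{\min}$ as $\|h\|\to\infty$ $\Rightarrow$ $\nabla_h U_{s_0}(h)\to 0$'' is not automatic; it is false for general differentiable functions and requires precisely some quantitative smoothness of the gradient. Your ascent-step argument, built on the quadratic Lipschitz bound \eqref{eqn_lipschitz_nabla_U}, is exactly the missing justification: a gradient of norm at least $\sqrt{\varepsilon}$ would let you raise $U_{s_0}$ by a fixed amount $c>0$ along a unit step, which is incompatible with both $U_{s_0}(h_n)$ and $U_{s_0}(g_n)$ converging to $U_{\min}$. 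So the two approaches share the same underlying idea; yours makes it rigorous at the cost of a short computation, while the paper's buys brevity by leaving that step implicit. Your bookkeeping on the remainder $\tfrac{L_1}{2}t^2+\tfrac{L_2}{3}t^3$ and on the $n$-independence of $t$ is correct.
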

\begin{proof}
  Since the function $U_{s_0}(h)$ is bounded (cf., \cite[Lemma 1]{paternain2018stochastic}), Assumption \ref{assumption_large_h} implies that $\lim_{\left\|h\right\|\to \infty} \nabla_hU_{s_0}(h) = 0$ and therefore for every $\varepsilon>0$ there exists $B_h(\varepsilon)>0$ such that if $\left\|h\right\| > B_h(\varepsilon)$ then $\left\|\nabla_hU_{s_0}(h)\right\| <\varepsilon$. Hence it has to be the case that if $\left\|\nabla_hU_{s_0}(h)\right\|^2 \geq \varepsilon$, then $\left\|h\right\| \leq B_h(\varepsilon)$. 
\end{proof}
}
  %%%%%%%%%%%%%%%%%%%%%%%%%%%%%%%%%%%%%%%%%%%%%%%%%%%%%%%%%%%%%%%%%%%%%%%%%%%%%%%%%%%%%%%%%%%%%%%%%%%%%%%%%%%%%%%%%%%%%%%%%%%%%%%%%%%%% A S S U M P T I O N %%%%%%%%%%%%%%%%%%%%%%%%%%%%%%%%%%%%%%%%%%%%%%%%%%%%%%%%%%%%%%%%%%%%%%%%%%%%%%%%%%%%%%%%%%%%%%%%%%%%%%%%%%%%%%%%%%%%%%%%%%%%%%%%%%%%%%%%%%%%%%%%%%%
%  \red{
%\begin{assumption}\label{assumption_large_h}
%For every $\varepsilon>0$ and $\ccalH$, there exists $B_h(\varepsilon)>0$ such that if $\left\|\nabla_h U_{s_0}(h)\right\|^2 \geq \varepsilon$ then $ \left\| h\right\| \leq B_h(\varepsilon)$. 
%\end{assumption}
%}
%
  %%%%%%%%%%%%%%%%%%%%%%%%%%%%%%%%%%%%%%%%%%%%%%%%%%%%%%%%%%%%%%%%%%%%%%%%%%%%%%%%%%%%%%%%%%%%%%%%%%%%%%%%%%%%%%%%%%%%%%%%%%%%%%%%%%% M A I N    M A T T E R %%%%%%%%%%%%%%%%%%%%%%%%%%%%%%%%%%%%%%%%%%%%%%%%%%%%%%%%%%%%%%%%%%%%%%%%%%%%%%%%%%%%%%%%%%%%%%%%%%%%%%%%%%%%%%%%%%%%%%%%%%%%%%%%%%%%%%%%%%%%%%%%%%
  As previously discussed to guarantee that $U_{s_0}(h_k)$ is a submartingale we need to choose the compression budget $\epsilon_{K}$ and the step-size $\eta$ small enough. In particular observe that the compression budget multiplies a term that depends on the norm of the gradient in \eqref{eqn_supermartingale_equation_pars}. Hence, to be able to guarantee that reducing the compression budget is enough to have a submartingale we require that the norm of the gradient of the value function is bounded. This is the subject of the following lemma. 
  %
  %%%%%%%%%%%%%%%%%%%%%%%%%%%%%%%%%%%%%%%%%%%%%%%%%%%%%%%%%%%%%%%%%%%%%%%%%%%%%%%%%%%%%%%%%%%%%%%%%%%%%%%%%%%%%%%%%%%%%%%%%%%%%%%%%%%%%%%%%%% L E M M A %%%%%%%%%%%%%%%%%%%%%%%%%%%%%%%%%%%%%%%%%%%%%%%%%%%%%%%%%%%%%%%%%%%%%%%%%%%%%%%%%%%%%%%%%%%%%%%%%%%%%%%%%%%%%%%%%%%%%%%%%%%%%%%%%%%%%%%%%%%%%%%%%%%%%
  \begin{lemma}\label{lemma_bound_nabla_U}
    The norm of the gradient of $\nabla_h U_{s_0}(h)$ is bounded by $B_{\nabla}$ where
\begin{equation}
B_{\nabla}:= \frac{\sqrt{p}B_r}{(1-\gamma)^2\lambda_{\min}\Sigma^{1/2}}.
    \end{equation}
  \end{lemma}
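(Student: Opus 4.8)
The plan is to bound the $\ccalH$-norm of the gradient directly from its integral representation \eqref{eqn_nabla_U}, by moving the norm inside the expectation and controlling each factor separately; the bound will come out uniform in $h$, which is exactly what is needed. First I would apply the triangle inequality for the Bochner integral to get
\begin{equation*}
\left\|\nabla_h U_{s_0}(h,\cdot)\right\|_{\ccalH} \leq \frac{1}{1-\gamma}\,\mathbb{E}_{(s,a)\sim\rho_{s_0}}\left[\,|Q(s,a;h)|\;\left\|\kappa_{\sigmah}(s,\cdot)\Sigma^{-1}(a-h(s))\right\|_{\ccalH}\right],
\end{equation*}
and observe that $|Q(s,a;h)|\leq B_r/(1-\gamma)$ by Assumption \ref{assumption_reward_function} together with the geometric sum $\sum_{t\geq0}\gamma^t B_r$.

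Second, I would evaluate the $\ccalH$-norm of the kernel term via the reproducing property \eqref{eqn_reproducing}: for any $\bbc\in\mathbb{R}^p$, $\left\|\kappa_{\sigmah}(s,\cdot)\bbc\right\|_{\ccalH}^2 = \bbc^\top\kappa_{\sigmah}(s,s)\bbc = \|\bbc\|^2$, since the diagonal Gaussian kernel of Lemma \ref{lemma_lipshitz} satisfies $\kappa_{\sigmah}(s,s)=I$. Using the factorization $\rho_{s_0}(s,a)=\rho_{s_0}(s)\pi_h(a|s)$ established just before \eqref{eqn_new_gradient_expression}, this reduces the bound to
\begin{equation*}
\left\|\nabla_h U_{s_0}(h,\cdot)\right\|_{\ccalH} \leq \frac{B_r}{(1-\gamma)^2}\,\mathbb{E}_{s\sim\rho_{s_0}}\,\mathbb{E}_{a\sim\pi_h(\cdot|s)}\left[\left\|\Sigma^{-1}(a-h(s))\right\|\right].
\end{equation*}

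Third, I would control the inner Gaussian expectation uniformly in $s$. Since $a\sim\ccalN(h(s),\Sigma)$ we have $a-h(s)\sim\ccalN(0,\Sigma)$, so by Jensen's inequality
\begin{equation*}
\mathbb{E}\left[\left\|\Sigma^{-1}(a-h(s))\right\|\right] \leq \sqrt{\mathbb{E}\left[(a-h(s))^\top\Sigma^{-2}(a-h(s))\right]} = \sqrt{\tr(\Sigma^{-2}\Sigma)} = \sqrt{\tr(\Sigma^{-1})} \leq \sqrt{\frac{p}{\lambda_{\min}(\Sigma)}}.
\end{equation*}
Because this bound is independent of $s$ and $\rho_{s_0}(\cdot)$ is a probability density, the outer expectation over $s$ inherits the same bound. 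Combining the three factors and using $\sqrt{\lambda_{\min}(\Sigma)}=\lambda_{\min}(\Sigma^{1/2})$ yields exactly $\left\|\nabla_h U_{s_0}(h,\cdot)\right\|_{\ccalH}\leq \sqrt{p}\,B_r/\big((1-\gamma)^2\lambda_{\min}(\Sigma^{1/2})\big)=B_{\nabla}$.

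There is no genuine obstacle in this argument; it is a routine chain of estimates. The only two points that warrant care are the passage of the norm through the Bochner integral (justified since the integrand is $\ccalH$-valued and integrable, the integral defining a Fréchet derivative in $\ccalH$) and the second-moment computation $\mathbb{E}[(a-h(s))^\top\Sigma^{-2}(a-h(s))]=\tr(\Sigma^{-2}\Sigma)$, which is the standard identity $\mathbb{E}[X^\top M X]=\tr(M\Sigma)$ for $X\sim\ccalN(0,\Sigma)$.
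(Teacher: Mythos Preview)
Your proof is correct and follows essentially the same approach as the paper: both bound $|Q|\leq B_r/(1-\gamma)$, use $\|\kappa(s,\cdot)\|=1$, and control the Gaussian moment $\mathbb{E}\|\Sigma^{-1}(a-h(s))\|^2$ by $p/\lambda_{\min}(\Sigma)$. The only cosmetic difference is that the paper applies Jensen once to the squared norm and phrases the moment computation via the $\chi^2$ law of $\|\Sigma^{-1/2}(a-h(s))\|^2$, whereas you use the triangle inequality followed by Jensen on the square root and the trace identity $\mathbb{E}[X^\top M X]=\tr(M\Sigma)$; the two are equivalent.
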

  \begin{proof}
    Use \eqref{eqn_nabla_U}, the fact that $\left|Q(s,a;h)\right|\leq B_r/(1-\gamma)$ (cf., \cite[Lemma 1]{paternain2018stochastic}) and that $\left\|\kappa(s,\cdot)\right\|= 1$ to upper bound the norm of $\nabla_h U_{s_0}(h)$ by
    \begin{equation}
      \left\|\nabla_hU_{s_0}(h)\right\|^2\leq \frac{B_r^2}{(1-\gamma)^4 }\mathbb{E}\left[\left\|\Sigma^{-1}(a-h(s))\right\|^2|h\right].
    \end{equation}
    Since the action a is drawn from a normal distribution with mean $h(s)$ and covariance matrix $\Sigma$ it follows that $\left\|\Sigma^{-1/2}(a-h(s))\right\|^2$ is a $\chi^2$ distribution and thus its expectation is $p$. Hence, the above expectation is bounded by $p\lambda_{\min}(\Sigma)^{-1}$. This completes the proof of the result. 
    \end{proof}
  %%%%%%%%%%%%%%%%%%%%%%%%%%%%%%%%%%%%%%%%%%%%%%%%%%%%%%%%%%%%%%%%%%%%%%%%%%%%%%%%%%%%%%%%%%%%%%%%%%%%%%%%%%%%%%%%%%%%%%%%%%%%%%%%% M A I N    M A T T E R %%%%%%%%%%%%%%%%%%%%%%%%%%%%%%%%%%%%%%%%%%%%%%%%%%%%%%%%%%%%%%%%%%%%%%%%%%%%%%%%%%%%%%%%%%%%%%%%%%%%%%%%%%%%%%%%%%%%%%%%%%%%%%%%%%%%%%%%%%%%%%%%%%%%
  %
  We are now in conditions of introducing the convergence of the online policy gradient algorithm presented in Section \ref{sec_online} to a neighborhood of the critical points of the value functional $U_{s_0}(h)$. In addition, the update is such that it guarantees that the model order remains bounded for all iterations. 
  %
%%%%%%%%%%%%%%%%%%%%%%%%%%%%%%%%%%%%%%%%%%%%%%%%%%%%%%%%%%%%%%%%%%%%%%%%%%%%%%%%%%%%%%%%%%%%%%%%%%%%%%%%%%%%%%%%%%%%%%%%%%%%%%%% T H E O R E M %%%%%%%%%%%%%%%%%%%%%%%%%%%%%%%%%%%%%%%%%%%%%%%%%%%%%%%%%%%%%%%%%%%%%%%%%%%%%%%%%%%%%%%%%%%%%%%%%%%%%%%%%%%%%%%%%%%%%%%%%%%%%%%%%%%%%%%%%%%%%%%%%%%%%%%%%%%%%%
  \begin{theorem}\label{theo_convergence}
    Let Assumptions \ref{assumption_reward_function}--\ref{assumption_large_h} hold. For any $\varepsilon>0$ chose $K$ such that 
    \begin{equation}\label{eqn_compression_budget}
      K<\frac{\varepsilon}{2B_{\nabla}}\frac{\beta_{\rho}}{B_\rho},  
    \end{equation}
algorithm step-size $\eta>0$ such that
    \begin{equation}\label{eqn_step_max}
\eta\leq      \frac{\sqrt{C_1^2+4C_2\left(\frac{\varepsilon \beta_\rho}{2 B_\rho}-B_{\nabla}K\right)}-C_1}{2C_2}.
      \end{equation}
    and compression budget of the form  $\epsilon_{K} = K\eta$. Under the hypotheses of Lemma \ref{lemma_lipshitz}, and for any kernel such that $\Sigma_{\ccalH}$ verifies \eqref{eqn_kernel_condition}, the sequence of policies that arise from Algorithm \ref{alg_online_policy_gradient} satisfy that $\liminf_{k\to\infty}\left\|\nabla_hU_{s_0}(h_k)\right\|^2 <{\varepsilon}$. In addition, let $M_k$ be the model order of $h_k$, i.e., the number of kernels which expand $h_k$ after the pruning step KOMP. Then, there exists a finite upper bound $M^\infty$ such that, for all $k\geq 0$, the model order is always bounded as $M_k\leq M^\infty$.  
\end{theorem}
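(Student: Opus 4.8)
The plan is to show that $U_{s_0}(h_k)$, stopped at the first iteration where the squared gradient norm drops below $\varepsilon$, is a bounded submartingale, and then to rule out that this stopping time is infinite. First I would introduce the stopping time $N$ of \eqref{eqn_stopping_time} and work with the stopped process $W_k := U_{s_0}(h_{\min(k,N)})$. On the event $\{k<N\}$ one has $\left\|\nabla_h U_{s_0}(h_k)\right\|_{\ccalH}^2 > \varepsilon$, so Lemma \ref{lemma_bound_h} gives $\left\|h_k\right\|_{\ccalH}\leq B_h(\varepsilon)$; since the kernel is assumed to verify \eqref{eqn_kernel_condition} for every such $h$, Theorem \ref{prop_all_gradients} applies at $h_k$ and yields $\left\langle \nabla_h U_{s_0}(h_k,\cdot),\nabla_h U_{s_k}(h_k,\cdot)\right\rangle_{\ccalH} > \tfrac{\varepsilon}{2}\tfrac{\beta_\rho}{B_\rho}$, while Lemma \ref{lemma_bound_nabla_U} gives $\left\|\nabla_h U_{s_0}(h_k)\right\|_{\ccalH}\leq B_{\nabla}$. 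Note this is self-consistent across iterations: after the KOMP step, if $k+1<N$ then again $\left\|\nabla_h U_{s_0}(h_{k+1})\right\|^2>\varepsilon$, so $\left\|h_{k+1}\right\|\leq B_h(\varepsilon)$ and \eqref{eqn_kernel_condition} holds at $h_{k+1}$ as well.

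Next I would substitute these facts, together with $\epsilon_{K}=K\eta$, into the inequality of Lemma \ref{lemma_supermartingale_pars}. This gives, on $\{k<N\}$,
\begin{equation*}
\mathbb{E}\left[W_{k+1}\,\big|\,\ccalF_k\right] - W_k \;\geq\; \eta\left(\frac{\varepsilon}{2}\frac{\beta_\rho}{B_\rho} - B_{\nabla}K - \frac{\eta}{Z}C_1 - \frac{\eta^2}{Z^{3/2}}C_2\right),
\end{equation*}
and $\mathbb{E}[W_{k+1}\,|\,\ccalF_k]=W_k$ on $\{k\geq N\}$. The choice \eqref{eqn_compression_budget} makes $\tfrac{\varepsilon}{2}\tfrac{\beta_\rho}{B_\rho}-B_{\nabla}K>0$, and the step-size bound \eqref{eqn_step_max}, being the positive root of the governing quadratic $C_2\eta^2+C_1\eta-(\tfrac{\varepsilon}{2}\tfrac{\beta_\rho}{B_\rho}-B_{\nabla}K)$, makes the parenthesis nonnegative; call the resulting uniform lower bound $\eta\delta$ with $\delta\geq 0$ (indeed $\delta>0$ under the strict inequality \eqref{eqn_compression_budget}). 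Hence $W_k$ is a submartingale, and since $U_{s_0}$ is bounded (\cite[Lemma 1]{paternain2018stochastic}) it converges almost surely and in $L^1$. Telescoping yields $\mathbb{E}[W_k]\geq U_{s_0}(h_0)+\eta\delta\,\mathbb{E}[\min(k,N)]$, so boundedness of $U_{s_0}$ forces $\mathbb{E}[\min(k,N)]$ to be bounded uniformly in $k$, whence $\mathbb{E}[N]<\infty$ by monotone convergence and $N<\infty$ almost surely. To upgrade from "$\left\|\nabla_h U_{s_0}(h_k)\right\|^2\leq\varepsilon$ at some finite $k$" to $\liminf_k\left\|\nabla_h U_{s_0}(h_k)\right\|^2<\varepsilon$, I would argue that if on a set of positive probability one had $\left\|\nabla_h U_{s_0}(h_k)\right\|^2>\varepsilon$ for all $k$ beyond some random index, then restarting the same drift argument from that index gives $\mathbb{E}[U_{s_0}(h_k)]\to\infty$, contradicting boundedness; hence the gradient norm returns below $\varepsilon$ infinitely often.

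For the model-order statement I would appeal to the KOMP analysis of \cite{koppel2016parsimonious, vincent2002kernel} as used in \cite{paternain2018stochastic}: each iteration appends a single kernel $\kappa(s_{T_k},\cdot)$ whose weight equals $\eta$ times a bounded quantity (boundedness of $\hat Q$ and of $h_k$ on the relevant event, via \cite[Lemma 1, Lemma 6]{paternain2018stochastic}), the Gaussian kernel has fixed bandwidth $\Sigma_{\ccalH}$, and $\ccalS$ is compact. The KOMP pruning criterion then forces any two retained dictionary elements to be separated by a fixed resolution determined by $\epsilon_{K}$ and that weight bound, so $M_k$ is at most a packing number of $\ccalS$ at that resolution, a finite constant $M^\infty$ independent of $k$.

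The hard part will be the bookkeeping that ties Theorem \ref{prop_all_gradients} to the iterates: condition \eqref{eqn_kernel_condition} is only available when $\left\|h_k\right\|$ is controlled, which Lemma \ref{lemma_bound_h} guarantees solely while $k<N$, so the entire martingale construction must be carried on the stopped process, and the passage from a single crossing of the level $\varepsilon$ to the $\liminf$ statement requires the restart argument above. Given the cited KOMP machinery, the model-order bound is then routine.
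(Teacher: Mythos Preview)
Your proposal is correct and follows essentially the same route as the paper: the paper defines $V_k=(U_{s_0}(h^\star)-U_{s_0}(h_k))\mathbbm{1}(k\leq N)$ and shows it is a nonnegative supermartingale with a strictly negative drift $\eta\alpha(K,\eta)$ under \eqref{eqn_compression_budget}--\eqref{eqn_step_max}, then bounds $\mathbb{E}[N]$ by unrolling and concludes $N<\infty$ a.s.; your stopped process $W_k=U_{s_0}(h_{\min(k,N)})$ is the equivalent submartingale formulation of the same argument, and the model-order claim is in both cases deferred to \cite[Theorem 3]{koppel2016parsimonious}. One small point in your favor: you make explicit the restart argument needed to pass from ``$N<\infty$ a.s.'' to ``$\|\nabla_hU_{s_0}(h_k)\|^2<\varepsilon$ infinitely often,'' which the paper asserts without justification.
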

\begin{proof}
  Define the following sequence of random variables
  \begin{equation}
V_k=\left(U(h^\star)-U(h_{k})\right)\mathbbm{1}(k\leq N),
    \end{equation}
  with $\mathbbm{1}(\cdot)$ being the indicator function and $N$ the stopping time defined in \eqref{eqn_stopping_time}. We next work towards showing that $V_k$ is a non-negative submartingale. Because $U(h^\star)$ maximizes $U(h)$, $V_k$ is always non-negative. In addition $V_k\in \ccalF_k$ since $U(h_k)\in\ccalF_k$ and $\mathbbm{1}(k\leq N)\in \ccalF_{k}$. Thus, it remains to be shown that $\mathbb{E}[V_{k+1} | \ccalF_k] \leq V_k$. Notice that for any $k> N$ it follows that $\mathbbm{1}(k\leq N) = 0$ and hence $V_k = 0$. Thus we have that $V_{k+1}= V_k$ for all $k\geq N$. We are left to show that $\mathbb{E}[V_{k+1} | \ccalF_k] \leq V_k$ for $k\leq N$. Using the result of Lemma \ref{lemma_supermartingale_pars} we can upper bound $\mathbb{E}[V_{k+1} | \ccalF_k]$ as
  \begin{equation}
    \begin{split}
      \mathbb{E}[V_{k+1} | \ccalF_k] \leq V_k +{{\eta^2}}C_1+{{\eta^3}}C_2+K\eta \left\|\nabla_hU_{s_0}(h_k)\right\| \\
      -\eta\left<\nabla_hU_{s_0}(h_k),\nabla_hU_{s_k}(h_k) \right>_{\ccalH}.
    \end{split}
    \end{equation}
  Since we have that $\left\|\nabla_hU_{s_0}(h_k)\right\|^2 \geq \varepsilon$ by virtue of Assumption \ref{assumption_large_h} it follows that {$\left\|h\right\|_\ccalH \leq  B_h(\varepsilon)$.} %\red{As discussed in the remark, this ensures that there exists some Hilbert spaces that satisfy condition \eqref{eqn_kernel_condition}.}
  Therefore, there exists some Hilbert Space for which condition \eqref{eqn_kernel_condition} holds and the result of Theorem \ref{prop_all_gradients} implies that the inner product in the right hand side of the previous expression is lower bounded by $\varepsilon\beta_\rho/(2B_{\rho})$. In addition, the norm of $\nabla_hU_{s_0}(h_k)$ is bounded by virtue of Lemma \ref{lemma_bound_nabla_U}. Hence, the previous expression can be further upper bounded by 

    \begin{equation}\label{eqn_submartingale}
\begin{split}          
          \mathbb{E}[V_{k+1} | \ccalF_k] &\leq V_k +{\eta^2}C_1+\eta^3C_2+\eta K B_{\nabla}
      -\eta\frac{\varepsilon}{2}\frac{\beta_\rho}{B_\rho} \\
      &= V_k +{\eta}\alpha(K,\eta), 
    \end{split}
    \end{equation}
    where we define $\alpha(K,\eta)$ as
    \begin{equation}\label{eqn_alpha}
\alpha(K,\eta) := K B_{\nabla}-\frac{\varepsilon}{2}\frac{\beta_\rho}{B_\rho}+\eta C_1+\eta^2C_2.
    \end{equation}
    With the condition for the compression factor satisfying \eqref{eqn_compression_budget} we guarantee that the sum of the first two terms on the right hand side of \eqref{eqn_alpha}  is negative. The latter is sufficient to guarantee that the expression is negative for all $\eta$ satisfying \eqref{eqn_step_max}. This completes the proof that $V_k$ is a non-negative submartingale.  Thus, $V_k$  converges to random variable $V$ such that $\mathbb{E}[V]\leq \mathbb{E}[V_0]$ (see e.g., \cite[Theorem 5.29]{durrett2010probability}). Then, by unrolling \eqref{eqn_submartingale}  we obtain the following upper bound for the expectation of $V_{k+1}$ 
    \begin{equation}
      \mathbb{E}\left[V_{k+1}\right] \leq V_0 + \alpha \eta\mathbb{E}N. 
      \end{equation}
    Since $V_{k+1}$ is bounded the Dominated Convergence Theorem holds and we have that
        \begin{equation}
  \mathbb{E}[V]=   \lim_{k \to\infty} \mathbb{E}\left[V_{k+1}\right] \leq V_0 + \alpha \eta\mathbb{E}N. 
      \end{equation}
       Since $\alpha<0$, rearranging the terms in the previous expression we can upper bound $\mathbb{E}N$ by
        \begin{equation}
          \mathbb{E}N \leq \frac{\mathbb{E}{V}-V_0}{\eta \alpha}.
          \end{equation}
        Therefore it must be case that $P(N=\infty)=0$. Which implies that the event $\left\| \nabla_h U_{s_0}(h_k)\right\| < \varepsilon$ occurs infinitely often. Thus completing the proof of the result.  It remains to be shown that the model order of the representation is bounded for all $k$. The proof of this result is identical to that in \cite[Theorem 3]{koppel2016parsimonious}.

\end{proof}
The previous result establishes the convergence to a neighborhood of the critical points of $U_{s_0}(h)$ of the online gradient ascent algorithm presented in Section \ref{sec_online}. For such result to hold, we require that the kernel width, the compression budget and the step size to be small enough. In addition, the compression introduced by KOMP guarantees that the model order of the function $h_k$ remains bounded for all $k\geq 0$. In the next section we explore the implications of these theoretical results in {a cyclic navigation problem.}
%\end{proof}
%
%%%%%%%%%%%%%%%%%%%%%%%%%%%%%%%%%%%%%%%%%%%%%%%%%%%%%%%%%%%%%%%%%%%%%%%%%%%%%%%%%%%%%%%%%%%%%%%%%%%%%%%%%%%%%%%%%%%%%%%%%%%%%%%%%%%%%% P R O P O S I T I O N  %%%%%%%%%%%%%%%%%%%%%%%%%%%%%%%%%%%%%%%%%%%%%%%%%%%%%%%%%%%%%%%%%%%%%%%%%%%%%%%%%%%%%%%%%%%%%%%%%%%%%%%%%%%%%%%%%%%%%%%%%%%%%%%%%%%%%%%%%%%%%%%
%\begin{proposition}
%Let $M_k$ be the model order of $h_k$, i.e., the number of columns of the dictionary $\bbD_k$ which parametrizes $h_k$. Then, there exists a finite upper bound $M^\infty$ such that, for all $k\geq 0$, the model order is always bounded as $M_k\leq M^\infty$. 
%  \end{proposition}
%\begin{proof}
%The proof is the same as that of \cite[Theorem 3]{koppel2016parsimonious}.
%  \end{proof}

%!TEX root = root.tex

\section{Numerical Experiments}\label{sec_numerical}

%%%%%%%%%%%%%%%%%%%%%%%%%%%NUEVO
Next we test the performance of our non-epsodic RL method in a suvelliance and navigation task.   The setup includes an area in $\mathbb R^2$ with a point to be surveilled located at $x_g=[-1,-5]$ and a battery charger located at $x_b=[-1,5]$. These points are depicted in green and red, respectively, in  Figure \ref{fig:trayectoria}.  An agent  starts moving from its initial point at $ x_0=[3,0]$, depicted in blue in figures \ref{fig:trayectoria} and \ref{fig:fases},  towards its goal at $x_g$. The agent model   consist in  second order point mass acceleration dynamics, with position $x\in\mathbb R^2$ and velocity $v\in\mathbb R^2$ as state variables. It also includes second order battery charging and discharging equations with state variables $b\in \mathbb R$ modeling the remaining charge of the battery, and $d\in \mathbb R$ representing the difference between charge levels at two consecutive time instants. The battery charges at a constant rate $\Delta B$   if the agent is located within a neighborhood of the battery charger, and discharges at the same rate $\Delta B$ otherwise. Vector  $s=(x,v,b,d)$ collects all the state variables of this model.
The reward is shaped so that the agent is stimulated to move towards the goal $x_g$ when $b$ is grater than or equal to $40\%$ of its full capacity and it is discharging $d<0$.  And towards the battery charger $x_c$ if $b$ is lower than $40\%$ and discharging $d<0$, or if $b$ is lower than $90\%$ and charging $d>0$. The use of a second order model for the battery allow us to leave room for some hysteresis on the charging and discharging loop, so that the battery does not start discharging as soon as $b$ surpasses the $40\%$ level. Instead, it keeps charging until it reaches $90\%$ of its capacity before moving back towards the goal.   
A logarithmic barrier is added to the reward for helping the agent to avoid an elliptic obstacle centered at $x_0=[0,0]$ with horizontal and vertical axes of length $1.8$ and $0.9$, respectively 
Under these dynamics, the agent decides its acceleration $a_k\in\mathbb R^2$ using the randomized Gaussian policy $a_k\sim \pi_{h_k}(.|s_k)$ where the mean of $a_k\sim \pi_{h_k}(.|s_k) $ is the kernel expansion $h_k(s_k)$ updated via \eqref{eqn_stochastic_update}. The $Q$-function in step 7 of Algorithm \ref{alg_stochastic_gradient} is estimated as the sum of $T_Q$ consecutive rewards with $T_Q$ drawn from a geometric distribution of parameter $\gamma$. The Gaussian noise $n_k$ that is added to $h_k(s_k)$  in step 2 of Algorithm \ref{alg_stochastic_grad} is selected at random  when $k$ is even, and equal to $n_k=-n_{k-1}$ when $k$ is odd. This is a practical trick to improve the ascending direction of the stochastic gradient without adding bias or violating the  Gaussian model for the randomized policy $\pi_{h}(a|s)$, see \cite{paternain2018stochastic} for more details.    

Figure \ref{fig:trayectoria} shows the trajectory  of the agent, with its color changing gradually from blue to red as it starts from $x_0$  and  loops between $x_g$ and $x_b$. 
\begin{figure}     
\includegraphics[scale=0.2]{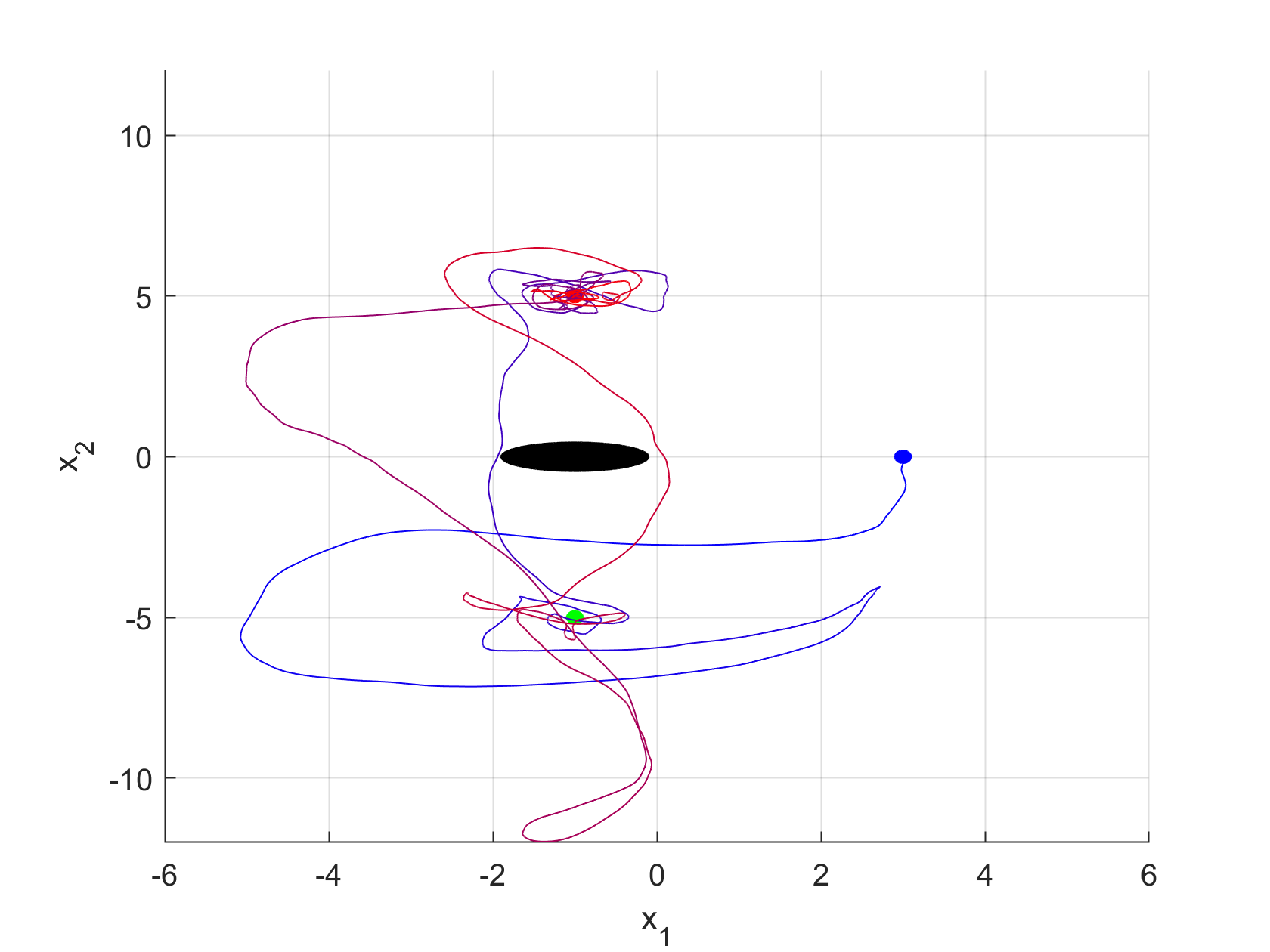}
\caption{Online cycling trajectory of an agent starting at $x_0=[3,0]$ with the goal of  surveilling the location represented by a green point  at $x_b=[-1,-5]$. The agent  needs to recharge its battery when it discharges below $40\%$ of its maximum capacity, at the charger location $x_b=[-1,5]$ represented by the red point. When navigating towards $x_b$ the agent must  avoid the ellipsoidal obstacle centered at  $[-1,0].$ }\label{fig:trayectoria}
\end{figure}
The four stages of this looping trajectory are detailed in Figure \ref{fig:fases}, with Figure \ref{fig:fases} (left) showing the trace from $x_0$ to the neighborhood of $x_g$ which includes some initial exploring swings. Then the trajectory in \ref{fig:fases} (center left) starts when the agent's battery crosses the threshold of $40\%$.
 In this case the agent is rewarded for moving towards the charger and staying in its neighborhood until $b$ reaches $90\%$. The next stage in Figure \ref{fig:fases} (center right) starts when the battery level reaches $90\%$ and the agent moves back to the goal. And finally the trace in Figure \ref{fig:fases} (right) starts when the battery discharges under the safety level of $40\%$,  moving back towards the charger and closing the loop.     

\begin{figure*}
\centering
  \includegraphics[scale=0.1]{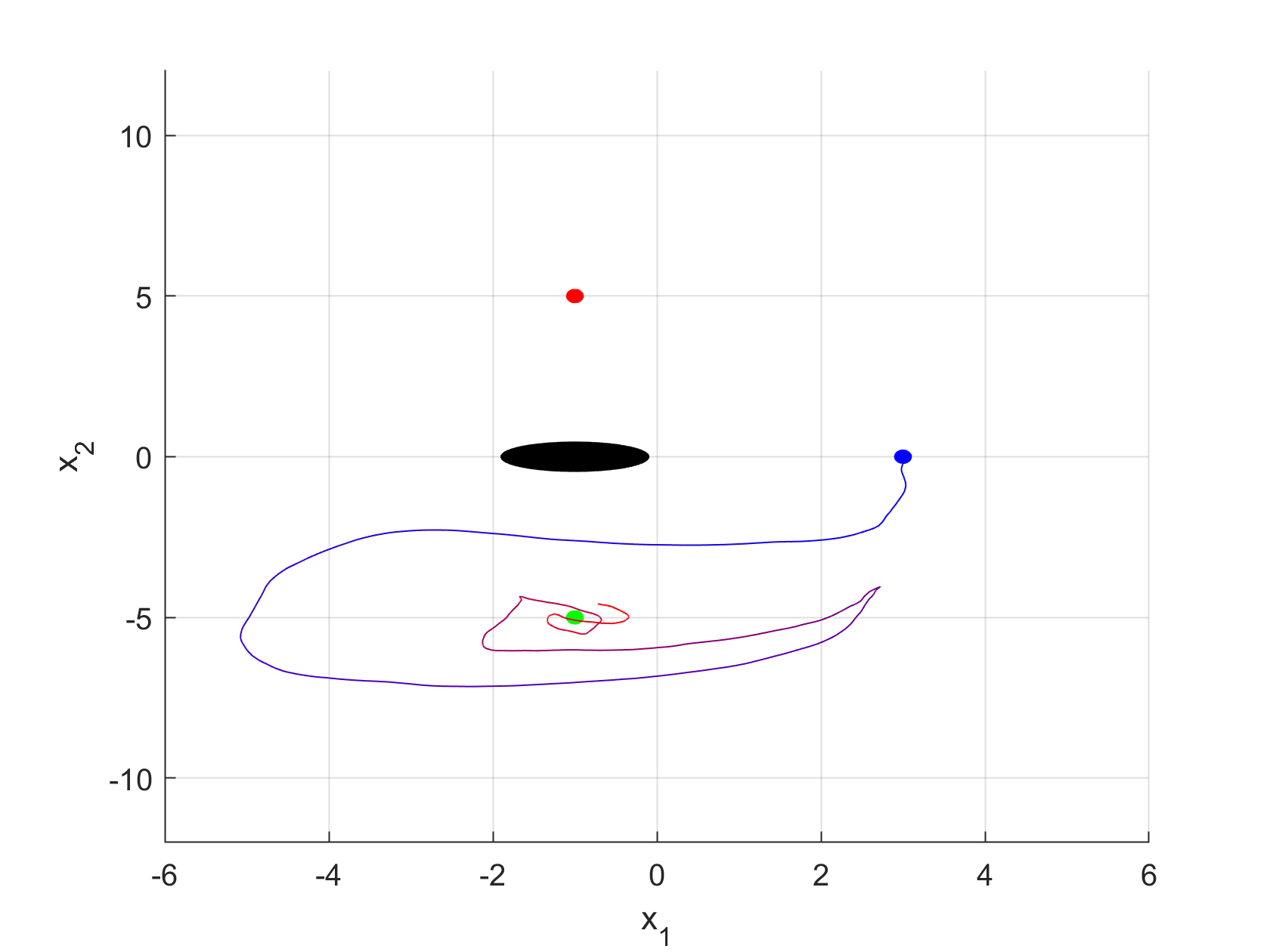}
\includegraphics[scale=0.1]{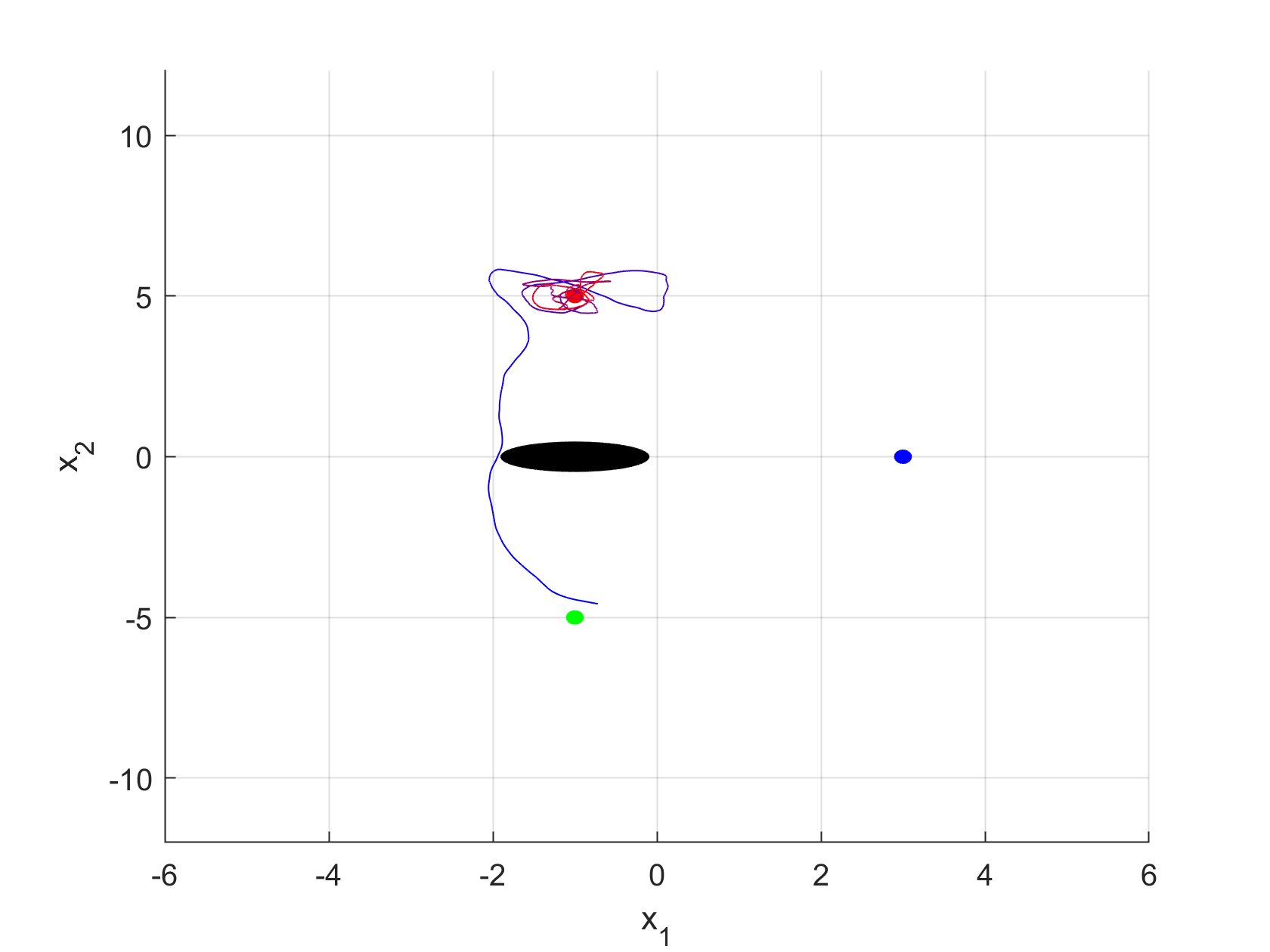}
\includegraphics[scale=0.1]{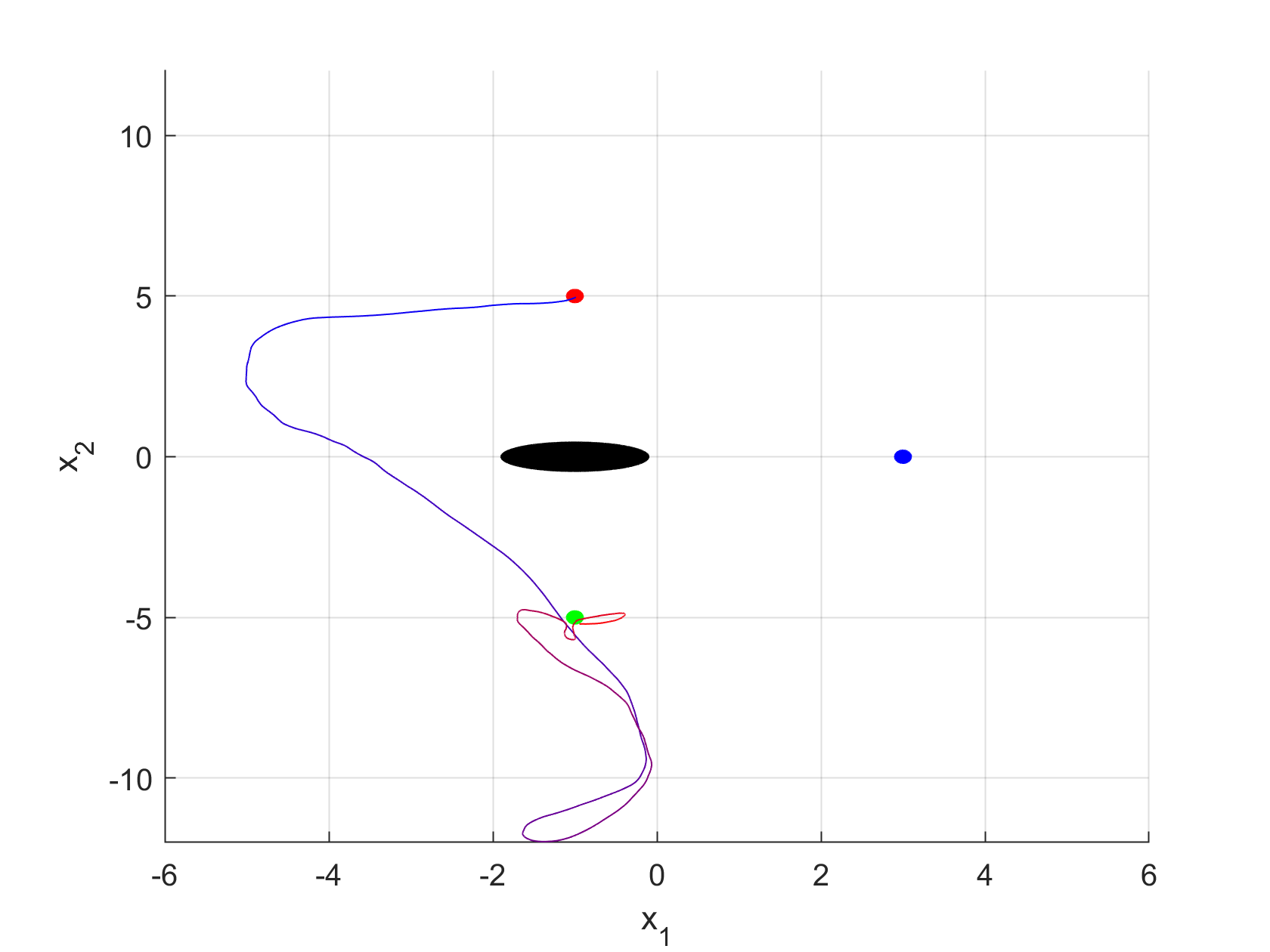}
\includegraphics[scale=0.1]{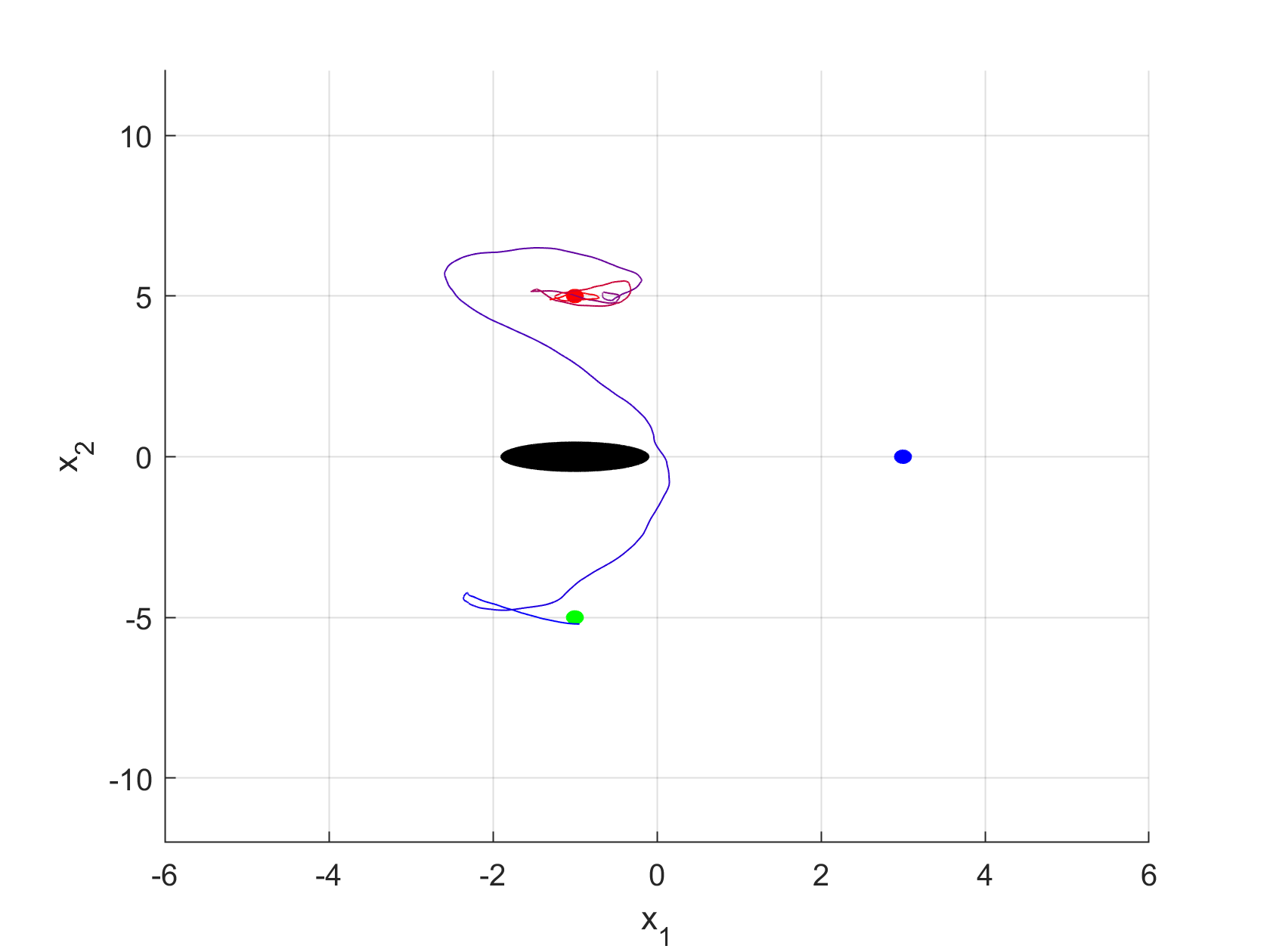}
\caption{Detail of the four stages of the online trajectory in Figure \ref{fig:trayectoria}. Trace from $x_0$ to the neighborhood of $x_g$ (left). Trace when the agent's battery crosses the threshold of $40\%$ (center left). Trace when the battery level reaches $90\%$ and the agent moves back to the goal (center right). Trace when the battery discharges under the safety level of $40\%$ (right). }\label{fig:fases}
\end{figure*}

This coherence between the battery level and the agent trajectories is further illustrated in figures \ref{fig:step} and \ref{fig:cycle}. Figure \ref{fig:step}   depicts the battery level across iterations in time, alongside  the vertical position of the agent. The vertical position  shows {an oscillating step-response like behavior,} as the agent reaches the neighborhood of the goal and the charger sequentially and hovers around them. The battery level shows the desired hysteresis, transitioning according to the thresholds at $40\%$ and $90\%$ and depending on the charging slope.

\begin{figure}  
\includegraphics[scale=0.2]{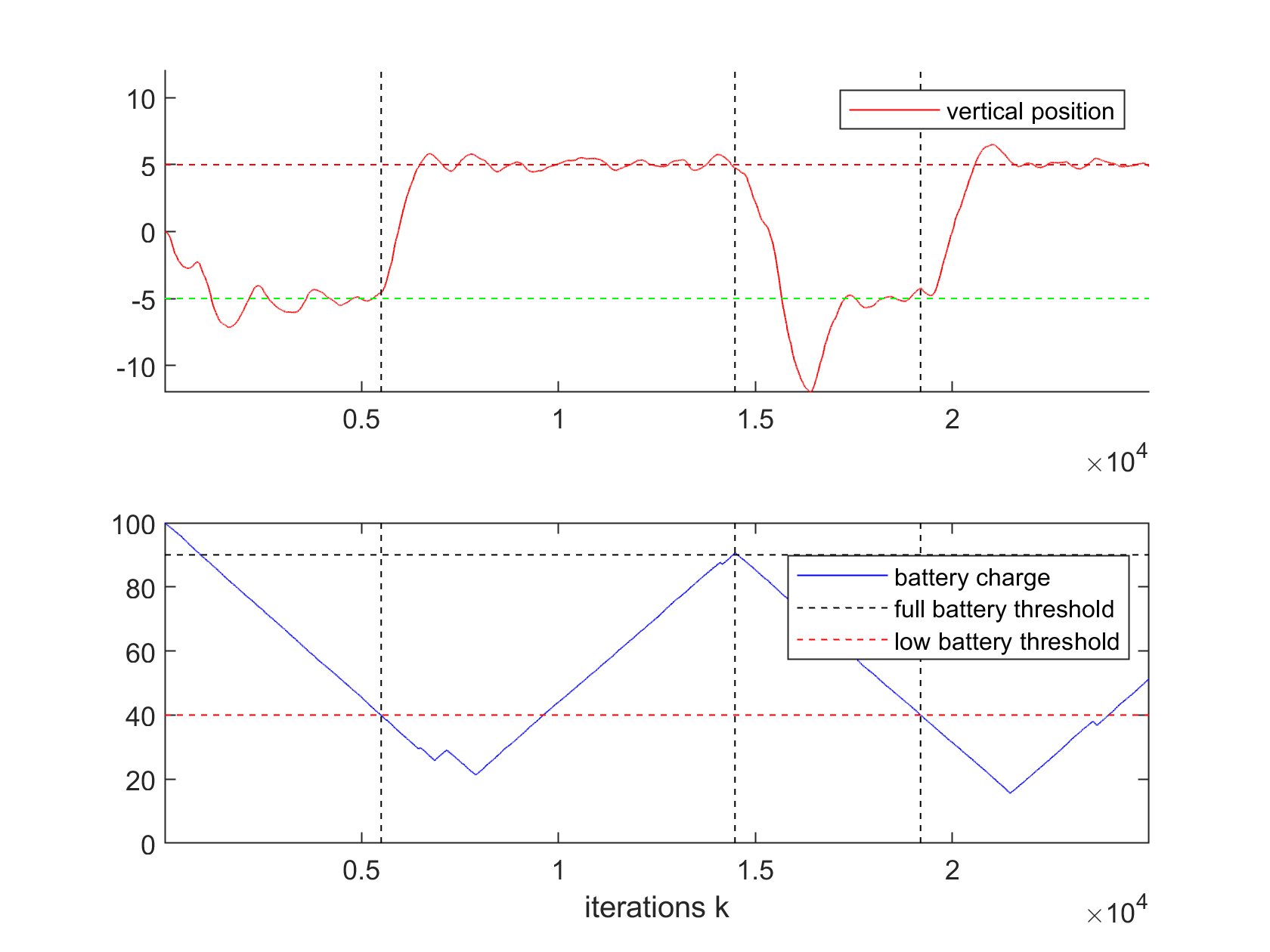} 
\caption{Vertical position and battery charge as a function of the online gradient update index $k$. The horizontal dashed lines correspond to the  positions of the goal and the charger, and to the battery safety and fully charged thresholds. The vertical dashed lines mark the transitions when the agent is directed to move to the charger or to the goal.  } \label{fig:step}
\end{figure}

  Such a loop  is further evidenced in Figure \ref{fig:cycle}, which represents the agent's  vertical position versus its battery level. The horizontal dashed lines represent the full charge and low battery thresholds, and the vertical dashed lines correspond to the  positions of  the goal and the charger. This figure shows how starting from $x_0$ the agent moves towards $x_g$ until  it reaches the $40\%$ level, and  then towards the charger  hovering around it until the battery charge is  $90\%$. Then it closes the loop by moving towards the goal and the charger sequentially   while its battery charges and discharges.   
\begin{figure}  
  \includegraphics[scale=0.2]{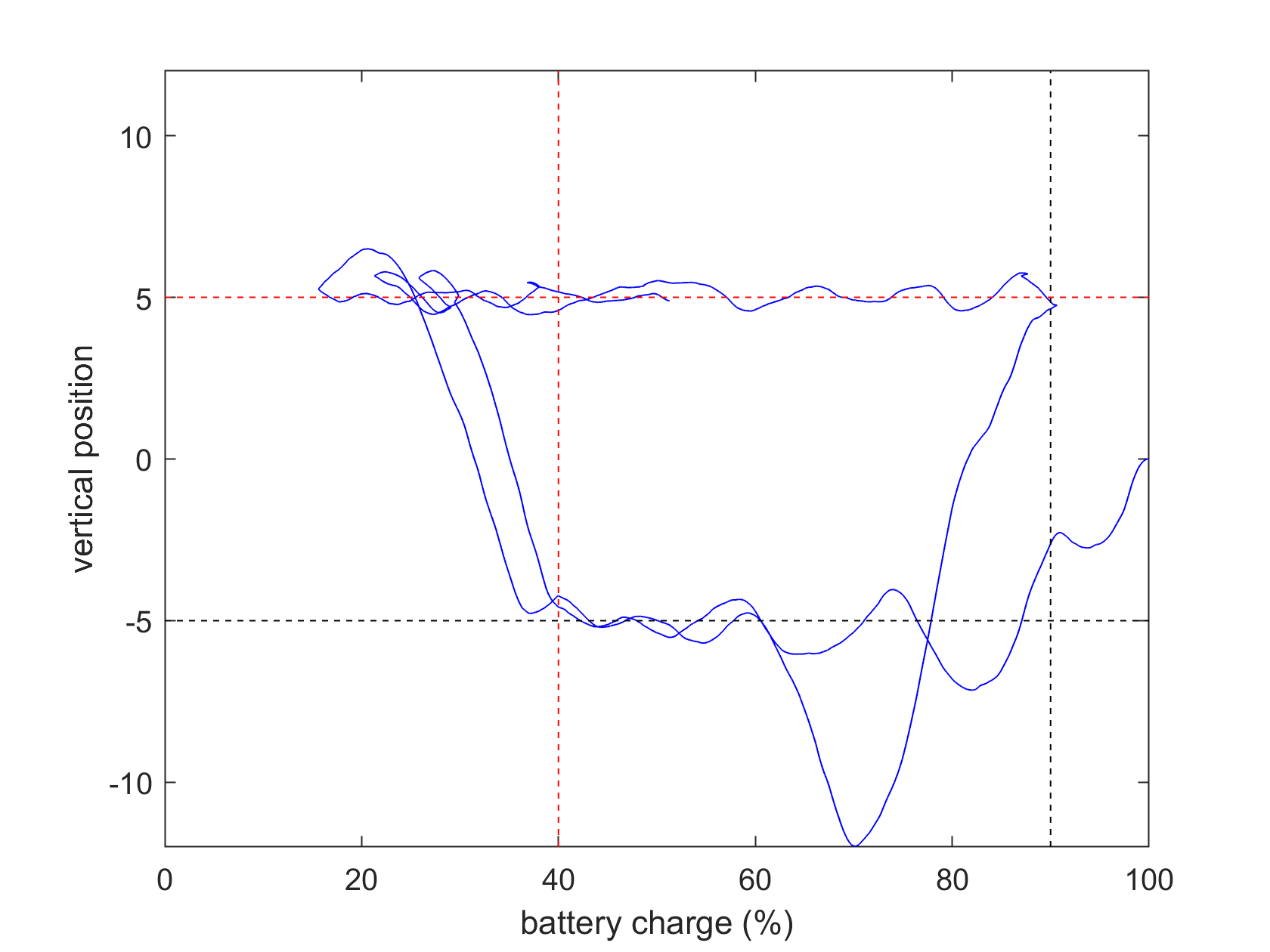} 
  \caption{Cyclic evolution of the agent's vertical position versus the battery charge. Horizontal dashed lines correspond to the  positions of the goal and the charger, and vertical dashed lines represent the battery safety and fully charged thresholds. }\label{fig:cycle}
\end{figure}

Figure \ref{fig:rewards} is included  to corroborate the theoretical findings of Section \ref{sec_important_properties}. More specifically, it depicts the evolution of $U_s(h_k)$ as a function of the online iteration index $k$. 
%
% how the reward changes when the system runs starting at a constant state and run for a %\emph{sufficiently large} period of time  as iterations evolve.
 %
  The starting point $s$  in $U_s(h_k)$ is the same for all $k$, and it is  selected as the state when the battery level crosses the line of $40\%$ for the first time. It corresponds to the location near $x_g$ where the stage 2 starts in Figure \ref{fig:fases}(b).  For completeness  $s=[x,v, b,d]$ with {$x=[-0.72,-4.58]$, $v=[-0.092,0.049]$, $b=39.99$, and $d=3\times 10^{-4}$.} The value function  $U_{s}(h_k)$  is estimated using rewards obtained by an episodic agent that starts at $s$ and runs  $N=100$ sample trajectories of length $T$ selecting actions according to the policy $h_k$, which is kept constant during the $T$ state transitions. These rewards are averaged according to
  \begin{equation}
  \hat U_s(h_k)=\frac{1}{N}\sum_{i=1}^N R_{ik}
  \label{eq:hatushk}
  \end{equation}
where $R_{ik}=\sum_{t=0}^T \gamma^t r_{itk}$ and $r_{itk}$ is the instantaneous reward obtained by the episodic agent at time  $t$, using policy $h_k$, over the sample trajectory $i$. 
%Notice that each point on the blue line in figure \ref{fig:rewards} corresponds to a fixed policy $h_k$, that is kept constant when computing the rewards in \eqref{eq:hatushk} along a trajectory of $T$ state transitions starting at the same $s$. 
These episodic trajectories are carried out for assessing performance of a fixed $h_k$, but the algorithm for updating these policies is  non-episodic, as it evolves in the fully online fashion of  \eqref{eqn_stochastic_update}.

%  averaging the rewards obtained in $N=100$ trajectories across a time   horizon of $T=100$ state transitions. For each trial, a stochastic sample of the cumulative reward is obtained by computing the series $u_{ik}=\sum_{t=0}^T gamma^t r_{ikt}$ with the instantaneous reward at time $t$ is collected on a trajectory starting at $s_0^\prime$  and deciding an action according to policy $h_k$. The experiment was repeated for each  of the policies $h_k$ for  $k=1,2,\ldots,$ that THE ALGORITHM produces while navigating the environment. For each $k=1,\ldots,25000$, the policy $h_k$ is kept constant and $N=100$ trajectories are drawn using $h_k$ to navigate. Then the average cumulative rewards $U_k=(1/N)\sum{i=1}^Nu_{ki}$ are plotted versus $k$ in figure \ref{}.

 The horizon $T=100$ for these episodes was  selected  so that that the discarded tail of the geometric series becomes  negligible, with $\gamma^T\simeq 2\times 10^{-5}$ staying  under the noise deviation.   It is  remarkable  that when the online agent travels through $s$ on its online journey of Figure \ref{fig:trayectoria}, the policy figures out  how to increase the reward.  And  such a reward will not decrease when the policy is updated in the future. This is coherent with our theoretical findings in Theorem \ref{prop_all_gradients}, which states that gradients at future states are ascent directions for the value function  at a previous state, that is  $s$ in this case.
  
  As stated before, each point on the blue line in Figure \ref{fig:rewards} represents the mean of rewards in \eqref{eq:hatushk}, and it is accompanied by its deviation interval. 
   Notice  that, even if the improvement in reward is relative minor, at $0.3\%$, it  is good enough to direct the agent towards the battery charger. This can be better seen in the next figure. 
\begin{figure}  
  \includegraphics[scale=0.2]{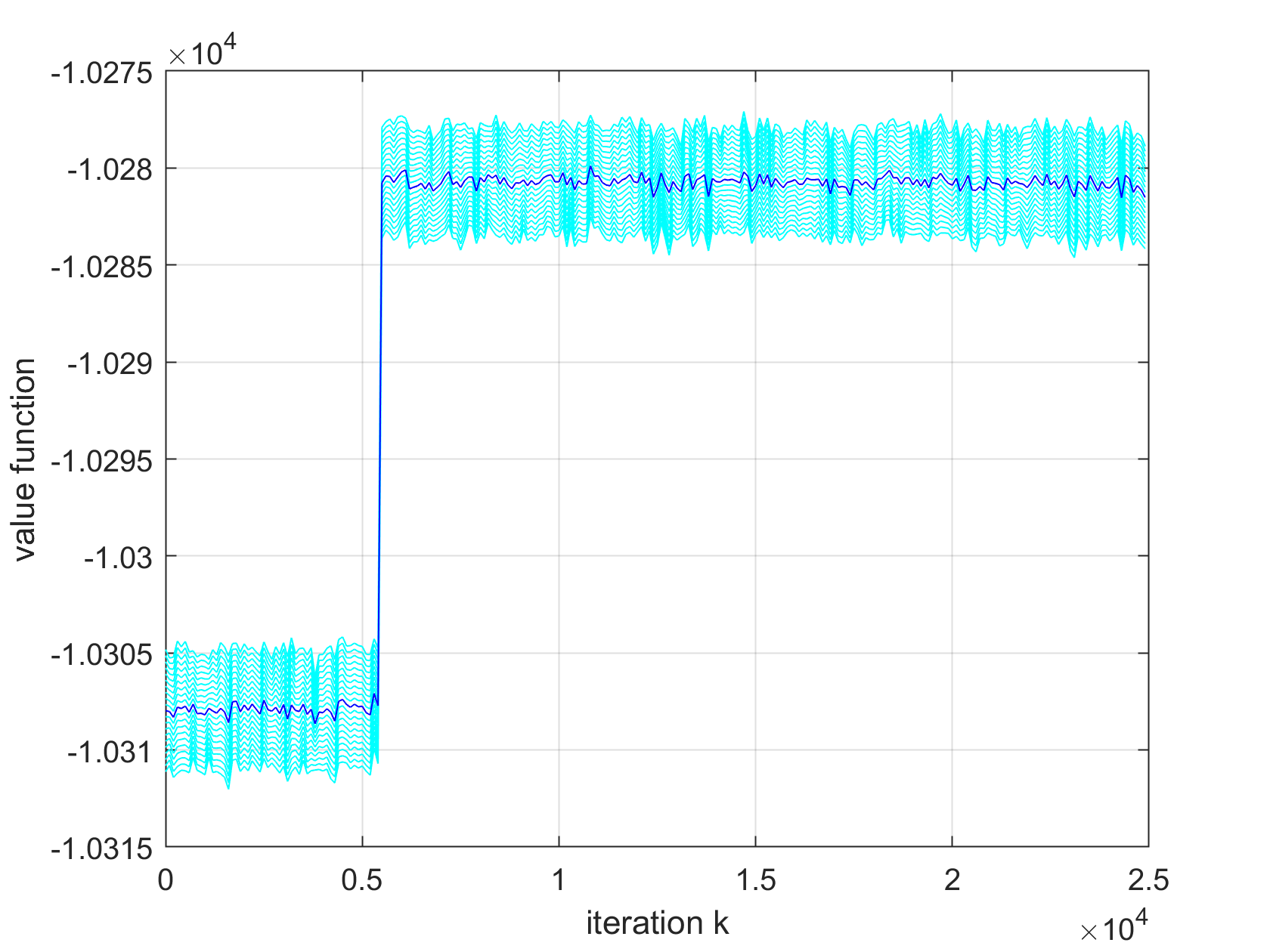} 
  \caption{Evolution of the mean accumulative reward in \eqref{eq:hatushk} as a function of the online iteration step $k$.}  \label{fig:rewards}
\end{figure}

  Figure \ref{fig:trayectoriaColor} shows five different trajectories starting at the same point $x=[-0.72,-4.58]$ represented by a blue dot.  The trajectory that passes through the colored dots corresponds to an agent running our online algorithm, and coincides with part of the second stage in Figure \ref{fig:fases} (center left). Let  $k_0$, $k_1$, $k_2$, and $k_3$ be the iteration indexes when the online agent reaches the points $x_{k_0}=x$, $x_{k_1}$, $x_{k_2}$, $x_{k_3}$, represented by the blue, cyan, green, and purple dots, respectively.  At these iterations the agent produces policies $h_{k_0}$, $h_{k_1}$, $h_{k_2}$, and $h_{k_3}$. The blue, cyan, green, and purple lines in Figure \ref{fig:trayectoriaColor} represent the trajectories of an episodic agent starting at $x$ and navigating with constant policies  $h_{k_0}$, $h_{k_1}$, $h_{k_2}$, and $h_{k_3}$, respectively.  Figure  \ref{fig:trayectoriaColor} corroborates that the policies improve  as the online agent moves along its trajectory, allowing the episodic agent to navigate better.   
%  
%   and use five different policies $h_k$ one per trajectory. The five trajectories correspond to the five policies that are obtained  when the  the agent running the online algorithm reaches the five locations depicted by colored points. \red{Colors are corresponding in Figure \ref{fig:trayectoriaColor}, as the trajectory depicted by a green  line  corresponds to a policy that is updated when the online algorithm reaches the green point, and so on.} Once again, Figure \ref{fig:trayectoriaColor} corroborates that  the trajectories starting at $s$ improve as the online agent moves ahead of $s$ and the policy is updated.
%
 Indeed, at first  the episodic agent only knows to go north west on the straight blue line, but eventually it manages to follow the purple line moving towards the charger and avoiding the  obstacle.  This apparent improvement in Figure \ref{fig:trayectoriaColor} is not reflected in a significant step increase  in  Figure \ref{fig:rewards}. This is because the forgetting factor $\gamma=0.9$ weights a few steps of the trajectory in the value function, and the fist steps are where the trajectories are not significantly separated.
%%%%%%%%%%%%%5

%  Figure \ref{fig:trayectoriaColor} shows five trajectories that start at state $s$  and use five different policies $h_k$ one per trajectory. The five trajectories correspond to the five policies that are obtained  when the  the agent running the online algorithm reaches the five locations depicted by colored points. \red{Colors are corresponding in Figure \ref{fig:trayectoriaColor}, as the trajectory depicted by a green  line  corresponds to a policy that is updated when the online algorithm reaches the green point, and so on.} Once again, Figure \ref{fig:trayectoriaColor} corroborates that  the trajectories starting at $s$ improve as the online agent moves ahead of $s$ and the policy is updated. Indeed when the online agent moves from one colored point to the next one and updates the policy, the episodic agent using this policy  learns to navigate better towards the charger. This apparent improvement in Figure \ref{fig:trayectoriaColor} is not reflected in a significant step increase  in  Figure \ref{fig:rewards}. This is because the value of $\gamma=0.9$ weights a few steps of the trajectory in the value function, and the fist steps are where the trajectories are not significantly separated.
  
  Overall, this numerical example shows that the algorithm developed in this paper is capable of learning how to navigate on a loop in between to goal locations, avoiding an obstacle, and following a cyclic trajectory that does not comply with the standard stationary assumptions in the literature.   
\begin{figure}  
  \includegraphics[scale=0.2]{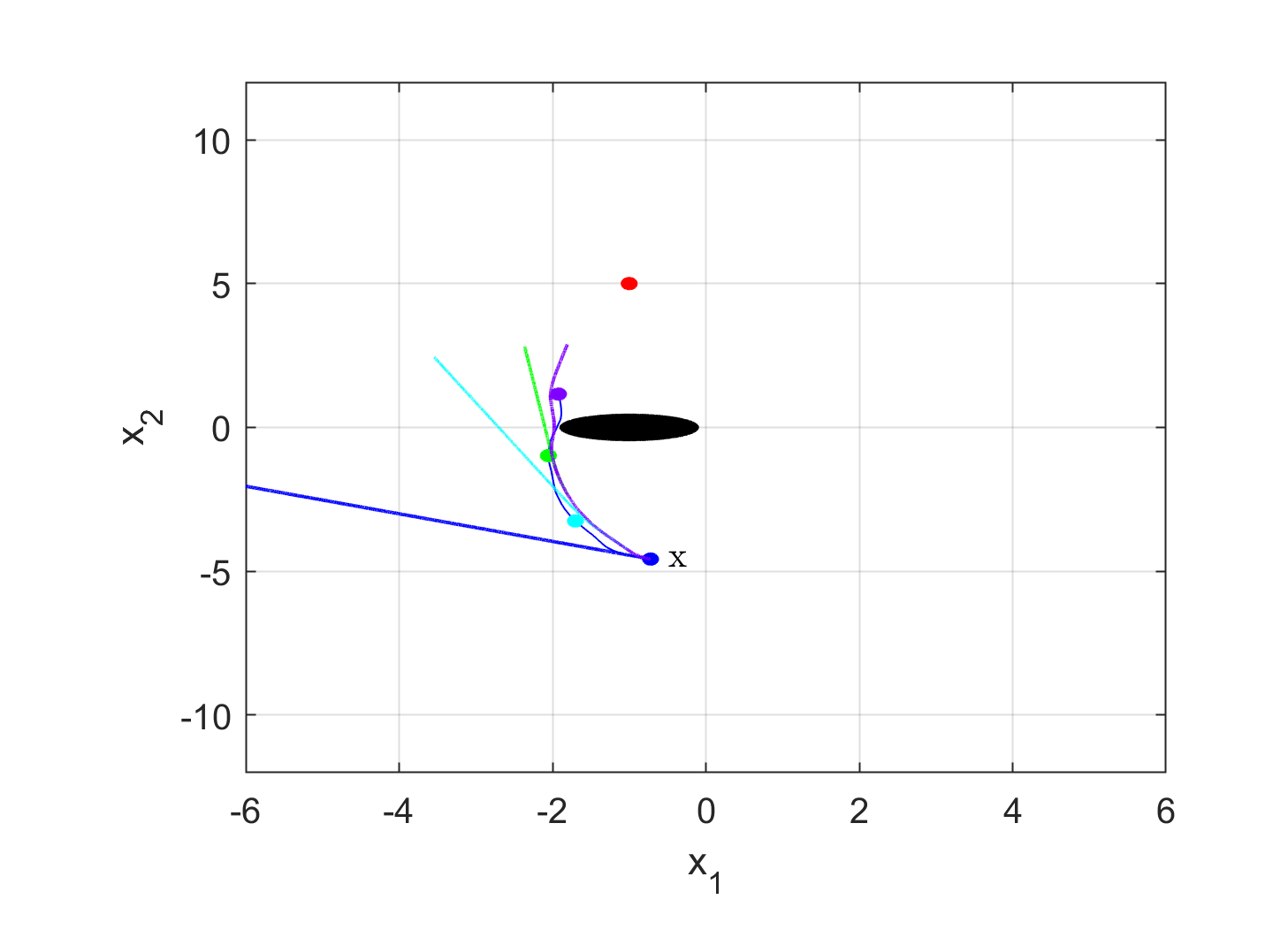}
  \caption{Trajectories of an episodic agent using four   policies that are produced by the online agent when following the cyclic trajectory of Figure \ref{fig:trayectoria}. Each colored point represents a location $x_k$ in which the online agent updates policy to obtain  $h_k$. The line of the corresponding color represents the trajectory of the episodic agent that uses  the fixed  policy  $h_k$ to navigate from $x$  towards the charger.   }  \label{fig:trayectoriaColor}
\end{figure}

%!TEX root = root.tex

\section{Conclusion}
We have considered the problem of learning a policy that belongs to a RKHS in order to maximize the functional defined by the expected discounted cumulative reward that an agent receives. In particular, we presented a fully online algorithm that accumulates at the critical points of the value function and keeps the model order of the representation of the function bounded for all iterations. The algorithm uses unbiased estimates of the gradient of the functionals conditioned at the current state that can be achieved in finite time. We establish that these gradients are also ascent directions for the initial value function for Gaussian kernels with small enough bandwidth. Therefore, by updating the policy following such gradients the value of the initial value function is increased in expectation at each iteration, when the step size and compression budget are small enough. { We tested this algorithm in a navigation and surveillance problem  whose cyclic nature  highlights the ability  to operate in a non stationary setup. The surveillance task is carried out while by training in a fully online fashion, without the need of episodic restarts. With this experiment we also corroborated our claim in Theorem \ref{theo_convergence} regarding the ascent directions of the stochastic gradients.}

\appendix
%\input{appendix}
%\input{appendix_komp}
%!TEX root = root.tex

%%%%%%%%%%%%%%%%%%%%%%%%%%%%%%%%%%%%%%%%%%%%%%%%%%%%%%%%%%%%%%%%%%%%%%%%%%%%%%%%%%%%%%%%%%%%%%%%%%%%%%%%%%%%%%%%%%%%%%%%%%%%%%% S E C T I O N %%%%%%%%%%%%%%%%%%%%%%%%%%%%%%%%%%%%%%%%%%%%%%%%%%%%%%%%%%%%%%%%%%%%%%%%%%%%%%%%%%%%%%%%%%%%%%%%%%%%%%%%%%%%%%%%%%%%%%%%%%%%%%%%%%%%%%%%%%%%%%%%%%%%%%%%%%%%%%%

%
%%%%%%%%%%%%%%%%%%%%%%%%%%%%%%%%%%%%%%%%%%%%%%%%%%%%%%%%%%%%%%%%%%%%%%%%%%%%%%%%%%%%%%%%%%%%%%%%%%%%%%%%%%%%%%%%%%%%%%%%%%%%%%%% S E C T I O N %%%%%%%%%%%%%%%%%%%%%%%%%%%%%%%%%%%%%%%%%%%%%%%%%%%%%%%%%%%%%%%%%%%%%%%%%%%%%%%%%%%%%%%%%%%%%%%%%%%%%%%%%%%%%%%%%%%%%%%%%%%%%%%%

 \subsection{Proof of Lemma \ref{lemma_distribution}}\label{appendix_lemma_distribution}

\begin{proof}
  Since $s\in\ccalS_0$ there exists some time $t\geq 0$ such that $p(s_t=s|s_0)> 0$. Therefore we have that
  \begin{equation}
    \begin{split}
      \rho_{s_0}(s) &= (1-\gamma)\sum_{u=0}^\infty \gamma^u p(s_u=s|s_0) \\
      &\geq (1-\gamma)\gamma^{t}p(s_t=s|s_0),
      \end{split}
    \end{equation}
  where the last inequality follows from the fact that $\gamma>0$ and $p(s_u=s|s_0) \geq 0$ for all $u\geq 0$. Hence, by assumption it follows that $\rho_{s_0}(s) >0$. To prove the second claim, start by writting $\rho_{s^\prime}(s^{\prime\prime})$ as
  \begin{equation}
    \begin{split}
      %\rho_{s^\prime}(s^{\prime\prime}) = (1-\gamma)\sum_{t=0}^\infty \gamma^t \int p(s_t=s^{\prime\prime}| s_0=s^\prime) \, ds_1\ldots ds_{t-1} \\
      %=  (1-\gamma)\gamma^{-T}\sum_{u=T}^\infty  \gamma^u\int p(s_u=s^{\prime\prime}| s_T=s^\prime) \, ds_{T+1}\ldots ds_{u-1},
            \rho_{s^\prime}(s^{\prime\prime}) &= (1-\gamma)\sum_{t=0}^\infty \gamma^t p(s_t=s^{\prime\prime}| s_0=s^\prime) \\
      &=  (1-\gamma)\gamma^{-T}\sum_{u=T}^\infty  \gamma^u p(s_u=s^{\prime\prime}| s_T=s^\prime) ,
      \end{split}
  \end{equation}
  where the last equality holds for any $T\geq 0$. Using the Markov property for any $u$ we have that
  \begin{equation}
p(s_u=s^{\prime\prime}| s_T=s^\prime) = p(s_u=s^{\prime\prime}| s_T=s^\prime,s_0).
    \end{equation}
  %
%  \begin{equation}
%    \begin{split}
%      \rho_{s^\prime}(s^{\prime\prime}) = (1-\gamma)\gamma^{-T}\sum_{u=T}^\infty \gamma^u \int p(s_u=s^{\prime\prime}| s_T=s^\prime,s_0)\, ds_{T+1}\ldots ds_{u-1},.
%    \end{equation}
    %
    Since $s^\prime \in S_0$, there exists $T\geq 0$ such that $p(S_T=s^\prime |s_0)>0$. For that specific $T$, we have that
    \begin{equation}
      p(s_u=s^{\prime\prime}| s_T=s^\prime,s_0) = \frac{p(s_u=s^{\prime\prime}, s_T=s^\prime|s_0)}{p(s_T=s^\prime|s_0)}.
    \end{equation}
    Notice next that since $s^{\prime\prime} \in \ccalS\setminus \ccalS_0$ we have that $p(s_u=s^{\prime\prime}|s_0)=0$ for all $u\geq 0$. Hence, we also have that $p(s_u=s^{\prime\prime},S_T=s^\prime|s_0)=0$ for all $u\geq 0$ which completes the proof of the proposition.
  \end{proof}
  \subsection{Proof of Lemma \ref{lemma_lipshitz}}\label{appendix_lipshitz}
    Without loss of generality, we prove the result for $D(s)\rho_{s_0}(s)$.  We start by showing that the cumulative weighted distribution $\rho_{s_0}(s)$ is Lipschitz with constant $L_p$. 
  \begin{lemma}
    Under Assumption \ref{assumption_prob_dist}, the distribution $\rho_{s_0}(s)$ is Lipschitz with constant $L_p$, where $L_p$ is the Lipschitz constant defined in Assumption \ref{assumption_prob_dist}. 
  \end{lemma}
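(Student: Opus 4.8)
The proof rests on two facts: the $t$-step state density $p(s_t=\cdot|s_0)$ is $L_p$-Lipschitz \emph{uniformly in $t$}, and $\rho_{s_0}$ is the convex combination $\sum_{t\ge 0}(1-\gamma)\gamma^t\,p(s_t=\cdot|s_0)$, whose weights sum to one; a convex combination of $L_p$-Lipschitz functions is again $L_p$-Lipschitz. We work, consistently with Assumption~\ref{assumption_prob_dist} (which already postulates $\rho_{s_k}$ to be a bounded function), in the absolutely continuous setting where each $p(s_t=\cdot|s_0)$ is a density rather than a measure.

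First I would show by induction on $t\ge 1$ that $|p(s_t=s|s_0)-p(s_t=s'|s_0)|\le L_p\|s-s'\|$ for all $s,s'\in\ccalS$. For $t=1$, write $p(s_1=s|s_0)=\int p(s_1=s|s_0,a_0)\,\pi_h(a_0|s_0)\,da_0$; subtracting the same expression evaluated at $s'$, moving the difference inside the integral, invoking the one-step Lipschitz bound of Assumption~\ref{assumption_prob_dist}, and using $\int \pi_h(a_0|s_0)\,da_0=1$ yields the claim. For the inductive step, apply the Chapman--Kolmogorov recursion $p(s_t=s|s_0)=\int p(s_t=s|s',a')\,p(s_{t-1}=s',a_{t-1}=a'|s_0)\,ds'\,da'$: since $p(\cdot|s',a')$ is $L_p$-Lipschitz for every $(s',a')$ and $p(\cdot,\cdot|s_0)$ integrates to one, the bound propagates with the constant $L_p$ unchanged. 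The point worth emphasizing is that $L_p$ does not accumulate with $t$, because each transition re-normalizes against a probability measure; this is exactly what makes the series argument below close, and is really the only place where any thought is needed.

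Then I would conclude, for any $s,s'\in\ccalS$,
\[
|\rho_{s_0}(s)-\rho_{s_0}(s')|\;\le\;(1-\gamma)\sum_{t=0}^{\infty}\gamma^t\,\bigl|p(s_t=s|s_0)-p(s_t=s'|s_0)\bigr|\;\le\;(1-\gamma)\sum_{t=0}^{\infty}\gamma^t\,L_p\|s-s'\|\;=\;L_p\|s-s'\|,
\]
where the last equality uses $(1-\gamma)\sum_{t\ge 0}\gamma^t=1$ and the termwise domination also justifies pulling the absolute value through the (convergent) sum. I do not foresee a genuine obstacle: besides the uniform-in-$t$ Lipschitz estimate noted above, the only care required is to remain in the absolutely continuous regime already assumed in the paper, so that the initial-time term behaves like the others; with that in place the argument is routine.
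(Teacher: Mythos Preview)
Your proposal is correct and follows essentially the same route as the paper: first show that each $p(s_t=\cdot|s_0)$ is $L_p$-Lipschitz by marginalizing over $(s_{t-1},a_{t-1})$ and using that the remaining factor is a probability measure, then pass the bound through the geometric series. One minor remark: the induction is superfluous, since your ``inductive step'' via Chapman--Kolmogorov never invokes the inductive hypothesis and already gives the bound for every $t\ge 1$ directly---this is exactly how the paper argues.
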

\begin{proof}
  Let us start by writing $p(s_t=s|s_0)$ by marginalizing it
    \begin{equation}
  \begin{split}
&    p(s_t=s|s_0) =\int p(s_t=s,a_{t-1},s_{t-1}|s_0)\, ds_{t-1} da_{t-1}.
    \end{split}
  \end{equation}
    Using Bayes' rule and the Markov property of the transition probability it follows that 
\begin{equation}
  \begin{split}
&    p(s_t=s|s_0) =\\ &\int p(s_t=s|a_{t-1},s_{t-1})\pi_h(a_{t-1}|s_{t-1})p(s_{t-1}|s_0)\, ds_{t-1} da_{t-1}.
    \end{split}
  \end{equation}
Using the Lipschitz property of the transition probability (cf., Assumption \ref{assumption_prob_dist}) we can upper bound $\left| p(s_t=s|s_0) - p(s_t=s^\prime|s_0)\right|$ by
\begin{equation}
  \begin{split}
    \left| p(s_t=s|s_0) - p(s_t=s^\prime|s_0)\right| \leq L_p\left\|s-s^\prime \right\| \\
        \int \pi_h(a_{t-1}|s_{t-1}) p(s_{t-1}|s_0) \, ds_{t-1}da_{t-1}  \\
    \end{split}
  \end{equation}
Since both $\pi_h(a_{t-1}|s_{t-1})$and $p(s_{t-1}|s_0)$ are probability distributions they integrate one. Thus we have that
\begin{equation}\label{eqn_lipschitz_p}
  \begin{split}
    \left| p(s_t=s|s_0) - p(s_t=s^\prime|s_0)\right| \leq L_p\left\|s-s^\prime \right\|.
    \end{split}
  \end{equation}
Use the definition of $\rho_{s_0}(s)$ (cf., \eqref{eqn_discounted_distribution}) to write the difference $\left|\rho_{s_0}(s)-\rho_{s_0}(s^\prime)\right|$ as
\begin{equation}
  \begin{split}
    \left|\rho_{s_0}(s)-\rho_{s_0}(s^\prime)\right|=\\
    (1-\gamma)\left|\sum_{t=0}^\infty\gamma^t \left(p(s_t=s|s_0)-p(s_t=s^\prime|s_0)\right)\right|.
  \end{split}
\end{equation}
Use the triangle inequality to upper bound the previous expression by
\begin{equation}
  \begin{split}
    \left|\rho_{s_0}(s)-\rho_{s_0}(s^\prime)\right|&\\
    \leq (1-\gamma)&\sum_{t=0}^\infty\gamma^t\left|p(s_t=s|s_0)-p(s_t=s^\prime|s_0)\right|.
  \end{split}
\end{equation}
By virtue of \eqref{eqn_lipschitz_p}, each term can be upper bounded by $\gamma^tL_p\left\|s-s^\prime\right\|$. Thus
\begin{equation}
  \begin{split}
    \left|\rho_{s_0}(s)-\rho_{s_0}(s^\prime)\right|
    \leq (1-\gamma)\sum_{t=0}^\infty\gamma^tL_p\left\|s-s^\prime\right\|=L_p\left\|s-s^\prime\right\|.
  \end{split}
\end{equation}
This completes the proof of the lemma.
 \end{proof}

%%%%%%%%%%%%%%%%%%%%%%%%%%%%%%%%%%%%%%%%%%%%%%%%%%%%%%%%%%%%%%%%%%%%%%%%%%%%%%%%%%%%%%%%%%%%%%%%%%%%%%%%%%%%%%%%%%%%%%%%%%%%%%%%%%%%%%%%% L E M M A %%%%%%%%%%%%%%%%%%%%%%%%%%%%%%%%%%%%%%%%%%%%%%%%%%%%%%%%%%%%%%%%%%%%%%%%%%%%%%%%%%%%%%%%%%%%%%%%%%%%%%%%%%%%%%%%%%%%%%%%%%%%%%%%%%%%%%%%%%%%%%%%%%%%%%%%%
\begin{lemma}
  Under the Assumptions of Lemma \ref{lemma_lipshitz} $D(s)$ is bounded by $B_D$ (cf., \eqref{eqn_bound_D}) and it is Lipschitz, i.e.,
  \begin{equation}
    \left\|D(s)-D(s^\prime)\right\| \leq L_D\left\|s-s^\prime\right\|,
\end{equation}
  were $L_D$ is the constant defined in Lemma \ref{lemma_lipshitz}. 

  \end{lemma}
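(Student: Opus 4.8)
The plan is to establish the two assertions using two different representations of $D(s)$ from \eqref{eqn_derivative_q}. For the bound $\|D(s)\|\le B_D$ I would keep $D(s)$ as written, factor out the uniform bound $|Q(s,a;h)|\le B_r/(1-\gamma)$ (Lemma~1 of \cite{paternain2018stochastic}, already invoked above), and reduce matters to estimating $\int\|\Sigma^{-1}(a-h(s))\|\,\pi_h(a|s)\,da$. Since under $\pi_h(\cdot|s)$ the vector $a-h(s)$ is $\ccalN(0,\Sigma)$, this integral is a first absolute moment of a Gaussian, i.e. (up to the relevant spectral factor of $\Sigma$) the mean of a $\chi$ variable with $p$ degrees of freedom, which equals $\sqrt{2}\,\Gamma\!\big(\tfrac{p+1}{2}\big)/\Gamma\!\big(\tfrac{p}{2}\big)$; this is precisely the constant $B_D$ of \eqref{eqn_bound_D}, and the bound is uniform in $s$ because the Gaussian moment does not depend on the mean $h(s)$.

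For the Lipschitz estimate the first move is an integration by parts: since $\Sigma^{-1}(a-h(s))\pi_h(a|s)=-\partial_a\pi_h(a|s)$, and the boundary terms vanish because $Q$ is bounded while $\pi_h$ has Gaussian tails, $D(s)=\int\partial_a Q(s,a;h)\,\pi_h(a|s)\,da=\mathbb{E}_{a\sim\pi_h(\cdot|s)}\!\left[\partial_a Q(s,a;h)\right]$. Now substitute $z=a-h(s)$, so that $D(s)=\int\partial_a Q\big(s,z+h(s);h\big)\,g(z)\,dz$ with $g$ the density of $\ccalN(0,\Sigma)$, which is now independent of $s$. Consequently $D(s)-D(s')=\int\big[\partial_a Q(s,z+h(s);h)-\partial_a Q(s',z+h(s');h)\big]g(z)\,dz$, and inserting the intermediate term $\partial_a Q(s',z+h(s);h)$ together with the triangle inequality reduces the problem to a pointwise bound: the first difference is controlled by Lipschitz continuity of $\partial_a Q$ in the state argument (constant $L_{Qs}$), the second by Lipschitz continuity of $\partial_a Q$ in the action argument (constant $L_{Qa}$) composed with $\|h(s)-h(s')\|\le L_h\|s-s'\|$. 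Since $g$ integrates to one, this yields $\|D(s)-D(s')\|\le(L_{Qs}+L_{Qa}L_h)\|s-s'\|=L_D\|s-s'\|$.

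Two auxiliary facts feed the pointwise bound. First, the regularity of $\partial_a Q$: differentiating $Q(s,a;h)=r(s,a)+\gamma\int p(s'|s,a)V(s';h)\,ds'$ in $a$ gives $\partial_a Q(s,a;h)=\partial_a r(s,a)+\gamma\int p'(s,a)V(s';h)\,ds'$ in the notation of Assumption~\ref{assumption_prob_dist}; bounding $|V(\cdot;h)|\le B_r/(1-\gamma)$, using $|\ccalS|<\infty$, the second-derivative bounds on $r$, the Lipschitz constants $L_{ps},L_{pa}$ of $p'$, and $\gamma\le 1$, one reads off $L_{Qs}=L_{rs}+\frac{B_r}{1-\gamma}L_{ps}|\ccalS|$ and $L_{Qa}=L_{ra}+\frac{B_r}{1-\gamma}L_{pa}|\ccalS|$ as declared in Lemma~\ref{lemma_lipshitz}. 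Second, the Lipschitz bound for $h$ in the Gaussian RKHS: by the reproducing property $|h_i(s)-h_i(s')|\le\|h_i\|\,\|\kappa_{\sigmah}(s,\cdot)_{ii}-\kappa_{\sigmah}(s',\cdot)_{ii}\|$, and $\|\kappa_{\sigmah}(s,\cdot)_{ii}-\kappa_{\sigmah}(s',\cdot)_{ii}\|^2=2\big(1-e^{-(s-s')^\top\Sigma_\ccalH^{-1}(s-s')/2}\big)\le(s-s')^\top\Sigma_\ccalH^{-1}(s-s')\le\lambda_{\min}(\Sigma_\ccalH)^{-1}\|s-s'\|^2$ using $1-e^{-x}\le x$; summing over the $p$ coordinates gives $\|h(s)-h(s')\|\le\|h\|\,\lambda_{\min}(\Sigma_\ccalH)^{-1/2}\|s-s'\|=L_h\|s-s'\|$.

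The step I expect to be the main obstacle is the uniform Lipschitz control of $\partial_a Q$: $Q(\cdot,\cdot;h)$ is defined through the infinite-horizon discounted sum (equivalently, as a Bellman fixed point), so one must justify differentiating it in $a$ under the sum/integral and then propagate the regularity of $r$ and $p$ through the recursion while keeping every constant independent of the policy $h$ and of the base state --- boundedness of the rewards and compactness of $\ccalS$ are exactly what make this propagation terminate. The vanishing of the boundary terms in the integration by parts needs only a short argument from boundedness of $Q$ and the Gaussian decay of $\pi_h$. Finally, once $D$ is shown to be $B_D$-bounded and $L_D$-Lipschitz and $\rho_{s_k}$ is $B_\rho$-bounded (Assumption~\ref{assumption_prob_dist}) and $L_p$-Lipschitz (the first sublemma of this appendix), the statement of Lemma~\ref{lemma_lipshitz} follows from the elementary product rule that the product of a $B_1$-bounded, $L_1$-Lipschitz function with a $B_2$-bounded, $L_2$-Lipschitz function is $B_1B_2$-bounded and $(B_1L_2+B_2L_1)$-Lipschitz, which gives $B=B_\rho B_D$ and $L=B_D L_p+B_\rho L_D$.
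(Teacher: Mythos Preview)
Your proposal is correct and follows the same architecture as the paper: integrate by parts to rewrite $D(s)=\mathbb{E}_{a\sim\pi_h(\cdot|s)}[\partial_a Q(s,a;h)]$, shift the Gaussian mean to remove the $s$-dependence from the density, and then combine Lipschitz continuity of $\partial_a Q$ in both arguments with Lipschitz continuity of $h$. Two of your sub-steps are more direct than the paper's. First, for the constants $L_{Qs},L_{Qa}$ you differentiate the one-step Bellman identity $Q(s,a)=r(s,a)+\gamma\int p(s'|s,a)V(s';h)\,ds'$, whereas the paper differentiates the full rollout $Q(s,a)=r(s,a)+\sum_{t\ge1}\gamma^t\int r(s_t,a_t)p(s_t|s,a)\pi_h(a_t|s_t)\,ds_t\,da_t$ and then collapses the resulting geometric sum; your route is shorter and lands on the same constants. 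Second, for the kernel increment you use $1-e^{-x}\le x$ directly, while the paper introduces the auxiliary function $f(u)=\sqrt{2(1-e^{-\|u\|^2/2})}$ and argues that $\|\nabla f(u)\|$ attains its supremum $1$ at $u=0$; both yield $L_h=\|h\|\,\lambda_{\min}(\Sigma_\ccalH)^{-1/2}$. The bound $\|D(s)\|\le B_D$ is obtained the same way in both proofs, via $|Q|\le B_r/(1-\gamma)$ and the mean $\sqrt{2}\,\Gamma((p+1)/2)/\Gamma(p/2)$ of a chi random variable with $p$ degrees of freedom.
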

\begin{proof}
Let us start by introducing the change of variables $\zeta=\Sigma^{-1/2}(a-h(s))$ to compute $D(s)$. Hence we have that
    \begin{equation}
      D(s) =\int Q(s,h(s)+\Sigma^{1/2}\zeta) \frac{\zeta}{\sqrt{2\pi}^p} e^{-\left\|\zeta\right\|^2/2} \,d\zeta.
      \end{equation}
    Notice that we can define $\phi(\zeta) = e^{-\left\|\zeta\right\|^2/2}/\sqrt{2\pi}^p$ and then, the previous expression reduces to
       \begin{equation}\label{eqn_D_change_variable}
      D(s) =-\int Q(s,h(s)+\Sigma^{1/2}\zeta)  \nabla\phi(\zeta) \,d\zeta.
      \end{equation}
       Thus, integrating each component of $D(s)$ by parts we have for each $i=1,\ldots, n$ that
       \begin{equation}\label{eqn_parts_integration}
         \begin{split}
           D(s)_i = \int Q(s,h(s)+\Sigma^{1/2}\zeta)  \phi(\zeta) \Big|_{\zeta_i-\infty}^{\zeta_i=\infty} d\bar{\zeta}_i\\
           +\int \frac{\partial Q(s,h(s)+\Sigma^{1/2}\zeta)}{\partial \zeta_i} \phi(\zeta)\, d\zeta,
           \end{split}
      \end{equation}
       where $\bar{\zeta}_i$ denotes the integral with respect to all variables in $\zeta$ except for the $i$-th component. Since $Q(s,a)\leq B_r/(1-\gamma)$ \cite[Lemma 3]{paternain2018stochastic} and $\phi(\zeta)$ is a Multivariate Gaussian density the first term in the above sum is zero. Next we compute the derivative of the $Q$-function with respect to $\zeta$. By the chain rule we have that 
       \begin{equation}
         \frac{\partial Q(s,h(s)+\Sigma^{1/2}\zeta)}{\partial \zeta} = \Sigma^{1/2}\frac{\partial Q(s,a)}{\partial a}\Big|_{a=h(s)+\Sigma^{1/2}\zeta}.  
       \end{equation}
       Thus, \eqref{eqn_parts_integration} reduces to
              \begin{equation}
         \begin{split}
           D(s) =\int \frac{\partial Q(s,a)}{\partial a}\Big|_{a=h(s)+\Sigma^{1/2}\zeta} \phi(\zeta)\, d\zeta.
           \end{split}
      \end{equation}
              We claim that the first term in the above integral
              \begin{equation}\label{eqn_q_prime_def}
                Q^\prime(s):= \frac{\partial Q(s,a)}{\partial a}\Big|_{a=h(s)+\Sigma^{1/2}\zeta}
              \end{equation}
              is Lipschitz with constant $L_{D}$. That being the case, one has that
              \begin{equation}
                \begin{split}
                  \left\| D(s)-D(s^\prime)\right\| &\leq \int\left\| Q^\prime(s) -Q^\prime(s^\prime)\right\|\phi(\zeta)\,d\zeta \\
                  &\leq L_{D}\left\| s -s^\prime\right\| \int\phi(\zeta) \, d\zeta =  L_{D}\left\| s -s^\prime\right\|.   
                  \end{split}
                \end{equation}
              Thus, to show that $D(s)$ is Lipschitz with constant $L_D$ it remains to be showed that $Q^\prime(s)$ is Lipschitz with the same constant. To do so, write the $Q$ function as
    \begin{equation}
Q(s,a) = r(s,a) + \sum_{t=1}^\infty \gamma^t \int r(s_t,a_t)p(s_t|a,s)\pi_h(a_t|s_t) \, ds_tda_t,
      \end{equation}
and compute its derivative with respect to $a$
    \begin{equation}\label{eqn_aux_q_prime}
      \begin{split}
        Q^\prime(s,a)=\frac{\partial Q(s,a)}{\partial a} = \frac{\partial r(s,a)}{\partial a} \\+ \sum_{t=1}^\infty \gamma^t \int r(s_t,a_t)\frac{\partial p(s_t|a,s)}{\partial a}\pi_h(a_t|s_t) \, ds_tda_t.
        \end{split}
      \end{equation}
    Since $\partial r(s,a)/\partial a$ is Lipschitz in both arguments (cf., Assumption \ref{assumption_reward_function}), to show that the derivative of $Q$ is Lipschitz, it suffices to show that $\partial p(s_t|a,s)/\partial a$ is Lipschitz as well. To that end, write $p(s_t|s,a)$ as
    \begin{equation}
      \begin{split}
        p(s_t|s,a)= \\\int p(s_1|s,a)\prod_{u=1}^{t-1}\pi_h(a_u|s_u)p(s_{u+1}|s_u,a_u) \, d\bbs_{t-1} d\bba_{t-1},
        \end{split}
    \end{equation}
    where $d \bbs_{t-1} = (ds_1, \cdots, ds_{t-1})$ and $d\bba_{t-1}=(da_1,\cdots, da_{t-1})$. Let us define
    \begin{equation}
      \begin{split}
        \Delta p_t(s^\prime,s^{\prime\prime},a^\prime,a^{\prime\prime}):=   \frac{ \partial p(s_t|s^\prime,a)}{\partial a}\big|_{a=a^\prime} -\frac{ \partial p(s_t|s^{\prime\prime},a)}{\partial a}\big|_{a=a^{\prime\prime}} .
        \end{split}
      \end{equation}
    Using the fact that $\partial p(s_1|s,a)/\partial a$ is Lipschitz with respect to $s$ and $a$ with constants $L_{ps}$ and $L_{pa}$ (cf., Assumption \ref{assumption_prob_dist}) we have that  
    \begin{equation}\label{eqn_deltap}
      \begin{split}
      \left\|\Delta p_t(s,s^{\prime},a,a^{\prime})\right\| \leq  \int \left(L_{ps}\left\|s^\prime-s^{\prime\prime}\right\|+L_{pa}\left\|a^\prime-a^{\prime\prime}\right\|\right)\\ \prod_{u=1}^{t-1}\pi_h(a_u|s_u)p(s_{u+1}|s_u,a_u) \, d\bbs_{t-1} d\bba_{t-1}.
        \end{split}
    \end{equation}
    %
    %
    %
   %     \begin{equation}
   %   \begin{split}
   %     \left\|\Delta p_t(s,s^{\prime},a,a^{\prime}) \right\| \leq \\
   %     \int \left(L_{ps}\left\|s-s^{\prime}\right\|+L_{pa}\left\|a-a^{\prime}\right\|\right)p(s_t|s_{t-1},a_{t-1}) \,ds_1
   %     \\
  %     = |\ccalS| \left(L_{ps}\left\|s-s^{\prime}\right\|+L_{pa}\left\|a-a^{\prime}\right\|\right)p(s_t|s_{t-1},a_{t-1}),
  %      \end{split}
  %  \end{equation}
        %
    %     where $|\ccalS|$ is the measure of the set $\ccalS$.
Using the previous bound and \eqref{eqn_aux_q_prime}, we can upper bound the norm of the difference $Q^\prime(s,a) - Q^\prime(s^\prime,a^\prime)$ as 
        \begin{equation}\label{eqn_aux_dif_Q}
          \begin{split}
            \left\| Q^\prime(s,a) - Q^\prime(s^\prime,a^\prime)\right\| \leq L_{r s}\left\|s-s^\prime \right\|+L_{r  a}\left\|a-a^\prime \right\|\\
            +\sum_{t=1}^\infty \gamma^t\int B_r\left\|\Delta p(s,s^\prime,a,a^\prime)\right\|\pi_h(a_t|s_t) \,d \bba_t d\bbs_t\\
            \end{split}
          \end{equation}
        Because $p(s_{u+1}|s_u,a_u)$ and $\pi_h(a_u|s_u)$ in \eqref{eqn_deltap} are density functions they integrate to one. Hence, the integral in the previous expression can be upper bounded by  
        \begin{equation}
          \begin{split}
            \int \left\|\Delta p_t(s,s^\prime,a,a^\prime)\right\|\pi_h(a_t|s_t) \,d \bba_t d\bbs_t  \\\leq \int L_{ps}\left\|s-s^{\prime}\right\|+L_{pa}\left\|a-a^{\prime}\right\| ds_1\\
            \leq |\ccalS| \left(L_{ps}\left\|s-s^{\prime}\right\|+L_{pa}\left\|a-a^{\prime}\right\|\right),
            \end{split}
        \end{equation}
        where $| \ccalS|$ is the measure of the set $\ccalS$. Then, one can further upper bound \eqref{eqn_aux_dif_Q} by
        \begin{equation}
          \begin{split}
            \left\| Q^\prime(s,a) - Q^\prime(s^\prime,a^\prime)\right\| \leq L_{r s}\left\|s-s^\prime \right\|+L_{r  a}\left\|a-a^\prime \right\|\\
            +\sum_{t=1}^\infty \gamma^t B_r|\ccalS| \left(L_{ps}\left\|s-s^{\prime}\right\|+L_{pa}\left\|a-a^{\prime}\right\|\right). 
            \end{split}
        \end{equation}
        Because the sum of the geometric yields $\gamma/(1-\gamma)$, it follows that  $Q^\prime(s,a)$ satisfies 
        \begin{equation}\label{eqn_q_prime_lipschitz}
          \left\| Q^\prime(s,a) - Q^\prime(s^\prime,a^\prime)\right\| \leq L_{Qs}\left\|s-s^\prime \right\|+L_{Qa}\left\|a-a^\prime \right\|,
        \end{equation}
        with $L_{Qs}=L_{rs}+\frac{\gamma B_r}{1-\gamma}L_{ps} |\ccalS|$ and $L_{Qa}=L_{ra}+\frac{\gamma B_r}{1-\gamma}L_{pa} |\ccalS|$. We next show that $h$ is Lipschitz. Using the reproducing property of the kernel, we can write the difference between $h(s)$ and $h(s^\prime)$ as 
    \begin{equation}
h(s)-h(s^\prime) = \left\langle h, \kappa(s,\cdot)-\kappa(s^\prime,\cdot)\right\rangle.
      \end{equation}
    Using the Cauchy-Schwartz inequality we can upper bound the previous inner product by 
    \begin{equation}
      \left\|h(s)-h(s^\prime)\right\| \leq \left\|h\right\| \sqrt{\kappa(s,s)+\kappa(s^\prime,s^\prime)-2\kappa(s,s^\prime)}
    \end{equation}
    Let us define $f(s,s^\prime) =  \sqrt{\kappa(s,s)+\kappa(s^\prime,s^\prime)-2\kappa(s,s^\prime)}$ and show that it is Lipschitz. To do so, use the following change of variables {$u=\Sigma_\ccalH^{-1/2}\left(s-s^\prime\right)$} and write $f(u)$ as follows
    \begin{equation}
      f(u) = \sqrt{2} \sqrt{1-e^{-\left\|u\right\|^2/2}}.
    \end{equation}
    Then, the gradient of $f(u)$ yields 
    \begin{equation}\label{eqn_gradient_aux_f}
      \nabla f(u) = \frac{1}{\sqrt{2}}\frac{e^{-\left\|u\right\|^2/2}u}{\sqrt{1-e^{-\left\|u\right\|^2/2}}}.
    \end{equation}
    Notice that the only point where the function might not be bounded is when $u=0$, since the limit of the denominator is zero. To show that this is not the case, observe that a second order Taylor approximation of that term yields
    \begin{equation}
      1-e^{-\left\|u\right\|^2/2} = \frac{1}{2}\left\|u\right\|^2 +o(\left\|u\right\|^2), 
    \end{equation}
    where $o(\left\|u\right\|^2)$ is a function such that $\lim_{\left\|u\right\| \to 0} o(\left\|u\right\|^2)/\left\|u\right\|^2=0$.  Thus we have that
    \begin{equation}
      \begin{split}
        \lim_{\left\|u\right\| \to 0} \left\|\nabla f(u)\right\| &= \lim_{\left\|u\right\|\to 0} \frac{e^{-\left\|u\right\|^2/2}\left\|u\right\|}{\left\|u\right\|}=1
        \end{split}
    \end{equation}
    It can be shown that the gradient of $\left\|\nabla f(u)\right\|$ is always differentiable except at $u=0$ and it never attains the value zero except for at the limit when $\left\|u\right\|\to \infty$. This means, that there are no critical points of $\left\|\nabla f(u)\right\|$ except at infinity. On the other hand it follows from \eqref{eqn_gradient_aux_f} that $\lim_{\left\|u\right\| \to \infty} \left\|\nabla f(u)\right\| =0$, so, the critical point at infinity is a minimum. Thus, the maximum norm of $\nabla f(u)$ is attained at $u=0$ and it takes the value $1$. Thus $f(u)$ is Lipschitz with constant $1$. Use the fact that $f(0)=0$ to bound
    \begin{equation}
      \begin{split}
        \left|f(u) \right| &\leq  \left\|u\right\|  = \left\|\Sigma_{\ccalH}^{-1/2}(s-s^\prime)\right\| \\ &
        \leq  \lambda_{\min}(\Sigma_\ccalH)^{-1/2}\left\|s-s^\prime\right\|.
        \end{split}
    \end{equation}
    %
    %Since $\left\|h\right\|_\ccalH \leq B_h \left(\det\left(2\pi \Sigma_{\ccalH}\right)\right)^{1/4}\lambda_{\min}(\Sigma_\ccalH)^{1/2}$,     we have that $h(s)$ is Lipschitz with constant $B_{h}$.
The latter shows that $h(s)$ is Lipschitz with constant $L_h:=\left\|h\right\|\lambda_{\min}(\Sigma_\ccalH)^{-1/2}$.  We next use this result to complete the proof that $Q^\prime(s)$ is Lipschitz.  From its definition (cf., \eqref{eqn_q_prime_def}) and the fact that its Lipschitz (cf., \eqref{eqn_q_prime_lipschitz}) we have that
    \begin{equation}
      \begin{split}
        \left\|      Q^\prime(s)-Q^\prime(s^\prime)\right\| \leq L_{Q_s}\left\|s-s^\prime\right\| + L_{Q_a}\left\|h(s)-h(s^\prime)\right\|.
        \end{split}
    \end{equation}
    Because $h(s)$ is Lipschitz it follows that 
        \begin{equation}
      \begin{split}
        \left\| Q^\prime(s)-Q^\prime(s^\prime)\right\| \leq (L_{Q_s}+L_{Q_a}L_h)\left\|s-s^\prime\right\|.
        \end{split}
        \end{equation}
     %   &
        This completes the proof of the first claim of the proposition. To show that $D(s)$ is bounded consider its norm. Using its expression in \eqref{eqn_D_change_variable} it is possible to uper bound it as
        \begin{equation}
\left\|D(s)\right\| \leq \mathbb{E}\left\| Q\left(s,h(s)+\Sigma^{1/2}\zeta\right)\zeta\right\|.
        \end{equation}
        Since $|Q(s,a)|<B_r/(1-\gamma)$ (cf., Lemma 3 \cite{paternain2018stochastic}) it follows that
        \begin{equation}
          \begin{split}
            \left\|D(s)\right\| &\leq \frac{B_r}{1-\gamma}\mathbb{E} \left\|\zeta\right\| = \frac{\sqrt{2}B_r}{1-\gamma}\frac{\Gamma\left(\frac{p+1}{2}\right)}{\Gamma\left(\frac{p}{2}\right)},            \end{split}
                \end{equation}
where $\Gamma$ is the Gamma function. 
  \end{proof}
To complete the proof of Lemma \ref{lemma_lipshitz} observe that we can write the difference
\begin{equation}
  \begin{split}
    \left\|D(s)\rho_{s_0}(s) - D(s^\prime)\rho_{s_0}(s^\prime)\right\| \leq \left\|D(s)\rho_{s_0}(s)-D(s)\rho_{s_0}(s^\prime)\right\|\\
    +\left\|D(s)\rho_{s_0}(s^\prime)-D(s^\prime)\rho_{s_0}(s^\prime)\right\|.
    \end{split}
\end{equation}
Using the Lipschitz continuity and the boundedness of both $D(s)$ and $\rho_{s_0}(s)$ it follows that\begin{equation}
  \begin{split}
    \left\|D(s)\rho_{s_0}(s) - D(s^\prime)\rho_{s_0}(s^\prime)\right\| \leq B_DL_p\left\|s-s^\prime\right\|\\
    +B_{\rho}L_D\left\|s^\prime-s\right\|.
  \end{split}
\end{equation} 
  This completes the proof of the lemma.

\bibliographystyle{ieeetr}
\bibliography{bib}

\begin{IEEEbiography}[{\includegraphics[width=1in,height=1.25in,clip,keepaspectratio]{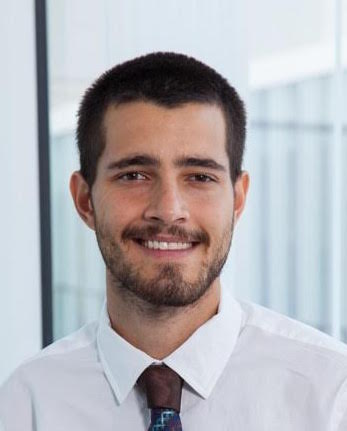}}]{Santiago Paternain} %[{\includegraphics[width=1in,height=1.25in,clip,keepaspectratio]{filename}}]
  received the B.Sc. degree in electrical engineering from Universidad de la República Oriental del Uruguay, Montevideo, Uruguay in 2012, the M.Sc. in Statistics from the Wharton School in 2018 and the Ph.D. in Electrical and Systems Engineering from the Department of Electrical and Systems Engineering, the University of Pennsylvania in 2018. He is currently an Assistant Professor in the Department of Electrical Computer and Systems Engineering at the Rensselear Polytechnic Institute. Prior to joining Rensselear, Dr. Paternain was a postdoctoral Researcher at the University of Pennsylvania. His research interests include optimization and control of dynamical systems. Dr. Paternain was the recipient of the 2017 CDC Best Student Paper Award and the 2019 Joseph and Rosaline Wolfe Best Doctoral Dissertation Award from the Electrical and Systems Engineering Department at the University of Pennsylvania.
\end{IEEEbiography}
\vspace{-1.5cm}
\begin{IEEEbiography}[{\includegraphics[width=1in,height=1.25in,clip,keepaspectratio]{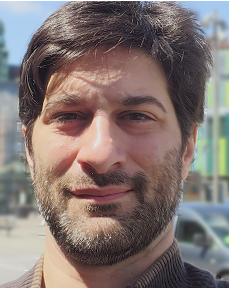}}]{Juan Andr\'es Bazerque} received the B.Sc. degree in electrical engineering from Universidad de la Rep\'ublica (UdelaR), Montevideo, Uruguay, in 2003, and the M.Sc. and Ph.D. degrees from the Department of Electrical and Computer Engineering, University of Minnesota (UofM), Minneapolis, in 2010 and 1013 respectively.
 Since 2015 he is an Assistant Professor with the  Department of Electrical Engineering  at UdelaR. His current research interests include  stochastic optimization and  networked systems, focusing on reinforcement learning, graph signal processing, and power systems optimization and control. 
Dr. Bazerque is the recipient of the UofM's Master Thesis Award 2009-2010, and co-reciepient of the best paper award at the 2nd International Conference on Cognitive Radio Oriented Wireless Networks and Communication 2007.   
 \end{IEEEbiography}
%
%\begin{IEEEbiography}[{\includegraphics[width=1in,height=1.25in,clip,keepaspectratio]{}}]{Austin Small} received the B.S. degree in electrical engineering and computer science from the University of Pennsylvania, Philadelphia, PA, USA, in 2018.  Since 2019 he is a Machine Learning engineer at Google.  His current research interests include machine learning for distributed systems, robotics, and applications at the intersection of machine learning and biomedical research.
%\end{IEEEbiography}
%
\begin{IEEEbiography}[{\includegraphics[width=1in,height=1.25in,clip,keepaspectratio]{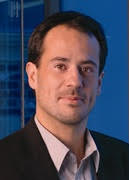}}]{Alejandro Ribeiro}  received the B.Sc. degree in electrical engineering from the Universidad de la Rep\'ublica Oriental del Uruguay, Montevideo, in 1998 and the M.Sc. and Ph.D. degree in electrical engineering from the Department of Electrical and Computer Engineering, the University of Minnesota, Minneapolis in 2005 and 2007. From 1998 to 2003, he was a member of the technical staff at Bellsouth Montevideo. After his M.Sc. and Ph.D studies, in 2008 he joined the University of Pennsylvania (Penn), Philadelphia, where he is currently the Rosenbluth Associate Professor at the Department of Electrical and Systems Engineering. His research interests are in the applications of statistical signal processing to the study of networks and networked phenomena. His focus is on structured representations of networked data structures, graph signal processing, network optimization, robot teams, and networked control. Dr. Ribeiro received the 2014 O. Hugo Schuck best paper award, and paper awards at the 2016 SSP Workshop, 2016 SAM Workshop, 2015 Asilomar SSC Conference, ACC 2013, ICASSP 2006, and ICASSP 2005. His teaching has been recognized with the 2017 Lindback award for distinguished teaching and the 2012 S. Reid Warren, Jr. Award presented by Penn's undergraduate student body for outstanding teaching. Dr. Ribeiro is a Fulbright scholar class of 2003 and a Penn Fellow class of 2015.
\end{IEEEbiography}

\end{document}